\newtheorem{example}{Example}
\newtheorem{theorem}{Theorem}
\newtheorem{lemma}{Lemma}
\newtheorem{definition}{Definition}
\newtheorem{claim}{Claim}
\newtheorem{proposition}{Proposition}
\newtheorem{corollary}{Corollary}
 \theoremstyle{definition}
\newtheorem{remark}{Remark}
\newcommand*{\ditto}{---\textquotedbl---} % available in T1 encoding
\newcommand{\Oh}{\mathcal{O}}
\newcommand{\X}{\mathcal{X}}
\newcommand{\N}{\mathbb{N}}
\newcommand{\LL}{\mathcal{L}}
\newcommand{\K}{\mathcal{K}}
\newcommand{\ind}{\boldsymbol{1}}
\DeclarePairedDelimiter\floor{\lfloor}{\rfloor}
\DeclareMathOperator*{\argmax}{arg\,max}
\DeclareMathOperator*{\argmin}{arg\,min}
\DeclareMathOperator*{\argsup}{arg\,sup}
\title{Gaussian Process Bandits with Adaptive Discretization}
\date{}
\author{Shubhanshu Shekhar\thanks{shshekha@eng.ucsd.edu}   \hspace{0.1em} and Tara Javidi\thanks{tjavidi@eng.ucsd.edu}}
\affil{Department of Electrical and Computer Engineering,\\ University of California, San Diego}
\begin{document}
\maketitle

\begin{abstract}
In this paper, the problem of maximizing a black-box function $f:\X \to \mathbb{R}$ is studied in the Bayesian framework with a Gaussian Process (GP) prior. In particular, a new algorithm for this problem is proposed, and high probability bounds on its simple and cumulative regret are established. The query point selection rule in most existing methods involves an exhaustive search over an increasingly fine sequence of uniform discretizations of $\X$. The proposed algorithm, in contrast, adaptively refines $\X$ which leads to a lower computational complexity, particularly when  $\X$ is a subset of a high dimensional Euclidean space. In addition to the computational gains, sufficient conditions are identified under which the regret bounds of the new algorithm improve upon the known results. 
Finally an extension of the algorithm to the case of contextual bandits is proposed, and high probability bounds on the contextual regret are presented. 

\end{abstract}

%-----------------
\section{Introduction}
\label{section:introduction}
We consider the problem of maximizing a function $f:\X \to \mathbb{R}$ from its noisy observations of the form 
\begin{equation}
\label{eq:observation_model}
y_t = f(x_t) + \eta_t,  \ \ \ t=1,2,\ldots,n,
\end{equation}
where $\eta_t$ is the observation noise at time $t$. 
We work in the Bayesian setting, assuming  that the function $f$ is a sample from a zero mean Gaussian Process (GP) indexed by the  space $\X$, and $\eta_t$ for $t\geq 1$ are i.i.d. $N(0,\sigma^2)$ Gaussian random variables. We further assume that the function $f$ is expensive to evaluate, and we are allocated a budget of $n$ function evaluations.

% Assuming that the function $f$ is a sample from a zero mean Gaussian Process $GP(0,K)$ \tj{redundant? delete?}{indexed by the search space $\X$} and $\eta_t,  t=1,\ldots,n,$ are iid $N(0,\sigma^2)$ Gaussian random variables, 
% the goal is to design a query point selection strategy which reliably learn about a true maximizer $x^*$ of $f$. 
This problem can be thought of as an extension of the Multi-armed bandit (MAB) problem to the case of infinite (possibly uncountable) arms indexed by the set $\X$, and is referred to as the GP bandits problem \citep{srinivas2012information}.
The goal is to design a strategy of sequentially selecting query points $x_t \in \X$ based on the past observations $\{(x_i,y_i) ; 1\leq i \leq t-1\}$ and the prior on $f$. 
As in the case of MAB with finitely many arms,  the performance of any query point selection strategy is usually measured by the cumulative regret $\mathcal{R}_n$, which forces the agent to address the \emph{exploration-exploitation}
trade-off:
\begin{equation}
\label{eq:cumulative_regret}
\mathcal{R}_n = \sum_{t=1}^nf(x^*) - f(x_t).
\end{equation}
An alternative measure of performance is the \emph{simple regret} $\mathcal{S}_n$ which is used in the Bayesian Optimization (BO) or the pure exploration problem:
\begin{equation}
 \label{eq:simple_regret}
 \mathcal{S}_n = f(x^*) - f(x_n).
\end{equation}
% Here $x(n)$ is the point recommended by the algorithm at the end of $n$ rounds. 

%Clearly, we must impose some regularity assumptions on the unknown function $f$ for this problem to be tractable. In this paper, we take the Bayesian approach of assuming that the function $f$ is a sample from a zero mean Gaussian Process $GP(0,K)$ indexed by the search space $\X$. Appropriate restrictions on the covariance function $K$ then result in  the required assumptions on $f$ in a probabilistic manner. 

\subsection{Prior work}
\label{subsec:prior_work}

Optimizing a black-box function from its noisy observations is an active area of research with a large body of literature. Here, we review  existing methods which take a Bayesian approach with GP prior to this problem, and have provable guarantees on their performance. 

\cite{srinivas2012information} formulated the task of black-box function optimization as a MAB problem and proposed the GP-UCB algorithm which is a modification of the Upper Confidence Bound (UCB) strategy widely used in bandit literature.  The algorithm constructs high probability UCBs on the function values using the GP posterior and selects the evaluation points by maximizing the UCB over $\X$. 
For finite search spaces $\X$ they showed that the GP-UCB algorithm admits a high probability upper bound on the cumulative regret of the form:\begin{equation}
\label{eq:information_type}
\mathcal{R}_n \leq \mathcal{O}(\sqrt{n\log(n)\gamma_n}),
\end{equation} where $\gamma_n$ is the \emph{maximum information gain} with $n$ evaluations. We will refer to cumulative regret bounds of this form as \emph{information-type} regret bounds in this paper. In addition, to make the dependence on $n$ explicit, \cite{srinivas2012information} further derived bounds on the term $\gamma_n$ for some commonly used kernels. Finally, they presented an extension of the GP-UCB algorithm to the case of continuous $\X$ by applying it on a sequence of increasingly fine uniform discretizations of $\X$.

Follow up works to \cite{srinivas2012information} have extended the GP-UCB algorithm in several ways. 
\cite{contal2016stochastic} proposed a method of constructing a sequence of uniform discretizations with tight control over the approximation error, which allowed the extension of the GP-UCB algorithm to arbitrary compact metric spaces $\X$. 
\cite{desautels2014parallelizing} and \cite{contal2013parallel} considered  the GP bandits problem with the additional assumption that the evaluations can be performed in parallel. 
\cite{desautels2014parallelizing} proposed the GP-BUCB algorithm which selects the points in a batch sequentially by maximizing a variant of the UCB, which is computed by keeping the mean function fixed and only updating the posterior variance. \cite{contal2013parallel} proposed the GP-UCB-PE which uses the UCB function for selecting the first point of a batch, and then proceeds in a greedy manner selecting the remaining points by maximizing the posterior variance. 
\cite{krause2011contextual} proposed and analyzed the CGP-UCB algorithm for the contextual GP bandits problem, where the mean reward function corresponding to context-action pairs is modeled as a sample from a GP on the context-action product space.   
\cite{kandasamy2016multi} considered a multi-fidelity version of the GP bandits problem in which they assumed the availability of a sequence of approximations of the true function $f$ with increasing accuracies which were cheaper to evaluate. They proposed an extension of GP-UCB called the MF-GP-UCB and derived information-type bounds on its cumulative regret.

\cite{wang2016estimation} proposed the GP-EST algorithm which looks at the optimization problem through the lens of estimation. In particular, the algorithm constructs an estimate of the maximum function value $f(x^*)$, and then selects a point for evaluation which has the largest probability of  attaining this value. 
\cite{russo2014thompson} analyzed the performance of the Thompson Sampling algorithm to a large class of problems, including the GP bandits problem. 
Thompson Sampling is a randomized strategy in which query points are sampled according to the posterior distribution on $x^*$. Since computing the posterior on $x^*$ may be complicated, in practice, the query points are selected in the following two step procedure: first, a sample $\tilde{f}_t$ of the unknown function $f$ is generated, and then the query point $x_t$ is chosen by maximizing $\tilde{f}_t$ over $\X$. For the case of continuous $\X$, the function samples are generated over uniform discretizations $\X_t$ of $\X$. By observing a relation between the expected regret of Thompson Sampling and UCB strategies, \cite{russo2014thompson} obtained information-type bounds on the expected cumulative regret of the Thompson Sampling algorithm for GP bandits. 

As observed in \citep{bubeck2011pure}, bounding the cumulative regret automatically gives us a bound on the expected simple regret by employing a randomized point recommendation strategy.
Additionally, for the pure exploration setting, several algorithms specifically geared towards minimizing $\mathcal{S}_n$, such as \emph{Expected Improvement (GP-EI)}, \emph{Probability of Improvement(GP-PI)}, \emph{Entropy Search} and \emph{Bayesian Multi-Scale Optimistic Optimization
(BaMSOO)}   have been proposed (see \citep{shahriari2016survey} for a recent survey). \cite{bogunovic2016truncated} considered the  BO and Level Set Estimation problems in a unified manner and proposed the \emph{Truncated Variance Reduction (TRUVAR)} algorithm which selects evaluation points greedily to obtain the largest reduction in the sum of truncated variances of the potential maximizers. The performance of all these algorithms have been empirically studied over various synthetic as well as real-world datasets. Furthermore,  theoretical guarantees are also known for GP-EI \citep{bull2011convergence} and BaMSOO\citep{wang2014bamsoo} with noiseless observations, and for TRUVAR \citep{bogunovic2016truncated} with noisy observations and non-uniform cost of evaluations. 

% The performance of these algorithms have been empirically studied under noisy as well as noiseless observations. However, theoretical guarantees are only known for GP-EI~\citep{bull2011convergence} and BaMSOO~\citep{wang2014bamsoo} in the restricted setting of noiseless observations. 

All the algorithms above, with the exception of BaMSOO, require solving an \emph{auxiliary optimization} problem in each round $t$ for selecting the query point $x_t$. The objective function of this auxiliary optimization problem is usually non-convex and multi-modal and hence requires  an exhaustive search over an increasingly fine sequence of uniform discretizations  to guarantee that a close approximation of the true optimum is found \citep{srinivas2012information,contal2016stochastic}.
The size of these uniform discretizations increases exponentially with the dimension of $\X$. This is because these discretizations are chosen off-line and do not depend on the function evaluations made up to round $t$.
% In practice, the exhaustive search may be infeasible, and several approximation methods are employed, which however lead to the violation of conditions required for the theoretical results. 
In contrast, BaMSOO adaptively constructs  discretizations  by locally refining the regions of $\X$ in which $f$ is more likely to take higher values based on the observations. As a result, the  size of the discretizations under BaMSOO are independent of the dimension of $\X$ which leads to significantly lower computational costs when $\X$ is  high dimensional. Our work is strongly motivated by this aspect of BaMSOO to provide the first algorithm for GP bandits  with noisy observations whose computational complexity remains independent of the dimension of $\X$. 
% All the algorithms mentioned above, with the exception of BaMSOO, select their query point $x_t$ by solving an \emph{auxiliary optimization} problem in every round $t$.
% Solving this optimization problem for high dimensional $\X$ can be computationally infeasible. In practice, approximation techniques are employed~\cite{shahriari2016survey}, which lead to violation of conditions required for the theoretical results. 

\subsection{Our contributions}
\label{subsec:our_contributions}
In this paper, we address two issues with existing approaches to the GP bandits problem:
\begin{enumerate}
\item As discussed above, all the existing algorithms for GP bandits require solving an auxiliary optimization problem over the entire search space 
for selecting a query point which may be computationally infeasible, and thus practical implementations resort to various approximation techniques which do not come with theoretical guarantees. 

\item Furthermore, by constructing specific Gaussian Processes we show that the information-type regret bounds can be too pessimistic, thus motivating the need for designing algorithms that admit alternative analysis techniques. 

\end{enumerate}
 To tackle these two problems, we design algorithms for GP bandits which utilize ideas from existing works  in the Lipschitz function optimization literature, such as \citep{bubeck2011x,munos2011doo,munos2014bandits, kleinberg2013bandits}. 
More specifically, our main contributions are as follows:
\begin{itemize}

\item We first present an algorithm for GP bandits which employs a tree of partitions of the search space $\X$ to adaptively refine it based on observations. We show that because of the adaptive discretization, when $\X \subset 
\mathbb{R}^D$ and $D$ is large, our algorithm has significantly less computational complexity than algorithms requiring auxiliary optimization. 

\item We obtain high probability bounds on the cumulative regret of our algorithm which are always as good as, and in some cases strictly better than, the existing regret bounds. In particular, we obtain the first explicit sublinear
regret bounds for the GP with exponential kernel (Ornstein-Uhlenbeck process) and also identify sufficient conditions under which our bounds improve upon the current ones for Mat\'ern family of kernels.

\item We also derive high probability bounds on the simple regret for our algorithm. To the best of our knowledge, BaMSOO~\citep{wang2014bamsoo} is the only  adaptive\footnote{we use the term \emph{adaptive} to
refer to algorithms which adaptively discretize the search space $\X$ based on earlier observations } algorithm for the black-box optimization problem in the Bayesian setting, for which 
theoretical guarantees on simple regret are known. Our algorithm matches BaMSOO's performance  with the additional advantages that it requires fewer assumptions on the covariance functions and can work with noisy observations. 
% Furthermore, when applied to the noiseless setting our algorithm improves upon BaMSOO and exhibits an exponentially decaying $\mathcal{S}_n$ for a   class of Gaussian Processes that are described in 
% Appendix~\ref{appendix:sufficient_condition}. 

\item We also study two extensions of our algorithm. First, we present a Bayesian Zooming algorithm based on \citep{kleinberg2013bandits,slivkins2014contextual} and obtain theoretical guarantees on its regret performance. 
This algorithm assumes a \emph{covering oracle} access to the metric space $\X$ instead of requiring a hierarchical tree of partitions of $\X$. We then extend our algorithm for GP bandits to the contextual GP bandits and 
obtain bounds on the contextual regret. 

\item Finally, our algorithms and the theoretical bounds rely on a set of technical results about Gaussian Process which may be of independent interest. We provide these results and discuss their implications in Section~\ref{section:technical_results}.

\end{itemize}

\subsection{Toy examples}
\label{subsec:toy_examples}

As mentioned earlier, our cumulative regret bounds for Mat\'ern kernels improve upon the known information type bounds for GP bandits. In this section, we attempt to provide some intuition for this result. 
In particular, we construct two toy examples which serve to highlight a potential drawback of the information type regret bounds for GP bandit problems shown in (\ref{eq:information_type}).

The information-type regret bounds (\ref{eq:information_type}) depend on the \emph{maximum information gain} $\gamma_n$ which is defined as:
\begin{equation}
\label{eq:gamma_n}
\gamma_n = \sup_{x[1:n] \in  \X^n}I(f;y_{x[1:n]}), 
\end{equation}
Here $I(f;y_{x[1:n]})$ is the mutual information between the unknown function $f$ and vector of observations $y_{x[1:n]}$ corresponding to the $n$ query points $x[1:n]$. 
This term depends on the covariance function\footnote{we will use the terms \emph{covariance functions} and
\emph{kernels} interchangeably} of the Gaussian Process (GP),  and upper bounds on $\gamma_n$ for many commonly used GPs are given in \citep{srinivas2012information}. 
We note that since our aim is to gather information about a maximizer $x^*$ of $f$, and not necessarily 
about the behavior of $f$ over the entire space $\X$, information-type regret bounds can  be quite loose. We present two examples which have been specifically constructed to illustrate the scenarios where the regret bounds implied by (\ref{eq:information_type}) are very pessimistic. Both  examples utilize the fact that the maximum information gain ($\gamma_n$) can be large if the Gaussian Process has many independent components, even when the maximizer may be simple to learn. 

For our first example, we  construct a GP whose samples have simple structure around the maximum despite the highly complex structure away from the maximizer. More specifically, we begin by  dividing the interval $[0,1]$ into three equal subintervals. Over the second and third intervals, the GP sample varies smoothly as scaled and shifted versions of a smooth function $\varphi(\cdot)$, modulated by a Standard Normal random variable $X_1$. The first subinterval is further divided into three parts, and this process continues infinitely.

\begin{example}
 \label{ex:toy_example1}

Suppose $\X=[0,1]$ and let us  define a GP = $\{f(x)| x\in \X \}$ as follows:
\begin{equation}
\label{eq:toy_example1}
f(x) = \sum_{i=1}^{\infty}a_iX_i\bigg(\varphi\bigg(\frac{x}{b_i}-1\bigg) -\varphi\bigg(\frac{x}{b_i}-2\bigg)\bigg),
\end{equation}
where $(a_i)_{i\geq 1}$ is a non-increasing positive sequence, $b_i = 3^{-i}$ for $i\geq 1$, $\varphi:[0,1]\to [0,1]$ is a continuous unimodal function with $\varphi(0)=\varphi(1)=0$ and $\varphi(0.5)=1$, and $(X_i)_{i=1}^{\infty}$ are a 
sequence of independent Standard Normal random variables.

For this GP, we can claim the following (details in Appendix-\ref{appendix:toy_example1}):
\begin{itemize}
\item For the choice of $a_i$ described in Appendix~\ref{appendix:toy_example1}, we have $\gamma_n = \Omega\big(\frac{n\sigma^2}{\log(n)}\big)$, which means that the information-type bound (\ref{eq:information_type}) on the cumulative regret is linear in $n$. 

\item On the other hand, if $a_1 >> a_i$ for $i \geq 2$, then the true maximizer $x^* \in \{ 1/2,5/6 \}$ with high probability, and it can be identified with just one function evaluation implying a constant cumulative regret, $\mathcal{R}_n \leq \Oh(1)$. 
\end{itemize}

\end{example}

For our second example, we construct a GP in which   the search space is partitioned at different scales,  and statistically equivalent components are assigned to the sets of a given partition. This process  is repeated with 
increasingly finer partitions, and we show that for certain choice of parameters, each observation of the GP sample results in diminishing  the region of uncertainty associated with $x^*$ by a constant factor. However, the 
information-type bound again is dominated by the information obtained from the large number of independent components of the GP and gives a linear  upper bound on the cumulative regret.

\begin{example}
 \label{ex:toy_example2}

We again take $\X = [0,1]$ and let $\varphi_1$ denote the following function 
\[
\varphi_1(x) = 
\begin{cases}
\varphi(3x) & \text{if $x \in [0,1/3)$}\\
\varphi(3x-1) & \text{if $x \in [1/3,2/3)$}\\
-\varphi(3x-2) & \text{if $x \in [2/3,1]$  }
\end{cases}
\]
where $\varphi$ is the function  used in Example~\ref{ex:toy_example1}.
Let us now define a $GP = \{ f(x) | x \in \X \}$ recursively as follows:
\begin{equation}
\begin{aligned}
f_i(x) &= a_iX_i\varphi_1(x) + f_{i+1}(3x) + f_{i+1}(3(x-2/3)) \hspace{1em} \text{for } i\geq 2\\
f(x) &= a_1X_1 \varphi_1(x) + f_2(3x) + f_2(3(x-2/3)). 
\end{aligned}
\end{equation}
As before $(a_i)_{i\geq 1}$ is a decreasing sequence of positive real numbers, and $(X_i)_{i\geq 1}$ are i.i.d. Standard Normal random variables. For this example, we can claim the following:

\begin{itemize}
\item If the   noise variance $\sigma^2$ is small enough,  we have $\gamma_n \geq \Omega\big(n \big)$ which implies a linear in $n$ information-type bound on cumulative regret. 

\item With the choice of parameters $(a_i)_{i\geq 1}$ described in Appendix~\ref{appendix:toy_example2}, we can select the evaluation points in such a way  that with high probability after every observation, the size of the region containing $x^*$ shrinks by a factor of $3$, which in turn implies that the cumulative regret satisfies $\mathcal{R}_n \leq \mathcal{O}(\log n)$. 
\end{itemize}
\end{example}

Both our examples have been specifically crafted to highlight scenarios in which the information type upper bounds given in (\ref{eq:information_type}) may not reflect the actual performance of the algorithms due to its 
dependence on the term $\gamma_n$. In Section~\ref{subsec:improved_bounds_for_matern} we further strengthen this observation by showing that the information-type regret bounds are loose for a practically relevant class of Gaussian Processes. 

\begin{figure}%
\centering
\subfigure[Example~\ref{ex:toy_example1}]{%
\label{fig:first}%
\includegraphics[height=2in]{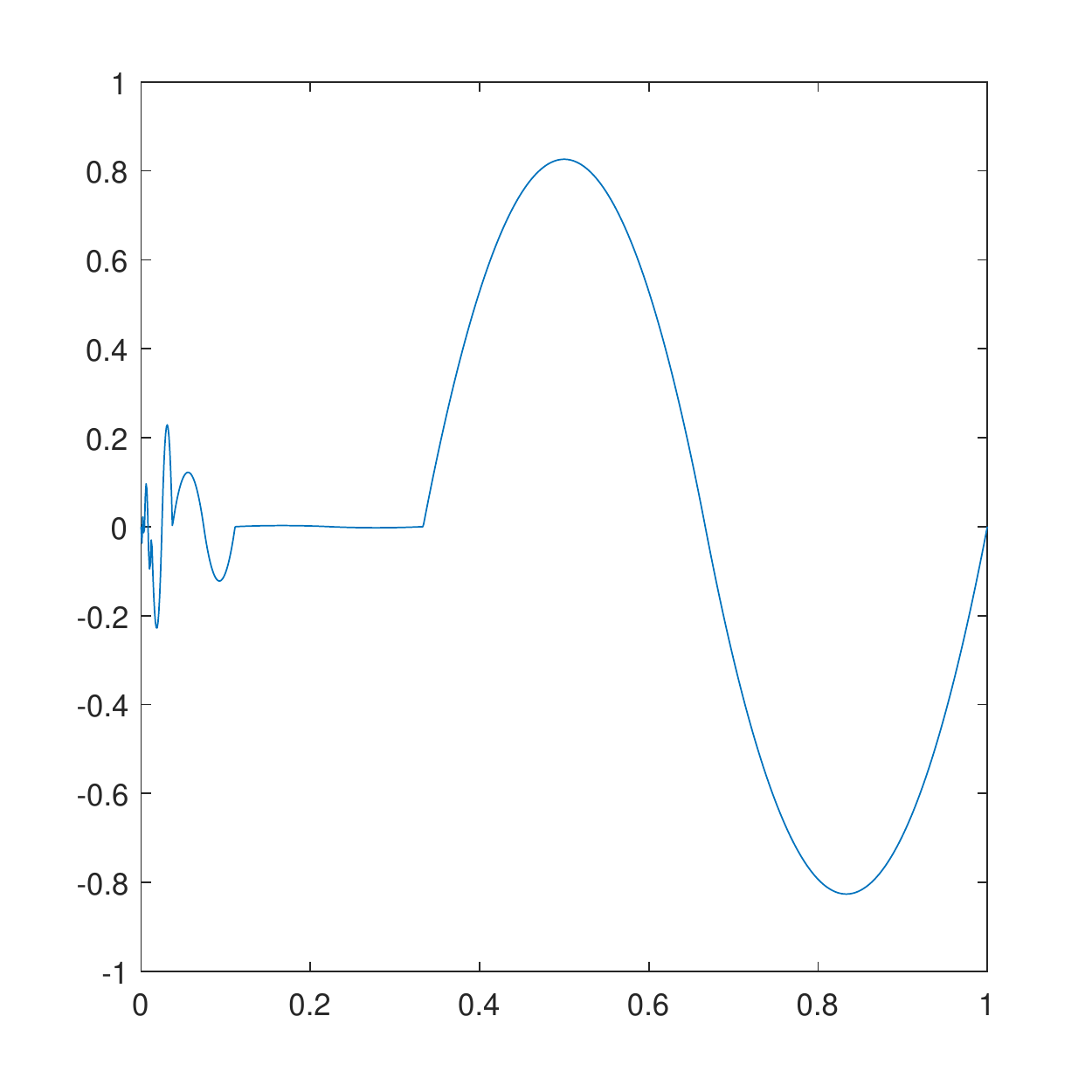}}%
\qquad
\subfigure[Example~\ref{ex:toy_example2}]{%
\label{fig:second}%
\includegraphics[height=2in]{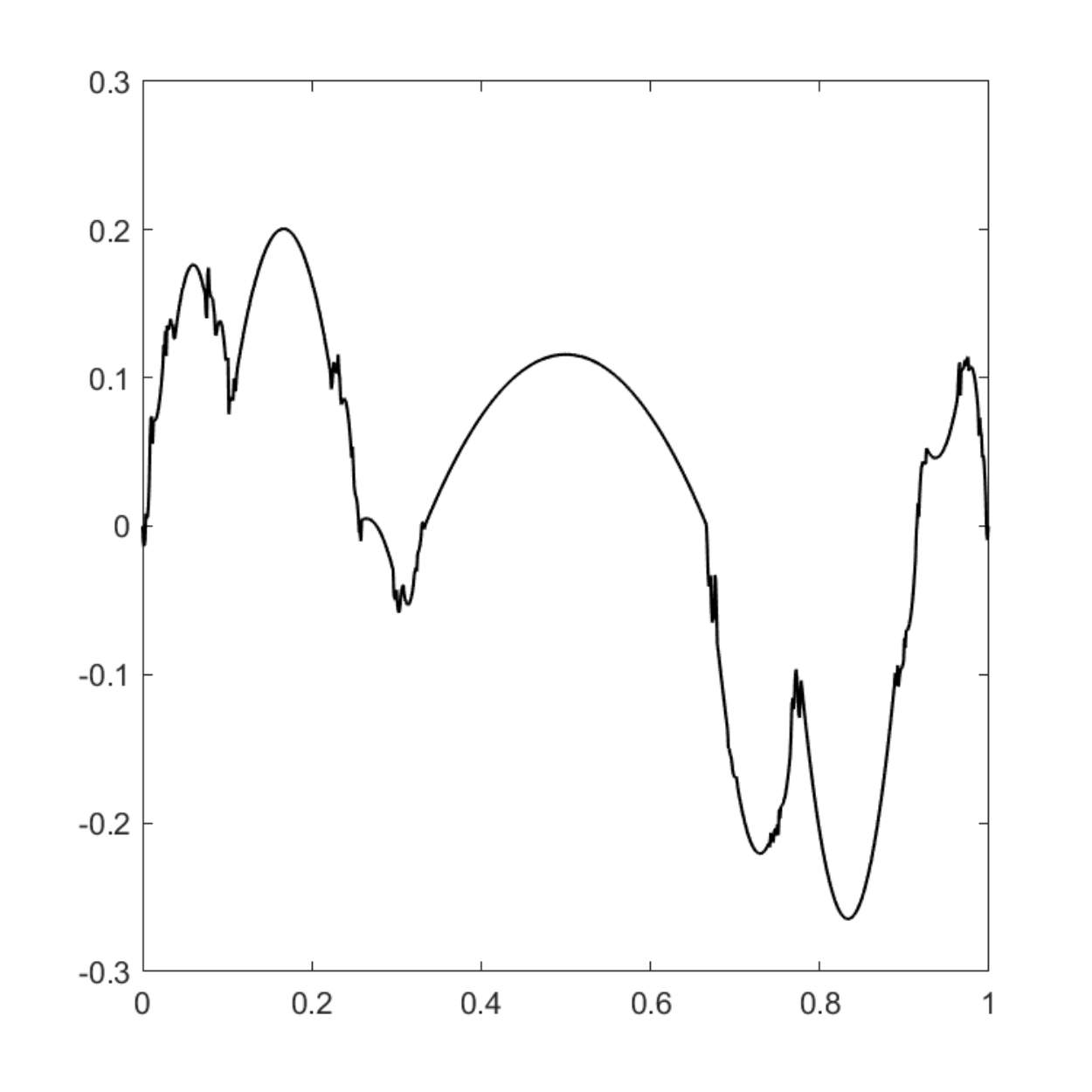}}%
\caption{ An instance of the sample paths of the two toy Gaussian Processes. }
\end{figure}

The rest of the paper is organized as follows: In Section~\ref{section:preliminaries} we introduce the required definitions and present some background for the problem. We then describe our algorithm for GP bandits and analyze its 
regret in Section~\ref{section:main_algorithm}. We discuss the behavior of our algorithm in some specific problem instances in Section~\ref{section:discussion}. In Section~\ref{section:extensions} we study two extensions of our 
approach and analyze their performance. Finally, Section~\ref{section:technical_results} contains some technical results which were used in designing our algorithms.

\section{Preliminaries}
\label{section:preliminaries}

In this section we recall some definitions required for stating the results, and fix the notations used. 
	
\begin{definition}A Gaussian Process is a collection of random variables $\{ f(x); x \in \X\}$ which satisfy the property that $(f(x_1),f(x_2),\ldots,f(x_m))$ is a jointly Gaussian random variable for all $\{x_1,x_2,\ldots,x_m\} \subset \X$ and $m \in \mathbb{N}$. 
A Gaussian Process is completely specified by its mean function $\mu(x) = \mathbb{E}[f(x)]$ and its covariance function $K(x_1,x_2) = \mathbb{E}[(f(x_1)-\mu(x_1))(f(x_2)-\mu(x_2))]$. 
\end{definition}
For a comprehensive discussion about Gaussian Processes and their applications in machine learning, see \citep{rasmussen2006gaussian}.

\begin{remark}
	Any zero mean Gaussian Process with covariance function $K$ induces a metric $d$ on its index set $\X$, defined as 
	\begin{align}
    \label{eq:gp_metric}
	d(x_1,x_2) &= \mathbb{E}[(f(x_1)-f(x_2))^2]^{1/2} \\ &= \big( K(x_1,x_1) + K(x_2,x_2) - 2K(x_1,x_2) \big)^{1/2}.
	\end{align}
	which gives us the following useful tail bound for any $x_1,x_2 \in \X$ and $a \geq 0$:
	\begin{equation}
	\label{eq:gaussian_tail}
	Pr( |f(x_1)-f(x_2)|\geq a ) \leq 2\exp\bigg(-\frac{a^2}{2d(x_1,x_2)}\bigg).
	\end{equation}
\end{remark}

Next, we introduce some properties of  any metric space $(X,l)$ which will be used later on.
	\begin{definition}
		\label{def:NM}
		Suppose $\X$ is a non-empty set and $l$ is a metric on $\X$. Then we have the following:
		\begin{itemize}
			\item A subset $\X_1$ of $\X$ is called an \underline{$r$-covering} set of $\X$ if for any $x \in \X$, we have $l(x,\X_1) \leq r$ where $l(x,\X_1) \coloneqq \inf \{l(x,y): y \in \X_1\}$. The cardinality of the smallest such $\X_1$ is called the $r$-covering number of $\X$ with respect to $l$, denoted by $N(\X,r,l)$. 
			\item The \underline{metric dimension} of a space $\X$ with associated metric $l$ is the smallest number $D_1$ such that we have for all $r>0$, 
		\[ N(\X,r,l) \leq Cr^{-D_1}\] for some $C>0$.
        
		\end{itemize}

	\end{definition}

For bounded subsets of $\mathbb{R}^D$ with a metric $l$, the metric dimension coincides with the usual notion of dimension \citep[page 125]{van2014probability}.
The metric dimension $D_1$  gives us a notion of dimensionality intrinsic to the metric space $(\X,l)$. We now present a function specific measure of dimensionality of $(\X,l)$.

\begin{definition}
\label{def:near_opt}Suppose $\X$ is a non-empty set, $l$ is a metric on $\X$ and $f$ is a function from $\X$ to $\mathbb{R}$. Then
	\begin{itemize}
		\item A subset $\X_2$ of $\X$ is called an $r$-separated set of $\X$ if for any $x_1,x_2 \in \X_2$ we have $l(x_1,x_2)\geq r$. The cardinality of the largest such set $\X_2$ is called the $r$-packing number of $\X$ with respect to $l$, and is denoted by $M(\X,r,l)$. 
		\item For any $r>0$ and $\zeta:\mathbb{R}^+\to \mathbb{R}^+$, consider the $\zeta(r)$ near-optimal set $\X_{\zeta(r)}\coloneqq \{ x \in \X: f(x) \geq f(x^*) - \zeta(r)  \}$ and its $r$-packing number $M(\X_{\zeta(r)},r,l)$. Then we define the ($\Delta_0,\zeta$)-\underline{near-optimality dimension} ($D^f(\Delta_0,\zeta)$) associated with $(\X,l)$ and the function $f$ as the smallest real number $m \geq 0$ such that for all $r \leq \Delta_0$, we have
        \begin{equation}
		\label{eq:near_opt}
		 M(\X_{\zeta(r)},r,l) \leq \tilde{C}r^{-m}, 
		\end{equation}
		
		for some $\tilde{C} >0$.
        
%         We define the \emph{ ($\Delta_0,\zeta$)-near-optimality dimension ($D^f(\Delta_0,\zeta)$)} associated with $(\X,l)$ and the function $f$ as the smallest real number $m \geq 0$ such that for all $r \leq \Delta_0$, the $r$-packing number of the set $\X_{\zeta(r)}= \{ x \in \X: f(x) \geq f(x^*) - \zeta(r)  \}$ for some $\Delta_0>0$ and $\zeta:\mathbb{R}^+\to \mathbb{R}^+$, is upper bounded as:
% 		\begin{equation}
% 		\label{eq:near_opt}
% 		 M(\X_{\zeta(r)},r,l) \leq \tilde{C}r^{-m} 
% 		\end{equation}
		
% 		for some $\tilde{C} >0$. 
	\end{itemize}

	\end{definition}

Our definition of the near-optimality dimension is based on similar definitions used in existing works in  literature such as \citep{bubeck2011x,munos2011doo,valko2013stochastic}. 

 \begin{remark}
We note that for any $(\X,l)$ with finite metric dimension $D_1$, by using volume arguments~\citep[Lemma-5.13]{van2014probability} we can show that $D^f(\Delta_0,\zeta) \leq D_1$. An example~\citep[Example~3]{bubeck2011x} where this inequality is strict is the following: consider $\X = [-1,1]$, $l(x_1,x_2)=|x_1-x_2|$ and $f(x) = 1- |x|^a$, and $\zeta(r) = r^b$ for some $0<b\leq a$. Then $D^f(1,\zeta) =  1-b/a \leq 1=D$, and in particular $D^f(1,\zeta) = 0$ for $a=b$.
\end{remark}

\begin{definition}
\label{def:well_behaved}
We will call a compact metric space $(\X,l)$  \underline{well-behaved} if there exists a sequence of subsets $(\X_h)_{h\geq 0}$ of $\X$ satisfying the following properties:

\begin{itemize}
\item [\textbf{P1}]Each subset $\X_h$ has $N^h$ elements for some $N>1$, i.e. $\X_h = \{x_{h,i}, 1\leq i \leq N^h\}$, and to each element $x_{h,i} \in \X_h$ is associated a \emph{cell} $X_{h,i} = \{x \in \X: l(x,x_{h,i}) \leq l(x,x_{h,j}) \text{ for all } j \neq i \}$.

\item [\textbf{P2}]For all $h\geq 0$ and $0\leq i\leq N^h$, we have 
\begin{equation}
\X_{h,i} = \cup_{j=N(i-1)+1}^{Ni} \X_{h+1,j}.
\end{equation}
The nodes $x_{h+1,j}$ for $N(i-1)+1 \leq j \leq Ni$ are called the \emph{children} of $x_{h,i}$, which in turn is referred to as the \emph{parent}. 

\item [\textbf{P3}] We assume that the cells have geometrically decaying radii, i.e., there exists $0 < \rho <1$ and $0<v_2\leq 1\leq v_1$ such that we have 
\begin{equation}
\label{ea:geometrically_decaying_raii}
B(x_{h,i},v_2\rho^h,l) \subset \X_{h,i} \subset B(x_{h,i},v_1\rho^h,l).
\end{equation}

\end{itemize}

\end{definition}

From $P1$ we can see that the cells $\{\X_{h,i}; 1 \leq i \leq N^h \}$ partition the space $\X$ for every $h \geq 0$, while $P2$ implies that we get an increasingly fine sequence of partitions with increasing $h$. Finally $P3$ imposes the condition that for any $h$, the points $x_{h,i}$ are evenly spread out in the space $\X$. The subsets $(\X_h)_{h\geq 0}$ satisfying these properties are said to form a tree of partitions~\citep{munos2014bandits,bubeck2011x}. 

\begin{remark}
We note that if $\X = [a,b]^D \subset \mathbb{R}^D$, and $l$ is any metric on $\X$, then $\X$ is well-behaved according to the above definition. The cells $\X_{h,i}$ in this case are $D$ dimensional hyper-rectangles such that  $\X_{h+1,j}$ for $1\leq j \leq N^h$ can be constructed from $\X_{h,i}$ by dividing it along its longest edge into $N$ equal parts. 
\end{remark}

\begin{table}[htbp]
\centering 
\begin{tabular}{ |c|l|l| } 
 \hline 
 Symbol	& Description	& Introduced in \\ \hline
 $f$	&	black-box function 	&	Section~\ref{section:introduction} \\
 $n$	& function evaluation budget & \ditto \\
  $\eta_t, \sigma^2$ & observation noise distributed as $N(0,\sigma^2)$ & \ditto \\
 $GP(0,K)$	& prior on $f$ with covariance function $K$	&  \ditto \\
 $\K$	& class of covariance functions considered & Section~\ref{subsubsec:assumptions_on_K}\\
 $\delta_K, \alpha,C_K,g$ & parameters associated with $K \in \K$ & \ditto \\ 
 & & \\
 $\mu_{t-1}, \sigma^2_{t-1}$ & posterior mean and variance functions & \\
 
 $\mathcal{R}_n$	& Cumulative regret & Section~\ref{section:introduction},(\ref{eq:cumulative_regret}) \\
 $\mathcal{S}_n$	& Simple regret &	Section~\ref{section:introduction},(\ref{eq:simple_regret}) \\
 $\gamma_n$			& Maximum Information gain & Section~\ref{subsec:toy_examples}, (\ref{eq:gamma_n})\\
 $\mathcal{R}_n^c$	& Contextual regret& Section~\ref{subsec:contextual_gp_bandits},	(\ref{eq:contextual_regret}) \\
 $\Delta(x)$ & $f(x^*) - f(x) $ & \\
 $\Delta^c(x_{\tau})$ & $\sup_{x^a \in \X_a}f(x_{\tau}^c,x^a) - f(x_{\tau}^c,x_{\tau}^a)$ &  Section~\ref{subsec:contextual_gp_bandits} \\

 & & \\
 $(\X,l)$ & Compact search space $\X$ with metric $l$	& \\
 $B(x,r,l)$ & $l$-ball with center $x$, and radius $r$ &  (\ref{ea:geometrically_decaying_raii}) \\
 $d$	& metric induced by $GP(0,K)$ on $\X$	& Section~\ref{section:preliminaries}, (\ref{eq:gp_metric}) \\
 $D_1$	& Metric dimension 	& \ditto,Definition~\ref{def:NM}  \\
 $N(\X,r,l)$ & Covering number & \ditto, \ditto\\
 $M(\X,r,l)$ & Packing number & \ditto, Definition~\ref{def:near_opt}\\
 $D^f(\Delta_0,\zeta)$ & $(\Delta_0,\zeta)$-near-optimality dimension & \ditto, \ditto\\
 $\tilde{D}, \tilde{D}_Z$ & instances of $D^f(\Delta_0,\zeta)$ & Remark~\ref{remark:near_optimality_dimension}, Remark~\ref{remark:zoom_near_opt}\\
 $N,v_1,v_2,\rho$	& Parameters of the tree of partitions & Section~\ref{section:preliminaries}, Definition~\ref{def:well_behaved} \\ \hline

 \multicolumn{3}{|l|}{Parameters of Algorithm~\ref{alg:gp_tree} and Algorithm~\ref{alg:contextual_tree}} \\ \hline

 $\LL_t$	& the set of leaf nodes & Section~\ref{subsec:tree_based_algo}\\
 $I_t(x_{h,i})$	& Index used for point selection in Algorithm~\ref{alg:gp_tree} & \ditto, (\ref{eq:ucb}) \\
 $\bar{U}_t(x_{h,i})$ & upper bound on $f(x_{h,i})$ & \ditto, (\ref{eq:function_upper_bound}) \\
 $p(x_{h,i})$ & parent node of $x_{h,i}$& \\
 $\beta_n$	& multiplicative factor for confidence intervals & Section~\ref{subsubsec:details_of_algo1}, Claim~\ref{claim:doo_Bn} \\
 $h_{\max}$ & maximum depth of the tree & \ditto, (\ref{eq:h_max}) \\
 $(V_h)_{h\geq 0}$ & upper bound on maximum variation of $f$ in a cell at level $h$ & \ditto, Claim~\ref{claim:doo_wh}\\
 & & \\
 $\LL_t^{rel}$ & relevant leaf nodes & Section~\ref{subsubsec:contextual_gp_algo} \\
 $\bar{x}_{h,i}^{(t)}$ & & \ditto \\
 $I_t^c$ & Index used for action selection in Algorithm~\ref{alg:contextual_tree} & \ditto, (\ref{eq:index_contextual_tree}) \\
 $\X_c$, $\X_a$ & Context space and Action space & Section~\ref{subsec:contextual_gp_bandits} \\
 & & \\ \hline
 
  \multicolumn{3}{|l|}{Parameters of Algorithm~\ref{alg:zoom} }\\ \hline 
  
 $A_t$ & Set of active points & Section~\ref{subsec:bayesian_zooming_algo}\\
 $r(x)$ & radius associated with a point $x \in A_t$ & \ditto \\
 $r_k$ & $diam(\X)2^{-k}$ & \ditto \\
 $W(r_k)$ & upper bound on variation of $f$ in $B(x,r_k,l)$ for any $x \in \X$ & Claim~\ref{claim:zoom_r_k_W_k} \\
 
 \hline
\end{tabular}
\label{table:table_of_symbols}

\caption{Table of symbols used in the paper}

\end{table}

\section{Algorithm for GP bandits}
\label{section:main_algorithm}

We begin this section by describing the general outline of all the algorithms proposed in this paper in Section~\ref{subsec:general_approach}. Then we 
introduce our tree based algorithm for GP bandits and obtain high probability bounds on its regret in Section~\ref{subsec:tree_based_algo}. 

\subsection{General approach}
\label{subsec:general_approach}

At any time $t$, we maintain a discretization (i.e., a finite subset) of $\X$, denoted by $\X_t$. To each $x \in \X_t$, we have an associated 
\emph{confidence region} denoted by $Reg_t(x)$, and an \emph{index} $Ind_t(x)$ which is a high probability upper bound on the maximum  value of the function
$f$ in $Reg_t(x)$. The index $Ind_t(x)$ depends on three quantities: (a) the actual function value at $x$, (b) the amount of uncertainty in the function 
value at $x$, and (c) the amount of variation in the function value in $Reg_t(x)$. 
We  proceed as follows:

\begin{itemize}
	\item In each round, we select a candidate point $x_t$ \emph{optimistically} by maximizing $Ind_t(x)$ over $\X_t$. 
	
	\item If the uncertainty in the function value at $x_t$ is smaller than the variation of $f$ in the confidence region, it means that we 
	must \emph{refine} our discretization in the confidence region associated with $x_t$. 
	
	\item If, on the other hand, the uncertainty in the function value at $x_t$ is larger than the variation of $f$ in the associated confidence 
	region, our algorithm evaluates the function at this point to reduce this uncertainty. 
\end{itemize}

In Section~\ref{subsec:tree_based_algo} we present an algorithm for GP bandits which uses a hierarchical partitioning scheme for locally refining the search 
space similar to \citep{munos2014bandits,bubeck2011x,wang2014bamsoo}. Alternatively, the \emph{covering oracle} based approach used by 
\citet{slivkins2014contextual,kleinberg2013bandits} can also be employed for refining the discretization, and we describe such an algorithm in Section~\ref{subsec:bayesian_zooming_algo}. We also apply this approach to design an adaptive algorithm for the Contextual GP bandits problem in Section~\ref{subsec:contextual_gp_bandits}.

	\subsection{Tree based Algorithm}
	\label{subsec:tree_based_algo}
	We now describe our  algorithm for GP bandits and derive high probability bounds on its regret. 
Our algorithm  is motivated by several tree based methods that have been  proposed for function optimization under Lipschitz-like assumptions, such as \citep{bubeck2011x,munos2011doo,munos2014bandits}. Assuming that the metric space $(\X,l)$ is \emph{well behaved}, i.e., we have a sequence of subsets $(\X_h)_{h\geq 0}$ whose associated cells form a tree of partitions of $\X$, we proceed as follows:
\begin{itemize}
\item In every round $t$, the algorithm maintains an active set of leaf nodes denoted by $\LL_t$, such that the cells of the nodes in $\LL_t$ partition $\X$. This active set is initialized to $\LL_0 = \{x_{0,1}\}$ with the associated cell $\X_{0,1} = \X$. 

\item The algorithm selects a node from $\LL_t$ by maximizing an index $I_t$. Then index $I_t(x_{h,i})$ is an upper confidence bound (UCB) on the maximum function value in cell $\X_{h,i}$ and is defined as 
\begin{equation}
\label{eq:ucb}
I_t(x_{h,i}) = \bar{U}_t(x_{h,i}) +  V_h.
\end{equation}
The term $\bar{U}_t(x_{h,i})$ in the above equation is a high probability upper bound on the function value at $x_{h,i}$ and is defined as 
\begin{equation}
\label{eq:function_upper_bound}
\bar{U}_t(x_{h,i}) = \min\big( \mu_{t-1}(x_{h,i})  + \beta_n\sigma_{t-1}(x_{h,i}), \mu_{t-1}(p(x_{h,i})) + \beta_n\sigma_{t-1}(p(x_{h,i})) + V_{h-1} \big) \end{equation}
where $p(x_{h,i})$ is the parent node of $x_{h,i}$. 
For any $h\geq 0$, the term $V_h$ is an upper bound on the maximum function variation in any cell $\X_{h,i}$ at level $h$. Thus, we see that $\bar{U}_t(x_{h,i})$ computes an upper bound on the value of $f(x_{h,i})$ in two ways and takes their minimum, while adding $V_h$ to it gives us an upper bound on the maximum function value in the cell $\X_{h,i}$.

\item Having chosen the point ($x_{h_t,i_t}$) according to the selection rule (Line-2 of Algorithm~\ref{alg:gp_tree}) we take one of the following two actions :
	\begin{itemize}
    \item \emph{Refine:} If $\beta_n\sigma_{t-1}(x_{h_t,i_t}) \leq V_{h}$, then the node $x_{h_t,i_t}$ is expanded, i.e., the $N$ children nodes $\{x_{h_t+1,j}: N(i_t-1)+1 \leq j\leq Ni_t \}$ of the node $x_{h_t,i_t}$ are added to the set of leaves, and $x_{h_t,i_t}$ is removed from it. (Lines 4-5 of Algorithm~\ref{alg:gp_tree})
    
		\item \emph{Evaluate:} Otherwise, then the function is evaluated at the point $x_{h_t,i_t}$, i.e., we observe the noisy function value $y_t = f(x_{h_t,i_t}) + \eta_t$ and update the posterior distribution of $f$. (Lines 7-9 of Algorithm~\ref{alg:gp_tree})

	\end{itemize}

\end{itemize}

The steps of the algorithm are shown as a pseudo-code in Algorithm~\ref{alg:gp_tree}. The algorithm maintains two counters, $t$ which counts the total number of
function evaluations and refinements, and $n_e$ which keeps track of the number of function evaluations. The algorithm stops after $n$ function evaluations,
and recommends a point from one of the deepest expanded cells (for minimizing $\mathcal{S}_n$).
The second condition on Line~3 of Algorithm~\ref{alg:gp_tree} is added to prevent the (unlikely) scenario in which the algorithm keeps refining indefinitely without evaluating the function.

\begin{algorithm}[ht!]
		\label{alg:gp_tree}
		\DontPrintSemicolon
		\SetAlgoLined
		\SetKwInOut{Input}{Input}
        \SetKwInOut{Output}{Output}
        \SetKwInOut{Initialize}{Initialize}
		\Input{$n>0$, ($\X_h)_{h\geq0}$, $\beta_n$,  $(V_h)_{h\geq 0}$, $h_{\max}$}
		\BlankLine
		\Initialize{$\LL_0 = \{ x_{0,1} \}$, $t=1$, $n_e=0$  }
        		\BlankLine

		\While{$n_e \leq n$}{
			choose $x_{h_t,i_t} = \argmax_{x_i \in \LL_t} I_t(x_{h,i})$  \;
			
			\uIf{ $\beta_n \sigma_{t-1}(x_{h_t,i_t}) \geq V_h$ AND $h_t\leq h_{\max}$ }{
            $\LL_{t+1} = \LL_t \setminus \{ x_{h_t,i_t}  \}$ \; 
				$\LL_{t+1} = \LL_{t+1} \cup \{ x_{h_t+1,j} | N(i-1)+1 \leq j \leq Ni  \}$ \;				
			}
			\Else{
				$y_t = f(x_{h_t,i_t}) + \eta_t$ \;
                update posterior $\mu_t(x)$ and $\sigma_t(x)$ \;
				$n_e \leftarrow n_e+1$ \;

			}	
            $t \leftarrow t+1$ \;
		}
		\Output{ $x(n)$: the deepest expanded node }	
		\caption{Tree based Algorithm for GP bandits}
	\end{algorithm}

\begin{remark}
 \label{remark:doubling}
 The parameter $\beta_n$ of Algorithm~\ref{alg:gp_tree} requires the knowledge of the horizon or the budget $n$. However, we can use the well known \emph{doubling trick}\citep[Section 2.3]{cesa2006prediction} to make our algorithm 
 \emph{anytime} without any change in the theoretical regret guarantees. The trick is to  work in phases of exponentially increasing lengths, and applying the algorithm with known horizon (equal to the duration of the phase)
 in each phase. 
\end{remark}

\subsection{Analysis of Algorithm~\ref{alg:gp_tree}}
\label{subsec:regret_analysis_gp_bandits}

In this section, we first specify the assumptions on the  covariance functions required for the theoretical analysis and then furnish the missing details of our tree based algorithm for GP bandits. Finally, we derive high probability bounds on the cumulative and simple regret for our algorithm. 

\subsubsection{Assumptions on the covariance functions}
\label{subsubsec:assumptions_on_K}
 To analyze our algorithm, we will restrict our attention to  a class of covariance functions, denoted by $\mathcal{K}$, such that for any $K \in \mathcal{K}$, we have:
	\begin{enumerate}[label=\textbf{A\arabic*}]
%		\item For any $x,y \in \X$,  $K(x,y) = K(\|x-y\|)$.
		\item For any  $x,y \in \X$, we have $d(x,y) \leq g(l(x,y))$ for some non-decreasing continuous function $g:\mathbb{R}^+\rightarrow \mathbb{R}^+$, such that ${g(0)=0}$. Recall that $l$ is assumed to be any metric on the space $\X$, and $d$ is the natural metric induced on $\X$ by the zero mean GP with covariance function $K$. 
        
        \item Moreover, we require that there exists a $\delta_K>0$ such that for all $r \leq \delta_K$, we have for constants $C_K>0$ and $0<\alpha \leq 1$ satisfying
		
		\begin{equation}
		\label{eq:assump2}
		g(r) \leq C_Kr^{\alpha}.
		\end{equation}
		
	\end{enumerate}

Assumption $A2$ informally requires that at least for small distances, points which are close in the metric $l$ are also close in $d$. These assumptions are satisfied by all the commonly used kernels such as squared exponential (SE), and the Mat\'{e}rn family of kernels. It also includes other kernels such as $K(r) = \max(0,1-r)$  and the rational quadratic kernel $K(r) = (1 + c_1r^2)^{-c_2}$ for some $c_1,c_2>0$.

	\begin{remark} We note that $\mathcal{K}$  is closed under finite addition and multiplication operations. This is an important property as in many practical applications, often more than one kernels are combined through addition or multiplication to provide more accurate models \citep[Chapter-2]{duvenaud2014kernel},\citep{rasmussen2006gaussian}. 
\end{remark}
\begin{remark}
Assumption $A2$ implies that  if $(\X,l)$ has a finite metric dimension $D_1$, then the metric space $(\X,d)$ (where $d$ is defined 
in (\ref{eq:gp_metric})) has a finite metric dimension $D_1' = D_1/\alpha$. This fact is used in  Proposition~\ref{prop1} in Section~\ref{section:technical_results}.  
	\end{remark}

\subsubsection{Details of the algorithm}
\label{subsubsec:details_of_algo1}
To complete the description of Algorithm~\ref{alg:gp_tree}, we need to specify the choice of the parameters $h_{\max}$, $\beta_n$, and $(V_h)_{h\geq 0}$. 

First we observe that for all $t$, we have $|\LL_t|\leq M(\X,v_2\rho^{h_{\max}},l)$. This follows from the assumption $P3$ in Definition~\ref{def:well_behaved}. From the definition of metric dimension we can upper bound $M(\X,v_2\rho^{h_{\max}},l)$ by $C\rho^{-D_1h_{\max}}$. As will be evident in the proof of Theorem~\ref{theorem:regret_gp_bandits}, an appropriate choice of the parameter $h_{\max}$ is:
\begin{equation}
 \label{eq:h_max}
 h_{\max} = \frac{\log n}{2\alpha \log(1/\rho)}\big(1+1/\alpha\big).
\end{equation}

\begin{claim}
		\label{claim:doo_Bn}
		With  $\beta_n  = \Oh(\sqrt{\log(n) + u})$, the following event occurs  with probability at least $1-e^{-u}$ for any $u>0$:  
		\begin{equation}
		 \label{eq:event_Omega_u5}
		\Omega_{u5}=	\{ \forall 1 \leq t \leq t_n, \forall x \in \LL_t: |f(x) - \mu_{t-1}(x)| \leq \beta_n\sigma_{t-1}(x)  \}.
		\end{equation}
		where $t_n$ is the (random) number of rounds required by the algorithm to complete $n$ function evaluations. 
	\end{claim}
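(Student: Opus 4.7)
The plan is to reduce the claim, which involves the random adaptive sets $\LL_t$ and the random stopping time $t_n$, to a standard Gaussian tail bound combined with a deterministic union bound over a fixed candidate set. First, I would observe that because Algorithm~\ref{alg:gp_tree} only ever refines nodes at depth at most $h_{\max}$, every point that ever appears in some leaf set $\LL_t$ must lie in the fixed set
\[
\X^\star \;:=\; \bigcup_{h=0}^{h_{\max}+1} \X_h,
\]
whose cardinality is at most $\sum_{h=0}^{h_{\max}+1} N^h \le 2N^{\,h_{\max}+1}$. With $h_{\max}$ chosen as in (\ref{eq:h_max}), this gives $\log|\X^\star| = \Oh(\log n)$.

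Next, I would note that the posterior pair $(\mu_{t-1},\sigma_{t-1})$ is updated only on evaluation rounds, not on refinement rounds. Hence across the execution of the algorithm there are at most $n+1$ distinct posterior states, indexed by the number $s \in \{0,1,\dots,n\}$ of evaluations completed so far. Consequently, to guarantee $\Omega_{u5}$ it is enough to establish $|f(x)-\mu_s(x)| \le \beta_n \sigma_s(x)$ simultaneously for every $x \in \X^\star$ and every $s \in \{0,1,\dots,n\}$, which is a non-random family of at most $|\X^\star|(n+1)$ events.

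For the tail bound at a single $(x,s)$, I would use the fact that the query points $x_1,\dots,x_s$ are predictable with respect to the observation filtration and the noise is i.i.d.\ $N(0,\sigma^2)$, so the conditional distribution of $f(x)$ given $y_1,\dots,y_s$ is exactly $N(\mu_s(x),\sigma_s(x)^2)$. A standard Gaussian tail inequality then yields
\[
\Pr\bigl(|f(x)-\mu_s(x)| > \beta\,\sigma_s(x)\bigr) \le 2e^{-\beta^2/2},
\]
and a union bound over the $(x,s)$ pairs requires $2 e^{-\beta_n^2/2} |\X^\star|(n+1) \le e^{-u}$. Solving gives $\beta_n^2 = 2\log\!\bigl(2|\X^\star|(n+1)\bigr) + 2u$, which is $\Oh(\log n + u)$ as claimed.

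The main subtlety to flag is the first two steps: justifying that the random, data-dependent union over $\LL_t$ can be replaced by a deterministic union over $\X^\star$, and that the random round index $t$ can be replaced by the deterministic grid of evaluation counts $s \in \{0,\dots,n\}$. Both reductions follow directly from the structure of Algorithm~\ref{alg:gp_tree}, namely the truncation of the tree at depth $h_{\max}$ and the invariance of the posterior across refinement rounds, but they are what prevent a one-line invocation of the usual GP-UCB confidence bound and are the only nontrivial ingredient beyond the standard Gaussian concentration argument.
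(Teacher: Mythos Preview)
Your argument is correct and uses the same core strategy as the paper: a Gaussian tail bound plus a union bound over a deterministic superset of the random $(t,x)$ pairs. The bookkeeping differs in two places. The paper bounds the number of rounds crudely by $t_n\le h_{\max}n$ and bounds $|\LL_t|$ via the packing number $C\rho^{-D_1 h_{\max}}$, arriving at $\beta_n^2=\Oh\big(u+\log(h_{\max}n)+D_1 h_{\max}\log(1/\rho)\big)$. You instead observe that the posterior is unchanged on refinement rounds, so only $n{+}1$ distinct posterior states need to be covered, and you bound the candidate leaf set directly by $|\X^\star|\le 2N^{h_{\max}+1}$ rather than through the metric dimension. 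Both routes yield $\Oh(\log n)$ inside the square root once $h_{\max}$ from (\ref{eq:h_max}) is substituted, so the conclusion is the same; your version is a little sharper and makes the passage from random to deterministic index sets more explicit, which is exactly the subtlety you flagged. The paper's remark immediately after its proof (tightening $|\LL_t|$ to $nNh_{\max}$ when $\X\subset\mathbb{R}^D$ and $N$ is odd) is in the same spirit as your direct count.
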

	
	\begin{proof}
	 The largest value that the random variable $t_n$ can take is $h_{\max}n$, and for any $t\leq t_n$ we have $|\LL_t| \leq C\rho^{-D_1h_{\max}}$. Based on these two observations, we can claim the following:
	 \begin{align*}
	1-Pr(\Omega_{u5}) =  & Pr( \exists t \leq t_n, \exists x_{h,i} \in \LL_t: |\mu_{t-1}(x_{h,i}-f(x_{h,i}| > \beta_n\sigma_{t-1}(x_{h,i}) \\
	  \leq & \sum_{t=1}^{t_n}
	  \sum_{x_{h,i}\in \LL_t} Pr( |\mu_{t-1}(x_{h,i}-f(x_{h,i}|> \beta_n\sigma_{t-1}(x_{h,i}) \\
	  \leq & \sum_{t=1}^{t_n} \sum_{x_{h,i}\in \LL_t} 2 e^{-\beta_n^2/2} \\
	  \leq & 2(h_{\max}n)(C\rho^{-D_1h_{\max}})e^{-\beta_n^2/2} 
	 \end{align*}
Finally, we get the required bound by selecting $\beta_n^2 = \Oh\big(u + 2\log(h_{\max}n) + D_1h_{\max}\log(1/\rho) \big)$.
	 
	\end{proof}

\begin{remark}
The calculation of $\beta_n$ above is based on the worst case assumption that $|\LL_t| = M(\X,v_2\rho^{h_{\max}},l)$. In the case of $\X \subset \mathbb{R}^D$ and for odd values of $N$, we can use a tighter bound $|\LL_t| \leq nNh_{\max}$ which gives us $\beta_n = \Oh( \sqrt{u + \log(nh_{\max})})$ which allows us to consider larger values of $h_{max}$.
\end{remark}

	Next, we obtain the expressions for the parameters $(V_h)_{h\geq 0}$ as an immediate consequence of Corollary~\ref{cor:chaining}:
	\begin{claim}
		\label{claim:doo_wh}
		Suppose the metric space $(\X,l)$ is well-behaved in the sense of Definition~\ref{def:well_behaved} with subsets $(\X_h)_{h\geq 0}$ and associated parameters $v_1,v_2$ and $\rho$.   Let us define the event $\Omega_{u6}$ as
		\begin{equation}
		\label{eq:event_Omega_u6}
		\Omega_{u6} = \{ \forall h \geq 0; \forall 1\leq i \leq N^h: \sup_{x \in B(x_{h,i},v_1\rho^h,l)}|f(x) - f(x_{h,i})| \leq V_{h} \}.
		\end{equation}
		Then for the choice of $V_h$
		\[
		V_h = 4g(v_1\rho^h)(\sqrt{2u + C_4 + h\log N + 2D_1 \log(1/g(v_1\rho^h))} + C_3)
		\]	we have $Pr(\Omega_{u6}) \geq 1-e^{-u}$ for any $u>0$. Here $C_3$ and $C_4$ are the  positive constants defined in Corollary~\ref{cor:chaining}. 
	\end{claim}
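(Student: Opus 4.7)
The plan is to invoke Corollary~\ref{cor:chaining} (a Dudley-type tail bound for the supremum of the GP over a ball) on each individual ball $B(x_{h,i},v_1\rho^h,l)$, and then union-bound over all cells at all levels.

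First, fix a level $h$ and an index $1\leq i\leq N^h$, and look at the ball $B=B(x_{h,i},v_1\rho^h,l)$. By Assumption A1, any $x\in B$ satisfies $d(x,x_{h,i})\leq g(l(x,x_{h,i}))\leq g(v_1\rho^h)$, so $B$ has radius at most $g(v_1\rho^h)$ under the GP-induced metric $d$. Moreover, Assumption A2 together with the metric dimension $D_1$ of $(\X,l)$ controls the $d$-covering numbers of $B$ by a polynomial in the radius whose exponent is proportional to $D_1$. Corollary~\ref{cor:chaining} then yields a bound of the form
\begin{equation*}
\Pr\Bigl(\sup_{x\in B}|f(x)-f(x_{h,i})| > 4g(v_1\rho^h)\bigl(C_3+\sqrt{s}\bigr)\Bigr)\leq \exp\!\bigl(-(s - C_4 - 2D_1\log(1/g(v_1\rho^h)))/2\bigr),
\end{equation*}
where the $C_4$ and $2D_1\log(1/g(v_1\rho^h))$ corrections absorb the metric-entropy integral contribution of the ball of $d$-radius $g(v_1\rho^h)$ in a space whose relevant metric-dimension factor scales like $D_1$ (per the remark following A2).

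Second, calibrate the deviation so that a union bound over $(h,i)$ gives total failure probability at most $e^{-u}$. At level $h$ there are $N^h$ centres, so we choose
\begin{equation*}
s_h \;=\; 2u + C_4 + h\log N + 2D_1\log\!\bigl(1/g(v_1\rho^h)\bigr),
\end{equation*}
which makes the bound above at $(h,i)$ equal to $e^{-(u + h\log N/2 + u/2)}\leq N^{-h}e^{-u}\cdot 2^{-h-1}$ (after a mild additive adjustment of $C_4$). Summing first over $i$ at fixed $h$ kills the $N^h$ factor, and then summing the geometric series over $h\geq 0$ leaves at most $e^{-u}$. Setting $V_h = 4g(v_1\rho^h)(\sqrt{s_h}+C_3)$ — which is exactly the $V_h$ in the statement — gives $\Pr(\Omega_{u6})\geq 1-e^{-u}$.

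The main obstacle is the bookkeeping in the second step: carefully matching the constants $C_3,C_4$, the prefactor $4$, and the $2u$, $h\log N$, $2D_1\log(1/g(v_1\rho^h))$ terms to exactly what Corollary~\ref{cor:chaining} delivers once one substitutes the ball of $d$-radius $g(v_1\rho^h)$ and uses the polynomial covering bound. A secondary nuisance is to verify monotonicity so that the diameter argument goes through uniformly in $h$ (i.e.\ $g(v_1\rho^h)$ is well-defined and eventually small enough that A2 applies, which for large $h$ is automatic and for small $h$ can be absorbed into $C_4$). Once these are verified, the claim follows as an immediate consequence of the chaining corollary.
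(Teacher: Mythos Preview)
Your proposal is correct and follows essentially the same route as the paper. The only difference is in packaging: the paper treats the claim as an immediate consequence of Corollary~\ref{cor:chaining}, which already contains \emph{both} union bounds (over the level index $h$ and over the $N^h$ centres at each level), so one merely substitutes $|\X_h|=N^h$, $b_h=g(v_1\rho^h)$, and uses the containment $B(x_{h,i},v_1\rho^h,l)\subset B(x_{h,i},g(v_1\rho^h),d)$ from Assumption~A1 to read off the stated $V_h$. Your plan instead applies the single-ball chaining bound (effectively Proposition~\ref{prop1}) and redoes both union bounds by hand; this is a faithful unpacking of how Corollary~\ref{cor:chaining} itself is proved, so the content is identical, just slightly more laborious.
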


\subsubsection{Regret Bounds}
\label{subsubsec:regret_bounds_algo1}
Before presenting the regret bounds, we first characterize the sub-optimality as well as the number of times points are evaluated by Algorithm~\ref{alg:gp_tree}.

	\begin{lemma} 
	\label{lemma:tree_lemma1}
	Under  the events $\Omega_{u5}$ (\ref{eq:event_Omega_u5}), and $\Omega_{u6}$ (\ref{eq:event_Omega_u6}), the following statements are true:
		\begin{itemize}
        
        \item If at time $t$ a point $x_{h_t,i_t}$ is evaluated by the algorithm, then the suboptimality of the selected point (denoted by $\Delta(x_{h_t,i_t}))$ can be upper bounded using $V_{h_t}$: 
       \begin{equation}
\label{eq:delta_1}
       \Delta(x_{h_t,i_t}) \coloneqq f(x^*) - f(x_{h_t,i_t}) \leq (2N+1)V_{h_t}.        
       \end{equation}
   \item Furthermore, if the evaluated point $x_{h_t,i_t}$ satisfies the condition that $h_t < h_{\max}$, then we have another bound on $\Delta(x_{h_t,i_t})$ in terms of the posterior variance: 
	\begin{equation}
\label{eq:delta_2}
	\Delta(x_{h_t,i_t}) \leq 3\beta_n\sigma_{t-1}(x_{h_t,i_t}).
	\end{equation}

            \item A point $x_{h,i}$, with $h < h_{\max}$, may be evaluated no more than $q_h$ times before it is expanded, where 
            where \[
            q_h = \frac{\sigma^2\beta_n^2}{V_h^2}.
            \]
            Furthermore for $h$ large enough so that $v_1\rho^h \leq \Delta_K$, we have 
\[
q_h \leq \frac{\sigma^2\beta_n^2}{g(v_1\rho^h)^2C_3}
\]
using the assumptions on the covariance function $K$. 			
	
		\end{itemize}
		
	\end{lemma}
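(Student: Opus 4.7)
The plan is to combine the optimistic selection rule of Algorithm~\ref{alg:gp_tree} with the two high-probability events $\Omega_{u5}$ and $\Omega_{u6}$, after first strengthening $\Omega_{u5}$ by a union bound so that the posterior concentration inequality holds at \emph{every} node ever in the tree rather than only at the current leaves; this merely inflates $\beta_n$ by a logarithmic factor since the tree has at most $O(\rho^{-D_1 h_{\max}})$ nodes. On this enlarged event together with $\Omega_{u6}$, I would first verify that the index $I_t$ is a valid upper confidence bound on $f(x^*)$. For the unique leaf $x'\in\LL_t$ whose cell contains $x^*$, the enlarged $\Omega_{u5}$ applied to both $x'$ and its parent, combined with $\Omega_{u6}$, makes both arguments of the $\min$ in (\ref{eq:function_upper_bound}) at least $f(x')$, so $I_t(x') \geq f(x') + V_{h'} \geq f(x^*)$ by another use of $\Omega_{u6}$. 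Optimism at $x_{h_t,i_t}$ combined with the \emph{first} argument of the $\min$ at $x_{h_t,i_t}$ then gives the master inequality
\begin{equation*}
\Delta(x_{h_t,i_t}) \;\leq\; I_t(x_{h_t,i_t}) - f(x_{h_t,i_t}) \;\leq\; 2\beta_n\sigma_{t-1}(x_{h_t,i_t}) + V_{h_t}.
\end{equation*}
Part 2 is then an immediate corollary: since $h_t<h_{\max}$ and the algorithm evaluated rather than refined, the refinement criterion $\beta_n\sigma_{t-1}(x_{h_t,i_t})\leq V_{h_t}$ must have failed, so $\beta_n\sigma_{t-1}(x_{h_t,i_t})>V_{h_t}$ and the displayed inequality yields $\Delta(x_{h_t,i_t}) < 3\beta_n\sigma_{t-1}(x_{h_t,i_t})$.

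Part 3 I would handle separately from the optimism argument, using only the elementary GP identity that the posterior variance at a fixed point after $k$ direct noisy evaluations is at most $\sigma^2/k$, and that further observations elsewhere only decrease this variance. Consequently, after $q_h = \sigma^2\beta_n^2/V_h^2$ evaluations of $x_{h,i}$ one has $\beta_n\sigma_{t-1}(x_{h,i})\leq V_h$, so any subsequent selection of $x_{h,i}$ at level $h<h_{\max}$ triggers refinement rather than another evaluation. The sharper form of $q_h$ quoted in the lemma is obtained by extracting the lower bound $V_h \geq 4C_3\, g(v_1\rho^h)$ from Claim~\ref{claim:doo_wh} (valid whenever $v_1\rho^h \leq \delta_K$ by assumption A2) and substituting it into the expression for $q_h$.

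Part 1 is the main obstacle and requires the \emph{second} argument of the $\min$ in $\bar{U}_t$, which introduces the parent $p = p(x_{h_t,i_t})$. The key intermediate claim is that $\beta_n\sigma_{t-1}(p)\leq V_{h_t-1}$: indeed $p$ was itself expanded at some earlier round $\tau<t$, and the refinement rule at $\tau$ forced $\beta_n\sigma_{\tau-1}(p)\leq V_{h_t-1}$, an inequality that is preserved at time $t-1$ by monotonicity of the GP posterior variance in the number of observations. Coupling this with the enlarged $\Omega_{u5}$ at $p$ and with $\Omega_{u6}$ applied to $p$'s cell (so that $|f(p)-f(x_{h_t,i_t})|\leq V_{h_t-1}$), I would obtain
\begin{equation*}
\Delta(x_{h_t,i_t}) \;\leq\; 4V_{h_t-1} + V_{h_t},
\end{equation*}
and then convert this into the advertised $(2N+1)V_{h_t}$ bound by controlling the ratio $V_{h_t-1}/V_{h_t}$ via the closed-form expression of $V_h$ in Claim~\ref{claim:doo_wh} and the polynomial growth of $g$ from assumption A2 (so that $g(v_1\rho^{h-1})/g(v_1\rho^h)\leq \rho^{-\alpha}$). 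The delicate bookkeeping I expect to be hardest is absorbing this ratio together with the slowly-growing logarithmic factor inside $V_h$ into the constant $(2N+1)$ uniformly in $h_t$, and in particular handling the boundary case $h_t = h_{\max}+1$ where the evaluation is forced regardless of the variance criterion and the argument must still go through via the parent at level $h_{\max}$.
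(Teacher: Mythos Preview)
Your proposal follows the paper's own argument essentially step for step: the optimism chain $f(x^*)\leq I_t(\text{optimal leaf})\leq I_t(x_{h_t,i_t})$, the use of the first branch of the $\min$ in $\bar U_t$ for Part~2 (together with the failed refinement test), the use of the second (parent) branch for Part~1 combined with the fact that the parent was once expanded so $\beta_n\sigma_{t-1}(p)\leq V_{h_t-1}$, and the posterior-variance bound $\sigma_{t-1}(x)\leq \sigma/\sqrt{n_t(x)}$ from Proposition~\ref{prop:posterior_variance} for Part~3. Your intermediate bound $4V_{h_t-1}+V_{h_t}$ is in fact what the computation gives; the paper's $2V_{h_t-1}+V_{h_t}$ is an arithmetic slip. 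The only genuine methodological difference is the last step of Part~1: the paper converts $V_{h_t-1}$ to $V_{h_t}$ in one line via ``triangle inequality'' (reading the parent cell as a chain of $N$ child cells under $\Omega_{u6}$), which directly yields the constant $2N+1$, whereas you propose to bound the ratio $V_{h_t-1}/V_{h_t}$ from the explicit formula in Claim~\ref{claim:doo_wh}. The paper's route avoids your worry about the logarithmic factor; your route is more transparent but yields a constant depending on $\rho,\alpha$ rather than exactly $2N+1$. This is harmless, since only $\Delta(x_{h_t,i_t})=O(V_{h_t})$ is used downstream. Finally, your enlargement of $\Omega_{u5}$ so that the concentration inequality also covers parents (which are no longer leaves at time $t$) is a real gap in the paper's writeup that you correctly identify and patch at no asymptotic cost.
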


\begin{proof}

We recall that under the event $\Omega_{u5}$ we have $|f(x_{h,i}) - \mu_{t-1}(x_{h,i})| \leq \beta_n\sigma_{t-1}(x_{h,i})$ for all $x_{h,i} \in \LL_t$ and for all $t\geq 1$. Furthermore, form the definition of event $\Omega_{u6}$, we have the following for all $h\geq 0$ and $1\leq i \leq N^h$:
\[
\sup_{x_1,x_2 \in \X_{h,i}} |f(x_1)-f(x_2)| \leq V_h.
\] 
Using these two facts we can prove the first part of this lemma in the following way:

\begin{itemize}
\item Suppose at time $t$, the true maximizer $x^*$ lies in the cell $\X_{h_t^*,i_t*}$ associated with the point $x_{h_t^*,i_t^*}$, and the algorithm selects and  evaluates the point $x_{h_t,i_t}$. Then we have the following sequence of inequalities:
\begin{align*}
f(x^*) \leq I_t(x_{h_t^*,i_t^*}) & \leq I_t(x_{h_t,i_t}) = \bar{U}_t(x_{h_t,i_t}) + V_{h_t} \\
& \stackrel{(a)}{\leq} \mu_{t-1}(p(x_{h_t,i_t})) + \beta_n\sigma_{t-1}(p(x_{h_t,i_t})) + V_{h_t-1} + V_{h_t} \\
& \stackrel{(b)}{\leq} f(p(x_{h_t,i_t})) + 2\beta_n\sigma_{t-1}(p(x_{h_t,i_t})) +  V_{h_t-1} + V_{h_t} \\
& \stackrel{(c)}{\leq} \big(f(p(x_{h_t,i_t})) + V_{h_t-1}\big)+  2V_{h_t-1} + V_{h_t} \\
& \stackrel{(d)}{\leq} f(x_{h_t,i_t}) + 2V_{h_t-1} + V_{h_t}\\
& \stackrel{(e)}{\leq} f(x_{h_t,i_t}) + (2N+1)V_{h_t}
\end{align*}

The inequality $(a)$ follows from the definition of $\bar{U}_t(x_{h_t,i_t})$, while $(b)$ uses the fact that $f(p(x_{h_t,i_t})) \geq \mu_{t-1}(p(x_{h_t,i_t})) - \beta_n\sigma_{t-1}(p(x_{h_t,i_t})$ under event $\Omega_{u6}$. For $(c)$, we use the fact that $p(x_{h_t,i_t})$ must have been expanded which means $\beta_n\sigma_{t-1}(p(x_{h_t,i_t}))$ must be smaller than $V_{h_t-1}$. For inequality $(d)$ we observe that $x_{h_t,i_t}$ must lie in the cell associated with $p(x_{h_t,i_t})$ and then use the definition of $V_{h_t-1}$, while $(e)$ follows from the triangle inequality. 

\item 
For obtaining the bound in (\ref{eq:delta_2}),  we again use the definition of $\bar{U}_t(x_{h_t,i_t})$ to now upper bound it by the other term in its definition to get: 
\begin{align*}
f(x^*) \leq I_t(x_{h_t^*,i_t^*}) & \leq I_t(x_{h_t,i_t}) = \bar{U}_t(x_{h_t,i_t}) + V_{h_t} \\
& \leq \mu_{t-1}(x_{h_t,i_t}) + \beta_n\sigma_{t-1}(x_{h_t,i_t}) + V_{h_t} \\
& \leq f(x_{h_t,i_t}) + 2\beta_n\sigma_{t-1}(x_{h_t,i_t})  + V_{h_t} \\
& \stackrel{(f)}{\leq} f(x_{h_t,i_t})+ 3\beta_n\sigma_{t-1}(x_{h_t,i_t}) 
\end{align*}
The inequality $(f)$ above uses the fact that since the function is evaluated at time $t$, we must have $\beta_n\sigma_{t-1}(x_{h_t,i_t}) \geq V_{h_t}$.

\item A point $x_{h,i}$ must be evaluated by the algorithm sufficiently many times to reduce the uncertainty in the function value at $x_{h,i}$ from below $V_{h-1}$ to below $V_h$. We provide a  loose upper bound on this 
quantity, by providing an upper bound on the number of function evaluations sufficient to reduce the uncertainty in  the value of $f(x_{h,i})$ to below $V_h$. Using the first part of Proposition~\ref{prop:posterior_variance},
we define $q_h$ as follows to get the required result. 
\[
q_h = \min \{m: \beta_n \frac{\sigma}{\sqrt{m}} \leq V_h \}
\] 
\end{itemize}

\end{proof}

\begin{remark}
\label{remark:near_optimality_dimension}
From Lemma~\ref{lemma:tree_lemma1}, we can see that the algorithm only selects points lying in $\X_{(2N+1)V_h} = \{x \in \X: f(x) \geq f(x^*)-(2N+1)V_h \}$ for $h\geq 0$. 
Now, for $r\leq \delta_K$, let us define $\zeta_K(r)=V_{h_r}$ where $h_r = \min \{h\geq 0: v_1\rho^h \geq r \}$  and $\tilde{D}=D^f(\delta_K,\zeta_K)$ where the term $D^f(\cdot,\cdot)$ was introduced in Definition~\ref{def:near_opt}.  
We will use this term $\tilde{D}$ for presenting  our regret bounds, and will refer to it as the near optimality dimension of $\X$ associated with the function $f$. 
\end{remark}

We can now state the main result of this section which gives us high probability bounds on the cumulative as well as simple regret of Algorithm~\ref{alg:gp_tree}. 

\begin{theorem}
\label{theorem:regret_gp_bandits}
Suppose the unknown function $f$ is a sample from a $GP(0,K)$, with $K \in \mathcal{K}$ and $\X$ is a well behaved  metric space (in the sense of Definition~\ref{def:well_behaved})
with finite metric dimension (see Definition~\ref{def:near_opt}) $D_1$. 

For any $u>0$, the following bounds are true with probability at least $1-2e^{-u}$ for Algorithm~\ref{alg:gp_tree}: 
\begin{itemize}
\item The cumulative regret incurred by Algorithm~\ref{alg:gp_tree} satisfies

\begin{equation}
\mathcal{R}_n \leq \tilde{\Oh}(n^{1 - \frac{\alpha}{2\alpha + \tilde{D}}}) \label{eq:gp_bandits_cr_dtype},
\end{equation}
where $\tilde{D}$ (described in Remark~\ref{remark:near_optimality_dimension}) is a non-negative random variable always less than or equal to $D_1$.

\item Furthermore, if we make the assumption that $K(x,x) \leq 1$ for all $x \in \X$, we  have an information type bound on the cumulative regret:
\begin{equation}
\mathcal{R}_n \leq \Oh( \sqrt{n\gamma_n \log(n)}).
\label{eq:gp_bandits_cr_itype}
\end{equation}

\item Finally, we also have an upper bound on the simple regret:
\begin{equation}
\mathcal{S}_n \leq \tilde{\Oh}(n^{-\frac{\alpha}{2\alpha + \tilde{D}}}). \label{eq:gp_bandits_simple}
\end{equation}
\end{itemize}
\end{theorem}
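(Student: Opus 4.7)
The plan is to condition on the joint high probability event $\Omega_{u5}\cap\Omega_{u6}$, which has probability at least $1-2e^{-u}$ by Claims~\ref{claim:doo_Bn} and~\ref{claim:doo_wh}, and then bucket the $n$ function evaluations by the depth $h_t$ at which they occur. Writing $n_h$ for the number of evaluations performed at level $h$ so that $\sum_h n_h=n$, I will rely on the three ingredients of Lemma~\ref{lemma:tree_lemma1}: (i) any evaluated point satisfies $\Delta(x_{h_t,i_t})\leq (2N+1)V_{h_t}$; (ii) if in addition $h_t<h_{\max}$, then $\Delta(x_{h_t,i_t})\leq 3\beta_n\sigma_{t-1}(x_{h_t,i_t})$; and (iii) no level-$h$ node is evaluated more than $q_h\leq\sigma^2\beta_n^2/V_h^2$ times before being expanded.

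For the $\tilde{D}$-type bound~\eqref{eq:gp_bandits_cr_dtype} on $\mathcal{R}_n$, I would bound $n_h$ as the product of $q_h$ and the number of level-$h$ nodes that can ever be evaluated. By (i), every such node is $\zeta_K(v_1\rho^h)$-near-optimal in the sense of Remark~\ref{remark:near_optimality_dimension}, and combined with property $P3$ making the centres $v_1\rho^h$-separated, the count is at most $\tilde{C}(v_1\rho^h)^{-\tilde{D}}$ once $v_1\rho^h\leq\delta_K$. Picking a threshold level $h^*$ and using that $V_h$ decreases with $h$, I split
\[
\mathcal{R}_n \;\leq\; (2N+1)\sum_{h<h^*} V_h\, n_h \;+\; (2N+1)\,V_{h^*}\,n.
\]
Inserting the bound on $n_h$ and using $V_h=\tilde{\Oh}((v_1\rho^h)^\alpha)$, which follows from Assumption~$A2$ and Claim~\ref{claim:doo_wh}, the shallow sum is dominated by its last term $\tilde{\Oh}(\beta_n^2(v_1\rho^{h^*})^{-\alpha-\tilde{D}})$. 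Equating the two contributions yields the balanced choice $(v_1\rho^{h^*})^{2\alpha+\tilde{D}}\asymp\beta_n^2/n$, whence $\mathcal{R}_n=\tilde{\Oh}(n^{1-\alpha/(2\alpha+\tilde{D})})$.

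For the information-type bound~\eqref{eq:gp_bandits_cr_itype} I would use (ii) instead, writing
\[
\mathcal{R}_n \;\leq\; 3\beta_n \sum_{t:\,h_t<h_{\max}}\sigma_{t-1}(x_{h_t,i_t}) \;+\; (2N+1)\,V_{h_{\max}}\,|\{t: h_t = h_{\max}\}|.
\]
Cauchy-Schwarz combined with the standard information-gain inequality $\sum_{t=1}^n \sigma_{t-1}^2(x_t)\leq C\gamma_n$, valid when $K(x,x)\leq 1$ by \cite{srinivas2012information}, controls the first sum by $\Oh(\beta_n\sqrt{n\gamma_n})=\Oh(\sqrt{n\gamma_n\log n})$. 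The explicit choice~\eqref{eq:h_max} of $h_{\max}$ is made precisely so that $nV_{h_{\max}}=\Oh(1)$, and the deep-level contribution is absorbed into the main term.

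For the simple regret~\eqref{eq:gp_bandits_simple}, the output $x(n)$ is a node expanded at some level $h_{\mathrm{out}}$; since $q_{h_{\mathrm{out}}}\geq 1$ at the relevant depths (because $V_{h_{\mathrm{out}}}$ is small), it was evaluated before being expanded, and (i) yields $\mathcal{S}_n=\Delta(x(n))\leq (2N+1)V_{h_{\mathrm{out}}}$. The main technical obstacle, in my view, is to lower bound $h_{\mathrm{out}}$: I would reuse the level-by-level counting of Part~1 to argue that if $h_{\mathrm{out}}<h^*$ then the algorithm would have terminated after at most $\sum_{h<h^*}\tilde{C}(v_1\rho^h)^{-\tilde{D}}q_h=\tilde{\Oh}(\beta_n^2(v_1\rho^{h^*})^{-2\alpha-\tilde{D}})$ evaluations, which is strictly less than $n$ for the balanced $h^*$ above, a contradiction. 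Hence $V_{h_{\mathrm{out}}}\leq V_{h^*}=\tilde{\Oh}(n^{-\alpha/(2\alpha+\tilde{D})})$, as claimed. The remaining loose ends are handling the finitely many shallow levels with $v_1\rho^h>\delta_K$ (which only contribute an additive constant) and checking that the polylog factors hidden inside $\beta_n$ and $V_h$ are compatible with the $\tilde{\Oh}$ notation.
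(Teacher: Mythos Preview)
Your proposal is correct and follows essentially the same route as the paper: conditioning on $\Omega_{u5}\cap\Omega_{u6}$, bucketing evaluations by depth, bounding $n_h$ by the near-optimality packing count times $q_h$ and splitting at a balanced threshold for~\eqref{eq:gp_bandits_cr_dtype}; separating $h_t<h_{\max}$ from $h_t=h_{\max}$ and combining Cauchy--Schwarz with the information-gain lemma of \cite{srinivas2012information} for~\eqref{eq:gp_bandits_cr_itype}; and lower-bounding the output depth via the same level-by-level counting argument for~\eqref{eq:gp_bandits_simple}. One cosmetic slip: by $P3$ the level-$h$ centres are $2v_2\rho^h$-separated, not $v_1\rho^h$-separated, but this does not affect the argument.
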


The proof of this result is given
in Appendix~\ref{appendix:gp_tree_regret}. 

\begin{remark}
The bounds in (\ref{eq:gp_bandits_cr_dtype}) and (\ref{eq:gp_bandits_simple}) which depend on  near-optimality dimension   will be referred to as 
\emph{dimension-type} regret bounds in accordance with the terminology used by \cite{slivkins2014contextual}. We note that since the cumulative regret
of the algorithm can be bounded in two ways, by taking the minimum of the bounds in (\ref{eq:gp_bandits_cr_dtype}) and (\ref{eq:gp_bandits_cr_itype}), we can get a uniformly better upper bound on the 
cumulative regret for our algorithms for all GPs with admissible covariance functions with $K(x,x) \leq 1$. 
\end{remark}

\section{Discussion}
\label{section:discussion}

The analysis of Algorithm~\ref{alg:gp_tree} presented in the previous section is valid for arbitrary well-behaved search space $\X$, any covariance function $K\in \K$ and in the presence of observation noise.
In this section, we discuss the performance of 
our algorithm under some specific problem instances. In particular, we first show that our adaptive approach leads to  computational requirements which do not explode with the dimension $D$ when $\X \subset \mathbb{R}^D$, unlike
the existing algorithms for GP bandits. 
We then validate the intuition provided by our  toy examples in Section~\ref{subsec:toy_examples} by showing that the information-type bounds are indeed loose for an important family of Gaussian 
Processes. 
Finally, we specialize our results to the noiseless case, and show that our algorithm compares favorably with BaMSOO in the pure exploration problem.

%--------------------------------------------------------
%COMPUTATIONAL BENEFITS OF ADAPTIVITY
%--------------------------------------------------------

\subsection{Computational benefits of adaptivity }
\label{subsec:computational_benefits}

As an upshot of the adaptive discretization of the search space, the computational complexity of Algorithm~\ref{alg:gp_tree} does not grow exponentially with the dimension of the search space, as shown in the following result.

\begin{claim}
\label{claim:comptuationaL_benefits}
If $\X = [a,b]^D$ for $a,b\in \mathbb{R}$ and $D<\infty$, and $N$ is an odd positive integer, then the computational complexity of running Algorithm~\ref{alg:gp_tree} with a budget of $n$ evaluations is $\mathcal{O}(h_{\max}n^{4}) = \Oh(n^4\log n)$ for all values of $D$. 
\end{claim}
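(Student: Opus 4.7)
The plan is to bound the total number of iterations $t_n$, then bound the per-iteration cost, and multiply.

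First I would bound $t_n$. Since each iteration is either an evaluation or a refinement, and we terminate after $n$ evaluations, it suffices to bound the number of refinements $R$. Each refinement removes one leaf from $\LL_t$ and replaces it with $N$ children, so if $L_{\max} \coloneqq \max_t |\LL_t|$, the total number of refinements over the run is at most $(L_{\max}-1)/(N-1)$. For $\X = [a,b]^D$ with odd $N$, the remark following Claim~\ref{claim:doo_Bn} gives $L_{\max} \leq nNh_{\max}$, so $R = \Oh(nh_{\max})$ and consequently $t_n = \Oh(nh_{\max}) = \Oh(n\log n)$, independent of $D$.

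Next I would account for the per-iteration cost, amortizing posterior computations. Maintaining a rank-one Cholesky update of the kernel matrix $(K + \sigma^2 I)^{-1}$ costs $\Oh(n_e^2)$ after the $n_e$-th evaluation, after which $\mu_{t-1}(x)$ and $\sigma_{t-1}(x)$ at any new point $x$ can be computed in $\Oh(n_e^2)$ time. The indices $I_t(x_{h,i})$ of existing leaves change only when the posterior changes (i.e.\ on an evaluation round); on a refinement round only the $N$ freshly created children need their indices computed. Using a heap keyed on $I_t$ over $\LL_t$, selecting the argmax costs $\Oh(\log L_{\max}) = \Oh(\log n)$ per iteration, which is negligible.

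I would then sum the costs by round type. There are at most $n$ evaluation rounds; the $k$-th such round forces an index recomputation for every leaf, at cost $\Oh(|\LL_t|\cdot k^2) = \Oh(nh_{\max} k^2)$. Summing, $\sum_{k=1}^{n} nh_{\max} k^2 = \Oh(n^4 h_{\max})$. The refinement rounds contribute $\Oh(R \cdot N n^2) = \Oh(n^3 h_{\max})$, which is lower order. Combining and substituting $h_{\max} = \Theta(\log n)$ from (\ref{eq:h_max}) yields the claimed $\Oh(n^4 h_{\max}) = \Oh(n^4 \log n)$, with no dependence on $D$ in the exponent.

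The main conceptual obstacle is the leaf-count bound $|\LL_t| \leq nNh_{\max}$ which hinges on the hypercube structure together with odd $N$ (so that the center of a refined cell is also a child center, preventing an explosion of unevaluated leaves); this is precisely the step where the adaptivity buys us dimension-free complexity, and I would invoke it from the remark after Claim~\ref{claim:doo_Bn} rather than re-derive it. Everything else is bookkeeping of standard GP posterior updates.
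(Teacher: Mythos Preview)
Your proposal is correct and follows essentially the same approach as the paper: both rely on the leaf-count bound $|\LL_t|\le nNh_{\max}$ (valid for odd $N$ on $[a,b]^D$) to control the number of rounds, and both identify the dominant cost as recomputing all leaf indices after each of the $n$ evaluations, yielding $\sum_{k=1}^n \Oh(nh_{\max})\cdot \Oh(k^2)=\Oh(n^4 h_{\max})$. The only cosmetic differences are that the paper bounds the inter-evaluation gap directly by $\tau_{j+1}-\tau_j\le h_{\max}$ (rather than going through $R\le (L_{\max}-1)/(N-1)$), uses a linear scan instead of your heap for the argmax, and explicitly records the $\Oh(DN)$ per-refinement cost---none of which changes the leading term.
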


\begin{proof}
Recall that the search space considered here is well-behaved in the sense of Definition~\ref{def:well_behaved}, and has a finite metric dimension $D_1=D$. Furthermore, since $N$ is odd, we observe that the sequence of partitions $(\X_h)_{h \geq 0}$ are nested. More specifically, if the cell associated with a node $x_{h,i}$ is refined to add the nodes $\{ x_{h+1,j}; (N-1)i+1 \leq j \leq Ni \}$ to the leaf set, then we have $x_{h+1,(N-1)i+(N+1)/2} = x_{h,i}$. 

Let $t_n$ denote the number of rounds required for $n$ function evaluations by the algorithm,  and let $(\tau_j)_{j=1}^n$ denote the round numbers in which function evaluations are performed. Now, if we define $\tau_0=1$, then we claim that the following:
\begin{itemize}
\item The posterior distribution is recomputed in rounds $(\tau_j+1)_{j=0}^n$ based on the observations. The computational task of  of updating the posterior based on $j$ observations in the round $\tau_j+1$ can be performed in  $\Oh(j^2)$ operations by using the  Cholesky Decomposition. Thus the 
total cost for posterior computation is $\Oh(n^3)$. 

\item For all $t$ such that $\tau_j+1 < t \leq \tau_{j+1}$, the index $I_t$ at a given point can be computed in $\Oh(j^2)$ operations. Since every refinement step adds $N-1$ new points to the leaf set and $\tau_{j+1}-\tau_j \leq h_{\max}$ for all $j\geq 0$, the total cost of computing the index in this time interval is $\Oh((N-1)h_{\max}j^2)$. For $t \in \{\tau_j+1; 0\leq j\leq n\}$, the index must be recomputed for the entire leaf set $\LL_t$ whose cardinality is upper bounded by $(N-1)h_{\max}j$, and thus the computational cost is $\Oh(j^2(N-1)h_{\max}j) = \Oh( h_{\max}j^3)$.  Thus the total cost of computing the index $I_t$ for all $t\leq t_n$ is $\Oh( (N-1)h_{\max}n^4)$. 

\item For selecting the candidate points $x_{h_t,i_t}$ for $t \in \{ \tau_{j}+1, 0 \leq j < n\}$, we need to perform an exhaustive search over the entire leaf set $\LL_t$ which is a $\Oh( (N-1)h_{\max}j)$ operation. At all other times, we only need to search over the $(N-1)$ new descendants of the previous candidate point. Thus the total cost of selecting $x_{h_t,i_t}$ for $1\leq t \leq t_n$ is $\Oh( (N-1)h_{\max}n^2 + (N-1)h_{\max}n) = \Oh((N-1)h_{\max}n^2)$. 

\item As mentioned earlier, the refinement of a cell $\X_{h,i}$ when $\X \subset \mathbb{R}^D$ is performed by dividing it equally in $N$ parts along its longest side. This requires $\Oh(DN)$ operations, so the total cost of refining the search space is $\Oh(h_{\max}nDN)$. 

\end{itemize}
Thus the overall computational cost of running the algorithm with a budget of $n$ function evaluations for fixed $D$ and $N$ is $\Oh( h_{\max}n^4)$, which is equal to $\Oh(n^4\log n)$ using the constraint on $h_{\max}$ given in (\ref{eq:h_max}).  

\end{proof}

As shown above, the computational complexity of our algorithm scales linearly with the dimension of the search space. This is in contrast to the existing algorithms for GP bandits which perform a global maximization of an \emph{acquisition function} ($\psi_t(\cdot)$) for selecting a query point:
	\[
	x_t \in \argmax_{x \in \X} \psi_t(x)
	\]
The computational cost of performing this operation exactly can be exponential in $D$. For example in the GP-UCB algorithm the acquisition function is the upper confidence bound at each point $x \in\X$. Over a search space $\X \subset \mathbb{R}^D$, for the theoretical results to be valid, the GP-UCB algorithm must select a query point at time $t$ by calculating and then maximizing the UCB over a uniform grid  of size $\Oh(t^{2D})$ \citep{srinivas2012information}. Thus the overall computational cost of running this algorithm for $n$ rounds is $\Oh\big(\sum_{t=1}^n t^{2D+2} \big) = \Oh\big(n^{2D+3} \big)$.

%--------------------------------------------------------
%IMPROVED REGRET BOUNDS FOR MATERN KERNELS
%--------------------------------------------------------

\subsection{Improved bounds for Mat\'ern kernels}
\label{subsec:improved_bounds_for_matern}
Mat\'{e}rn  kernels are a widely used class of kernels  parameterized by a smoothness parameter $\nu$. For half integer values of $\nu = m + 1/2$, the Mat\'{e}rn kernels have the form:
	\[
	K(r) = K(0)(1 + p_m(r))e^{-c_1\sqrt{\nu}r}
	\] where $p_m = \sum_{j=1}^{m}a_ir^i$ for some $a_i >0$ for all $1\leq i \leq m$. 
	Thus we can write for any $x,y \in \X$ such that $l(x,y) = r$:
	\begin{align*}
	d(x,y) &= [2K(0)(1- (1+p_m(r))e^{-c_1\sqrt{\nu}r})]^{1/2} \\
% 	& \leq [2K(0)(1- e^{-c_1\sqrt{\nu}r})]^{1/2} \\
% 	&\leq [2K(0)c_1\sqrt{\nu}r]^{1/2}
&\leq C_K(\nu)r^{\alpha}
	\end{align*}
It is easy to check that for $\nu=1/2$, we have $\alpha=1/2$, and for all other half-integer values of $\nu$, we have $\alpha=1$.    
	So, for Mat\'{e}rn kernels, our algorithm has a dimension-type upper bound on regret of the form $\tilde{\Oh}(n^{(\tilde{D}+\alpha)/(\tilde{D}+2\alpha)})$ for all $\nu=m+1/2$ with $m \geq 0$ and
	$\alpha \in \{1/2,1\}$.  This improves upon the existing upper bounds on Mat\'{e}rn kernels in the following two ways (since the existing bounds are true only when $\X \subset
	\mathbb{R}^D$, we will restrict our comparison to this case, and so we have $D_1 = D$ here):
	\begin{itemize}
		\item The existing regret bounds are only valid for the case of $\nu>1$ \citep{srinivas2012information,contal2016stochastic}, whereas the dimension-type regret bounds of our 
		algorithm is valid for all $\nu\geq 1/2$. In particular, for the exponential kernel ($\nu=1/2$, also referred to as the Ornstein-Uhlenbeck process), \cite{srinivas2012information} conjectured that it may not be
		possible to derive a regret bound of the form shown in (\ref{eq:information_type}). This conjecture was refuted by \cite{contal2016stochastic}, but 
		the authors did not provide an explicit characterization of $\mathcal{R}_n$ as no suitable bounds for $\gamma_n$ for this kernel are known. Our result provides
		an upper bound on the cumulative regret for the exponential kernel of the form $\mathcal{R}_n \leq \tilde{\Oh}(n^{(2\tilde{D}+1)/(2\tilde{D}+2)})$, which is,
		to the best of our knowledge, the first explicit sublinear bound on the cumulative regret for the  GP bandits problem with exponential kernel. 
		
		\item The existing regret bounds for Mat\'{e}rn kernels have the form $\tilde{\mathcal{O}}\big(n^{\frac{D(D+1)+\nu}{D(D+1)+2\nu}} \big) $\citep{srinivas2012information,contal2016stochastic} for $\nu>1$. As compared to this, the bounds  obtained by our algorithm,  after substituting $\alpha=1$ for Mat\'{e}rn kernels with $\nu >1$ depend upon  $\tilde{D}$, which itself is a random variable dependent on the  sample function $f$ of the Gaussian Process and can take values anywhere from $0$ to $D$. Assuming the worst case value of $\tilde{D}=D$, we observe that for $ D \geq \nu-1$, we have
		\[
		\frac{D+1}{D+2} \leq \frac{D(D+1)+\nu}{D(D+1)+2\nu}
		\]
		Thus $D \geq \nu -1$ is a sufficient condition for our upper bounds to be tighter than the best known bounds for Mat\'{e}rn kernels. The two most commonly used Mat\'{e}rn kernels in Machine learning correspond to $\nu=3/2$ and $\nu=5/2$ \citep[Chapter~4]{rasmussen2006gaussian}, for which the sufficient condition reduces to $D\geq 1$ and $D\geq 2$ respectively. 
	\end{itemize}

%--------------------------------------------------
%REGRET UNDER NOISELESS OBERSVATIONS
%--------------------------------------------------------

\subsection{Regret under noiseless observations}
 \label{subsec:noiseless_observations}

 In this section, we consider the special case where there is no observation noise, and specialize the regret bounds of our algorithm to this setting.
In particular we have the following bounds:
\begin{claim}
 \label{claim:noiseless}
If in addition to the assumptions of Theorem~\ref{theorem:regret_gp_bandits}, we further assume that the observations are noiseless, i.e., $\sigma = 0$, we get with high probability, the bounds 
\begin{align}
\mathcal{R}_n &\leq \tilde{\Oh}(n^{1-\frac{\alpha}{\tilde{D}}}) \label{eq:noiseless_cr1} \\
\mathcal{S}_n &\leq \tilde{\Oh}(n^{-\alpha/\tilde{D}})
\end{align}
if $\tilde{D} >0$, and 
\begin{align}
\mathcal{R}_n &\leq \tilde{\Oh}(1) \label{eq:noiseless_cr2} \\
\mathcal{S}_n &\leq \tilde{\Oh}(e^{-c_1\log(1/\rho)n}) \label{eq:exponentially_decaying_regret}
\end{align}
if $\tilde{D}=0$ and $h_{\max} = \Omega(n)$, for some constant $c_1>0$. 
 \end{claim}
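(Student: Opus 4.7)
The plan is to retrace the proof of Theorem~\ref{theorem:regret_gp_bandits} and exploit the simplification that $\sigma = 0$ provides at exactly one place: the per-node evaluation count $q_h$ from Lemma~\ref{lemma:tree_lemma1}. Since $q_h = \sigma^2\beta_n^2/V_h^2$ vanishes when $\sigma = 0$, and more concretely because one evaluation at $x_{h,i}$ drops the posterior variance $\sigma_t(x_{h,i})$ to $0$ (so that the test $\beta_n\sigma_{t-1}(x_{h,i}) \leq V_h$ passes automatically on the next visit), each node of the tree is evaluated at most once before being refined.

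First I would condition on $\Omega_{u5}\cap\Omega_{u6}$, which by Claim~\ref{claim:doo_Bn} and Claim~\ref{claim:doo_wh} holds with probability at least $1-2e^{-u}$. By Remark~\ref{remark:near_optimality_dimension} every evaluated depth-$h$ node lies in the $(2N+1)V_h$-near-optimal set, so from the definition of $\tilde{D}$ the number of depth-$h$ nodes the algorithm can ever evaluate is at most $C'\rho^{-\tilde{D}h}$. Combined with the at-most-one-evaluation-per-node observation, the number of evaluations consumed by the time the tree has descended to depth $H$ satisfies $n_H \leq \sum_{h=0}^{H}C'\rho^{-\tilde{D}h}$, while each such evaluation contributes at most $(2N+1)V_h = \tilde{\Oh}(\rho^{\alpha h})$ to the cumulative regret (the polylogarithmic factors coming from $\beta_n$ and the logarithmic corrections inside $V_h$).

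When $\tilde{D}>0$, the geometric sum gives $n = \Theta(\rho^{-\tilde{D}H})$, equivalently $\rho^{H} = \Theta(n^{-1/\tilde{D}})$, and the bounds
\[
\mathcal{R}_n \leq \tilde{\Oh}\!\left(\sum_{h=0}^{H}\rho^{(\alpha-\tilde{D})h}\right) = \tilde{\Oh}\!\left(\rho^{(\alpha-\tilde{D})H}\right) = \tilde{\Oh}\!\left(n^{1-\alpha/\tilde{D}}\right)
\]
and $\mathcal{S}_n \leq V_H = \tilde{\Oh}(n^{-\alpha/\tilde{D}})$ follow, the latter because the recommended point lies in a cell of depth at least $H$. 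When $\tilde{D}=0$, the per-level count $C'\rho^{-\tilde{D}h}=C'$ is bounded uniformly in $h$, so provided $h_{\max}=\Omega(n)$ the algorithm descends to depth $H=\Omega(n)$; the cumulative regret collapses to a convergent geometric series $\sum_h\rho^{\alpha h} = \Oh(1)$ and the simple regret satisfies $\mathcal{S}_n\leq V_H = \tilde{\Oh}\!\left(e^{-c_1 n\log(1/\rho)}\right)$ for some constant $c_1>0$ absorbing $\alpha$.

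The main technical wrinkle I expect is in the $\tilde{D}=0$ case: the algorithm can interleave many refinements between two consecutive evaluations, so one has to argue carefully that the single-evaluation-per-node property, together with the bounded per-level count and the cap $h_{\max}=\Omega(n)$, forces new depths to be opened at a linear rate in $n$. Once this bookkeeping is in place, the exponential decay of $V_H$ in $H$ immediately yields the exponentially small simple regret in (\ref{eq:exponentially_decaying_regret}).
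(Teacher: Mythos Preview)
Your proposal is correct and is precisely the argument the paper leaves implicit: the paper states Claim~\ref{claim:noiseless} without a separate proof, treating it as a direct specialization of the proof of Theorem~\ref{theorem:regret_gp_bandits} in Appendix~\ref{appendix:gp_tree_regret}. The key reduction you identify---that $\sigma=0$ forces $q_h\leq 1$ (one noiseless evaluation makes $\sigma_t(x_{h,i})=0$, so the refinement test passes on the next visit)---is exactly what collapses the exponent denominator from $\tilde D+2\alpha$ to $\tilde D$ in the dimension-type argument, and your handling of the two cases $\tilde D>0$ and $\tilde D=0$ (including the need for $h_{\max}=\Omega(n)$ to allow linear-in-$n$ depth) matches the intended reasoning.
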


\begin{remark}
We note that unlike Theorem~\ref{theorem:regret_gp_bandits}, we do not present information-type bounds on the cumulative regret in Claim~\ref{claim:noiseless}. This is because the information-type bounds given by (\ref{eq:information_type}) are not directly applicable in the noiseless setting as the term $\gamma_n$ becomes undefined for $\sigma=0$. 
\end{remark}

 As mentioned earlier, our work is motivated by BaMSOO, an adaptive algorithm for the Bayesian optimization problem which works only with noiseless observations \citep{wang2014bamsoo}.  BaMSOO builds upon the \emph{Simultaneous Optimistic Optimization}(SOO) algorithm of \cite{munos2011doo} by making the further assumption that  the unknown function is a sample from a GP, and then utilizes the posterior confidence intervals in selection of points. \cite{wang2014bamsoo} obtained an upper bound on the simple regret of the order $\tilde{\mathcal{O}}(n^{-c/D})$ for some $c>0$ which is similar to our simple regret bound in Claim~\ref{claim:noiseless}. However, our approach extracts more information about the function from the GP prior and has some advantages over BaMSOO in the pure exploration setting. 
 In particular, the derivation of regret bounds for BaMSOO required the assumption \citep[Assumption~2]{wang2014bamsoo} that the unknown function is approximately quadratic in the region around the maximum $x^*$, which for example is ensured if the covariance function has continuous partial derivative of order 6. Our result does not require this quadratic behavior, and is valid for kernels not satisfying the smoothness requirements, such as the exponential kernel $K(r) = ce^{-c_1r}$, and the  kernel $K(r) = (1-r)^+$. Furthermore, if for some instances of the function $f$, the random variable $\tilde{D}$ equals zero, then we obtain an exponentially decaying simple regret bound for Algorithm~\ref{alg:gp_tree}. This is unlike the simple regret bounds for BaMSOO which decay polynomially in $n$ for all admissible kernels. 
	
% \begin{itemize}

% \item 	The derivation of regret bounds for BaMSOO required the assumption \citep[Assumption~2]{wang2014bamsoo} that the unknown function is approximately quadratic in the region around the maximum $x^*$, which for example is ensured if the covariance function has continuous partial derivative of order 6. Our result does not require this quadratic behavior, and is valid for kernels not satisfying the smoothness requirements, such as the exponential kernel $K(r) = ce^{-c_1r}$, and the  kernel $K(r) = (1-r)^+$. 

% \item The simple regret bound on BaMSOO decays polynomially in $n$ for all admissible kernels. For our algorithm, however, as shown in (\ref{eq:exponentially_decaying_regret}), the simple regret can decay exponentially with $n$ when $\tilde{D} = 0$. The term $\tilde{D}$ is a random variable taking values in the range $[0,D_1]$ which has a complicated dependence on the sample $f$ of the GP. In Appendix~\ref{appendix:sufficient_condition}, we obtain sufficient conditions on the covariance function $K \in \K$ for which $\tilde{D}$ is zero under events of high probability. We note that the algorithm presented in \citep{kawaguchi2015bayesian} can attain exponential convergence rates under the extra assumption that the function satisfies a weak Lipschitz continuity assumption about its maximizer with respect to an unknown metric. However, this result is not directly comparable to the fully Bayesian setting considered in this paper. 

% \end{itemize}

\section{Extensions}
\label{section:extensions}

In this section, we first present an algorithm for GP bandits which uses an alternative approach to locally refining the search space as compared to Algorithm~\ref{alg:gp_tree}. While Algorithm~\ref{alg:gp_tree} requires a tree of partitions to adaptively discretize the space $\X$, the algorithm presented in Section~\ref{subsec:bayesian_zooming_algo} instead utilizes a \emph{covering oracle} to explore the search space. 

Next, in Section~\ref{subsec:contextual_gp_bandits} we  apply our general approach to design an adaptive algorithm for the problem of contextual GP bandits, an extension of the usual GP bandits problem first studied in \citep{krause2011contextual}.

\subsection{Bayesian Zooming Algorithm}
\label{subsec:bayesian_zooming_algo}

We now present a Bayesian version of the zooming algorithm for Lipschitz optimization introduced by \cite{kleinberg2013bandits} and analyze its regret. In particular, 
instead of assuming that the metric space $(\X,l)$ is well-behaved in the sense of Definition~\ref{def:well_behaved}, this algorithm requires  access to the space $(\X,l)$ through a \emph{covering oracle} (see
Remark~\ref{remark:covering_oracle} for definition) to locally refine the discretization.

The algorithm proceeds by constructing an increasing sequence of \emph{active} subsets of $\X$ denoted by $(A_t)_{t\geq 1}$. As with Algorithm~\ref{alg:gp_tree}, we can compute high probability upper and lower confidence intervals for the function values at points in $A_t$ for all $t\geq 1$. 
\begin{equation}
	\label{eq:zoom_interval}
	f(x) \in [\mu_{t-1}(x) - \beta_n\sigma_{t-1}(x), \mu_{t-1} + \beta_n\sigma_{t-1}(x) ] \hspace{2em} \text{w.h.p.}
	\end{equation} 
	for a suitable factor $\beta_n$. 
	
	Also, to each point $x$ that has been evaluated at least once, we assign a radius denoted by $r(x)$. The radius $r(x)$ can take values in a set $S=\{r_k=diam(\X)2^{-k}; k \in \N \} $, where \[ diam(\X) \coloneqq \sup_{x_1,x_2 \in \X} l(x_1,x_2)
    \] is the diameter of the metric space $(\X,l)$ and is assumed to be finite. 
% 	The radius values are allocated to a point $x$ in the active set by the algorithm in such a way to ensure that the uncertainty in the function value at that point is of the order of the variation of the GP sample within the ball $B(x,r(x),l)$. 
  For implementing the algorithm, we further require bounds $(W(r_k))_{k \in \N}$ such that for all $x \in \X$ and for all $k \in \N$, $W(r_k)$ is a bound on the variation of the GP sample
	in the ball $B(x,r_k,l)$ with high probability. We obtain these $W(r_k)$ using Proposition \ref{prop:chaining3}.  We also require a parameter $r_{\min}$ as input, which plays a role similar to $h_{\max}$ in Algorithm~\ref{alg:gp_tree}. The details behind the choice of these parameters are provided in Appendix~\ref{appendix:bayesian_zooming_algo}.

	Corresponding to each point that has been evaluated at least once, we have an associated confidence region $B(x,r(x),l)$, and furthermore, we also have an upper bound on the 
	maximum value of the function in that region (w.h.p.) given by the index:
	\begin{equation}
	\label{eq:zoom_idx}
	J_t(x) = \mu_{t-1}(x) + \beta_n\sigma_{t-1}(x) + W(r(x)).
	\end{equation}
    
In each round $t$, a candidate point is selected in an optimistic manner from the set $A_t$, i.e., 
	\begin{equation}
	\label{eq:zoom_selection}
	x_t = \argmax_{ x \in A_t}\hspace{1em} J_t(x).
	\end{equation}
	The index $J_t(x)$ can take a large values if :
	\begin{itemize}
		\item the point $x$ has been evaluated very few times, in which case the uncertainty at $x$ ( $\beta_n \sigma_{t-1}(x)$) as well as the bound on the variation of $f$ in the confidence region ($W(r(x))$) are large. 
 
		\item or if the point $x$ has been observed many times, and the true function value $f(x)$ is large. 
	\end{itemize}
In this way the selection rule strikes a balance between exploration of poorly understood regions, and exploitation of well explored regions with high function values. 
	
Having chosen a candidate point $x_t$ at time $t$, the algorithm takes one of two actions:
\begin{itemize}
\item \emph{Refine:} If the uncertainty in the function value a point $x_t$ is smaller than the bound on the variation of the function in the confidence region associated with point $x_t$, then the algorithm locally refines the search space, that is, it shrinks the radius of the confidence region associated with $x_t$ by a factor  of 2. 

\item \emph{Evaluate:} Otherwise, if the uncertainty in the function value is larger than the variation in the confidence region, the function is evaluated at the candidate point $x_t$. 
\end{itemize}

	In order to ensure that the entire search space is taken into consideration, the algorithm maintains at all times the following invariant:
	\begin{equation}
	\label{eq:zoom_invariant}
	\X \subset \cup_{ x \in A_t}B(x, r(x),l).
	\end{equation}
	If this invariant is violated, a point from the \emph{uncovered region} (i.e., $\X \setminus \cup_{ x \in A_t}B(x, r(x),l$) is added to the active set of points with an 
	associated radius $r_0 = diam(\X)$.

All the steps described above are formally stated as a pseudo-code in Algorithm~\ref{alg:zoom}.

	\begin{algorithm}
		\label{alg:zoom}
		\DontPrintSemicolon
		\SetAlgoLined
		\SetKwInOut{Input}{Input}\SetKwInOut{Output}{Output}
        \SetKwInOut{Initialize}{Initialize}
		\Input{$n>0$, $(r_k)_{k\geq 0}$, $(W(r_k))_{k\geq 0}$ , $r_{\min}$}
		\Initialize{$t=1$, $n_e=0$, $A_t = \{\}$ }
		\BlankLine
		
		\While{$n_e \leq n$}{
			choose $x_t = \argmax_{x_i \in A_t} \mu_{t-1}(x_i) + \beta_n\sigma_{t-1}(x_i) + W(r(x_i)))$ \;
     		\BlankLine
       
            \uIf{ \big($\beta_n\sigma_{t-1}(x_t) \leq W(r(x_t))$\big) AND \big( $r(x_t) \geq r_{\min}$  \big)} {
            $r(x_t) \leftarrow r(x_t)/2 $\;
            }
            		\BlankLine

            \Else{          
			evaluate $y_t = f(x_t) + \eta_t$ \;
			update posterior $\mu_t(x)$ and $\sigma_t(x)$ \;
			update $n_e \leftarrow n_e + 1$ \;
			}
            		\BlankLine

			\If{ $\X \not \subset \cup_{x_i \in A_t}B(x_i, r(x_i),l)$ } {
				Add a point $x \in \X \setminus\cup_{x_i \in A_t}B(x_i, r(x_i),l)$ to $A_{t}$, with $r(x) = r_0 = diam(\X)$. 
			}						
			$t \leftarrow t+1$ \;
    }	        
        \Output{$x(n)$: point with the smallest radius}
		\caption{Zooming Algorithm for GP bandits}
	\end{algorithm}

    \begin{remark}
     A key difference between Algorithm~\ref{alg:zoom} and the zooming algorithm for Lipschitz functions is that our algorithm only evaluates a point if the confidence radius associated with it is small enough (Lines 3-4 of Algorithm~\ref{alg:zoom}). This is unlike the zooming algorithm in \citep{kleinberg2013bandits}, in which a point is evaluated every round. This modification is necessary  to obtain the information type bounds on the cumulative regret for our algorithm. 
    \end{remark}
    
  	  \begin{remark}
  	  \label{remark:covering_oracle}
	 For maintaining the invariant described in (~\ref{eq:zoom_invariant}) and in Lines 10-12 of Algorithm ~\ref{alg:zoom}, we assume the existence of a \emph{covering oracle}
	 \citep[Section 1.5]{kleinberg2013bandits}, which takes in as inputs a finite set of balls and outputs whether these balls cover the entire space $\X$ or not. In the latter case, the covering oracle also returns an 
	 arbitrary point from the uncovered region of $\X$. 
    In our case, if at the beginning of round $t$ the entire space is covered by the balls (this is true at $t=2$), and suppose a point $x$ is selected by the algorithm and its confidence radius is shrunk
    from $r(x)$ to $r(x)/2$. Then at the beginning of the next round,  we only need to check whether the annular region
    $B(x,r(x),l)\setminus B(x,r(x)/2,l)$ is fully covered by the other balls or not. 
	\end{remark}

	Our next result shows that we can obtain the same regret performance for Algorithm~\ref{alg:zoom} as we did for the tree-based algorithm. 
	
\begin{theorem}
\label{theorem:regret_zoom}
Suppose the unknown function $f$ is a sample from a $GP(0,K)$, with $K \in \mathcal{K}$. 
 $(\X,l)$ is assumed to be a compact   metric space with finite metric dimension $D_1$ (see Definition~\ref{def:near_opt}). Moreover, 
we assume that we can access the metric space $(\X,l)$ through a \emph{covering oracle}. 

Then, for any $u>0$, the following bounds are true with probability at least $1-2e^{-u}$ for Algorithm~\ref{alg:zoom}: 
\begin{itemize}
\item We have the following dimension-type bound on the cumulative regret. 

\begin{equation}
\mathcal{R}_n \leq \tilde{\Oh}(n^{1 - \frac{\alpha}{2\alpha + \tilde{D}_Z}}), \label{eq:gp_bandits_cr_dtype}
\end{equation}
where $\tilde{D}_Z$ is the near-optimality dimension defined in Remark~\ref{remark:zoom_near_opt}

\item Under the extra assumption that $K(x,x) \leq 1$ for all $x \in \X$, we also have an information type bound on the cumulative regret:
\begin{equation}
\mathcal{R}_n \leq \Oh( \sqrt{n\gamma_n \log(n)}).
\label{eq:gp_bandits_cr_itype}
\end{equation}

\item Finally, we also have an upper bound on the simple regret:
\begin{equation}
\mathcal{S}_n \leq \tilde{\Oh}(n^{-\frac{\alpha}{2\alpha + \tilde{D}_Z}}). \label{eq:gp_bandits_simple}
\end{equation}
\end{itemize}
\end{theorem}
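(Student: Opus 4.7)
The plan is to mirror the analysis of Theorem~\ref{theorem:regret_gp_bandits}, with the tree-cells replaced by zooming balls, the level $h$ replaced by the current radius $r(x)$, and the partition property replaced by the covering invariant (\ref{eq:zoom_invariant}). First I would establish two high-probability events analogous to $\Omega_{u5}$ and $\Omega_{u6}$: a confidence-band event asserting $|f(x)-\mu_{t-1}(x)|\leq \beta_n\sigma_{t-1}(x)$ for every $x \in A_t$ and every round $t$, and a variation event asserting $\sup_{y \in B(x, r(x),l)} |f(y)-f(x)|\leq W(r(x))$ for every active $(x,r(x))$ ever produced. Since $|A_t|$ is deterministically bounded by the metric dimension of $\X$ at scale $r_{\min}$, a union bound over $t\leq h_{\max}n$-style many rounds and over the dyadic scales $r_0,r_1,\ldots$ gives the first event with probability $\geq 1-e^{-u}$ when $\beta_n = \Oh(\sqrt{\log n + u})$; the second follows from the chaining Proposition~\ref{prop:chaining3} applied at each scale, with $W(r_k)$ chosen as in Appendix~\ref{appendix:bayesian_zooming_algo}.

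The core structural lemma concerns the suboptimality of an evaluated point. When $x_t$ is evaluated at time $t$ with current radius $r(x_t)$, the invariant (\ref{eq:zoom_invariant}) yields some $x_t^* \in A_t$ with $x^* \in B(x_t^*, r(x_t^*),l)$, so the variation event and selection rule give
\begin{align*}
f(x^*) &\leq f(x_t^*) + W(r(x_t^*)) \leq J_t(x_t^*) \leq J_t(x_t) \\
&\leq f(x_t) + 2\beta_n\sigma_{t-1}(x_t) + W(r(x_t)).
\end{align*}
Since the algorithm is in the else-branch at time $t$, $\beta_n\sigma_{t-1}(x_t) > W(r(x_t))$, giving $\Delta(x_t) \leq 3\beta_n\sigma_{t-1}(x_t)$. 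Moreover, unless $r(x_t)=r_0$, the radius was most recently halved from $2r(x_t)$ at a moment when $\beta_n\sigma \leq W(2r(x_t))$; since posterior variances are monotone non-increasing (Proposition~\ref{prop:posterior_variance}), $\beta_n\sigma_{t-1}(x_t) \leq W(2r(x_t))$, so $\Delta(x_t) \leq 3W(2r(x_t))$ as well. Proposition~\ref{prop:posterior_variance} also bounds the number of evaluations of a fixed point at fixed current radius $r_k$ by $q(r_k) = \lceil \beta_n^2\sigma^2/W(r_k)^2\rceil$, after which a shrink is forced.

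The next step counts active points at each scale. Because new active points are drawn from the currently uncovered region, whenever a point $x_2$ is added we have $l(x_1,x_2) > r(x_1)$ for every pre-existing $x_1 \in A_t$; combined with monotonicity of radii, the set of points that ever attain radius $r_k$ is $r_k$-separated, and by the previous paragraph each such point lies in the near-optimal set at scale $\asymp W(r_k)$. Hence the count is at most $\tilde{C} r_k^{-\tilde{D}_Z}$ by the definition of $\tilde{D}_Z$ in Remark~\ref{remark:zoom_near_opt}. Partitioning the $n$ evaluations by the radius at the time of evaluation, the regret at scale $r_k$ is at most $\tilde{C}r_k^{-\tilde{D}_Z}\cdot q(r_k) \cdot 3 W(2r_k) = \tilde{\Oh}(\beta_n^2 r_k^{-\tilde{D}_Z}/W(r_k))$; choosing the cutoff scale $r^* \sim n^{-1/(2\alpha+\tilde{D}_Z)}$ and using $W(r) \asymp r^\alpha$ (up to logs) via assumption $A2$ yields the dimension-type bound. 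The information-type bound follows from $\Delta(x_t) \leq 3\beta_n\sigma_{t-1}(x_t)$, Cauchy--Schwarz, and the standard identity $\sum_t\sigma_{t-1}(x_t)^2 = \Oh(\gamma_n)$, which is where the assumption $K(x,x)\leq 1$ enters. For the simple regret, the output $x(n)$ is the active point with the smallest radius; using the per-scale evaluation budget, $n$ evaluations force this smallest radius to be $\Oh(n^{-1/(2\alpha+\tilde{D}_Z)})$, and the suboptimality bound $\Delta \leq 3W(2r)$ delivers the stated $\tilde{\Oh}(n^{-\alpha/(2\alpha+\tilde{D}_Z)})$.

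The main obstacle is the packing/separation argument for active points at a given scale. Unlike the tree algorithm, here the active set evolves in a data-dependent manner and radii shrink asynchronously, so the invariant-based separation must be tracked across time and matched correctly to the near-optimal set $\X_{\asymp W(r_k)}$ so that $\tilde{D}_Z$ rather than $D_1$ appears in the exponent. Once this bookkeeping is in place, the remainder of the proof is a direct transcription of the argument for Theorem~\ref{theorem:regret_gp_bandits}.
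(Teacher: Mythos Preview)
Your proposal mirrors the paper's approach: it establishes the confidence and variation events (the paper's $\Omega_{u3},\Omega_{u4}$ in Claims~\ref{claim:B_n_zoom} and~\ref{claim:zoom_r_k_W_k}), derives the structural suboptimality bounds (your $\Delta\leq 3W(2r(x_t))$ is the paper's $\Delta\leq 5W(r(x_t))$ in Lemma~\ref{lemma:bayesian_zooming_algo}), bounds evaluations per radius via Proposition~\ref{prop:posterior_variance}, and then retraces the summations from the proof of Theorem~\ref{theorem:regret_gp_bandits}. You correctly flag the separation of active points at a given scale as the delicate step---and indeed your monotonicity-of-radii argument as stated does not quite close it (a point $x_2$ can be activated after $x_1$ has already shrunk past $r_k$), but the paper's own one-line justification in Lemma~\ref{lemma:bayesian_zooming_algo} is equally terse on exactly this point.
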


The details of the choice of the parameters of Algorithm~\ref{alg:zoom} as well as an outline of the proof of Theorem~\ref{theorem:regret_zoom} is provided in Appendix~\ref{appendix:bayesian_zooming_algo}. 

\begin{remark}
\label{remark:zoom_near_opt}
 The near-optimality dimension $\tilde{D}_Z$ used in the statement of Theorem~\ref{theorem:regret_zoom} can be defined similar to the definition of $\tilde{D}$ introduced in Remark~\ref{remark:near_optimality_dimension}. More 
 specifically, by Lemma~\ref{lemma:bayesian_zooming_algo} in Appendix~\ref{appendix:bayesian_zooming_algo} we know that Algorithm~\ref{alg:zoom} only selects evaluation points from sets of the form $\X_{5W(r_k)} = 
 \{ x \in \X : f(x^*) - f(x) \leq 5W(r_k) \}$ for $k\geq 0$. So we can proceed as in Remark~\ref{remark:near_optimality_dimension} to define $\tilde{D}_Z = D^f(\delta_K,\zeta_K)$, with $\zeta_K(z) \coloneqq 5W(r_{k_z})$ with 
 $k_z \coloneqq \min \{ k\geq 0: r_k \leq z \}$. 
\end{remark}

\subsection{Extension to Contextual GP bandits}
\label{subsec:contextual_gp_bandits}
The contextual bandit problem is a generalization of the multiarmed bandit (MAB) problem in which at the beginning of each round, the agent receives a context, and the task is to select an action which is optimal for the context 
received. \cite{krause2011contextual} considered this problem in the Bayesian framework with GP prior and proposed the CGP-UCB algorithm which is a variant of the GP-UCB algorithm. They obtained information-type regret bounds on the contextual regret for CGP-UCB and additionally, provided bounds on the maximum information gain ($\gamma_n$) for composite kernels over the product space. 
This problem has also been studied in the non-Bayesian setting by imposing Lipschitz condition on the payoff functions \citep{slivkins2014contextual}. 

For this problem, the set $\X$ is a product of two sets, $\X_c$ the context set and $\X_a$ the action set, and $f:\X \to \mathbb{R}$ 
    is  the mean reward observed for a context-action pair. As before, we will assume that the unknown function $f$ is a sample from a Gaussian process $GP(0,K)$ now indexed by the product set $\X = \X_c\times \X_a$. 
    In each  round $\tau$, the agent receives a \emph{context} $x_{\tau}^c \in \X_c$, and must select an action $x_{\tau}^a \in \X_a$ corresponding to that context and observe the reward $y_{\tau} = f(x_{\tau}) + \eta_{\tau}$ where $x_{\tau} = (x_{\tau}^c,x_{\tau}^a) \in \X$. 
%    For every context $x^c$ there is a context-specific optimal 
%     action defined as 
%    \begin{equation}
%     \label{eq:context_specific_action}
%     x^{a*}(x^c) = \argmax_{x^a \in \X_a}f(x^c,x^a)
%    \end{equation}
The goal of the agent is to design a strategy of selecting actions to minimize the \emph{contextual cumulative regret}:
\begin{equation}
 \label{eq:contextual_regret}
 \mathcal{R}_n^c \coloneqq \sum_{\tau=1}^n \Delta^c(x_{\tau}),
\end{equation}
where we have
\begin{equation}
 \Delta^c(x_{\tau}) \coloneqq \sup_{x^a \in \X_a}f(x_{\tau}^c,x^a) - f(x_{\tau}^c,x_{\tau}^a).
\end{equation}

\subsubsection{Tree based algorithm for Contextual GP bandits}
\label{subsubsec:contextual_gp_algo}
We again make the assumption that the space $\X$ admits a tree of partitions satisfying the properties described in Definition~\ref{def:well_behaved}. To simplify the description of the algorithm, we will assume that the metric space admits a binary tree of partitions (i.e., $N=2$). 
We show  that with a small modification  to  the point selection rule and the cell expansion strategy, we can easily adapt Algorithm-\ref{alg:gp_tree} to the problem of contextual GP bandits. 

We need to introduce a couple of definitions in order to describe the algorithm. 
We call a cell $\X_{h,i}$ active with respect to a context $x^c \in \X_c$, if there exists an action $x^a \in \X_a$ such that $(x^c,x^a) \in \X_{h,i}$. 
Now, given a context $x^c$, for every active cell $\X_{h,i}$ (corresponding to a point $x_{h,i} \in \LL_t$) we find a point of the from $(x^c,x_{h,i}^a) \in \X_{h,i}$, and we will refer the collection of these points as a leaf set relevant to the context $x^c$ denoted by $\LL_t^{rel}$. 

Suppose a cell $\X_{h,i}$ with $0<h<h_{\max}$ is expanded by the algorithm at time $t_0$. Then for all $t\geq t_0$, we use $\bar{x}_{h,i}^{(t)}$ to denote the  candidate  point in the cell $\X_{h,i}$ which was chosen by the algorithm at time $t_0$. 
This point has the property that $\beta_n\sigma_{t-1}(\bar{x}_{h,i}^{(t)}) \leq V_h + \beta_ng(v_1\rho^h)$ for all $t\geq t_0$. Clearly, this property is true at time $t=t_0$ (by Line~6 of Algorithm~\ref{alg:contextual_tree}). Furthermore, since the posterior variance at a point cannot increase as more observations are made, the inequality holds for all $t>t_0$ as well.

For all points in $\LL_t^{rel}$, we define as index as follows:
\begin{equation}
\label{eq:index_contextual_tree}
I_t^c(x_{h,i}) = \min\{ \mu_{t-1}(x_{h,i}) + \beta_n\sigma_{t-1}(x_{h,i}),  \mu_{t-1}(\bar{x}^{(t)}_{h-1,\floor{i/2}}) +\beta_n\sigma_{t-1}(\bar{x}^{(t)}_{h-1,\floor{i/2}}) + V_{h-1} \} + V_h
\end{equation}

The rest of the algorithm proceeds in a manner similar to Algorithm~\ref{alg:gp_tree}. We select a candidate point by maximizing the index $I_t^c$ over the relevant leaf set $\LL_t^{rel}$. Having selected the candidate point, we either evaluate the function or refine the discretization depending on the uncertainty in the function value at the chosen point. 

The steps of the algorithm are shown as a pseudo-code in Algorithm~\ref{alg:contextual_tree}. 
The values of the parameters $h_{\max}$, $\beta_n$ and $(V_h)_{h\geq0}$  used here are the same as those used in the algorithms for GP bandits, with the modification that $n$ now 
represents the total number of context arrivals and $\X = \X_c\times \X_a$.

\begin{algorithm}[H]
		\label{alg:contextual_tree}
		\DontPrintSemicolon
		\SetAlgoLined
		\SetKwInOut{Input}{Input}
        \SetKwInOut{Output}{Output}
        \SetKwInOut{Initialize}{Initialize}
		\Input{$n>0$, ($\X_h)_{h\geq0}$,  $(V_h)_{h\geq 0}$, $h_{\max}$ }
		\BlankLine
		\Initialize{$\LL_0 = \{ x_{0,1} \}$, $t=0$, $\tau=1$, $flag$=TRUE }
        		\BlankLine

		\While{$\tau \leq n$}{
        Observe a context $x_{\tau}^c$ \;
        
        	\While{flag}{
            Obtain $\LL_t^{rel}$\;
            choose $x_{h_t,i_t} =(x_{\tau}^c,x_t^a) \in \argmax_{x_i \in \LL_t^{rel}} I_t^c(x_{h,i})$ \;
            \uIf{ $\beta_n \sigma_{t-1}(x_{h_t,i_t}) \leq V_{h_t} + \beta_ng(v_1\rho^{h_t})$ AND $h_t < h_{\max}$ }{
            $\LL_{t+1} = \LL_t \setminus \{ x_{h_t,i_t}  \}$ \; 
				$\LL_{t+1} = \LL_{t+1} \cup \{ x_{h_t+1,2i_t-1}, x_{h_t+1,2i_t}  \}$ \;				
			}
			\Else{
				play the action $x_t^a$ \;
						observe the reward $y_t = f(x_{h_t,i_t}) + \eta_t$ \;
                update posterior $\mu_t(x)$ and $\sigma_t(x)$ \;
				$flag$ = FALSE \; 

			}
            $t \leftarrow t+1$ \;
            }
		$\tau \leftarrow \tau + 1$ \;
        $flag$ = TRUE \;
		} 
        \BlankLine
		\caption{Tree based Algorithm for Contextual GP bandits}
	\end{algorithm}

\subsubsection{Bounds on contextual regret}
\label{subsubsec:contextual_gp_regret}
For the algorithm for contextual GP bandits described above, we now present high probability bounds on the contextual regret $\mathcal{R}_n^c$:

\begin{theorem}
\label{theorem:contextual_cr}
	Suppose  Algorithm~\ref{alg:contextual_tree} is applied to a contextual GP bandits problem, where the reward function $f$ is a sample from a zero mean GP with covariance function $K \in \mathcal{K}$ and furthermore $K$ is assumed to be isotropic\footnote{ i.e., covariance between two points $x_1$ and $x_2$ satisfies $K(x_1,x_2) = K( l(x_1,x_2))$}
    The product space $\X = \X_c\times \X_a$ is assumed to be \emph{well-behaved} (Definition~\ref{def:well_behaved}) with finite metric dimension $D_1$.   Then after observing $n$ contexts, we have for any $u>0$ with probability at least $1-2e^{-u}$:

\begin{equation}
\mathcal{R}_n^c \leq \tilde{\mathcal{O}}(n^{1-\alpha/(\tilde{D}+2\alpha)}).
\end{equation}
In addition if we further assume that $K(x,x) \leq 1$ for all $x \in \X$, then we can also have an information type bound on the contextual regret:
\begin{equation}
\mathcal{R}_n^c \leq \mathcal{O}(\sqrt{n\gamma_n\log(n)}).
\end{equation}

\end{theorem}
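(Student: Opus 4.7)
The plan is to mirror the analysis of Theorem~\ref{theorem:regret_gp_bandits}, with two adjustments that reflect the contextual design: (i) the relevant leaf set $\LL_t^{rel}$ is a context-dependent slice of the full leaf set, and (ii) the parent UCB term in the index $I_t^c$ uses a proxy point $\bar{x}_{h-1,\floor{i/2}}^{(t)}$ rather than the parent node itself. I will first establish the high-probability events $\Omega_{u5}$ and $\Omega_{u6}$ from Claims~\ref{claim:doo_Bn} and~\ref{claim:doo_wh}, noting that the choices of $\beta_n$ and $V_h$ transfer verbatim with $\X$ replaced by the product space $\X_c\times\X_a$, so their intersection holds with probability at least $1-2e^{-u}$. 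All subsequent analysis will be carried out on this event.

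The core step is a contextual analog of Lemma~\ref{lemma:tree_lemma1}. Fix a round $\tau$ with context $x_\tau^c$ in which the algorithm evaluates a point $x_{h_t,i_t} = (x_\tau^c, x_t^a)$, and let $x_{h_t^*,i_t^*} \in \LL_t^{rel}$ be the relevant leaf whose cell contains the context-conditional optimizer $(x_\tau^c,x_\tau^{a,*})$. Chaining
\[
\sup_{x^a}f(x_\tau^c,x^a) \leq f(x_{h_t^*,i_t^*}) + V_{h_t^*} \leq I_t^c(x_{h_t^*,i_t^*}) \leq I_t^c(x_{h_t,i_t})
\]
and then unpacking the two arguments of the $\min$ in~(\ref{eq:index_contextual_tree}) separately, I will derive two per-round bounds on $\Delta^c(x_\tau)$: a depth-only bound $\Delta^c(x_\tau) = \Oh(V_{h_t})$, and a variance-plus-depth bound $\Delta^c(x_\tau) = \Oh(\beta_n\sigma_{t-1}(x_{h_t,i_t}) + V_{h_t})$. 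The proxy point $\bar{x}_{h_t-1,\floor{i_t/2}}^{(t)}$ lies in the same cell $\X_{h_t-1,\floor{i_t/2}}$ as $x_{h_t,i_t}$, so under $\Omega_{u6}$ the function values at the two points differ by at most $V_{h_t-1}$; the extra $\beta_n g(v_1\rho^h)$ slack built into the expansion condition of Algorithm~\ref{alg:contextual_tree} guarantees $\beta_n\sigma_{t-1}(\bar{x}_{h_t-1,\floor{i_t/2}}^{(t)}) \leq V_{h_t-1}+\beta_n g(v_1\rho^{h_t-1})$, and isotropy of $K$ keeps these metric comparisons uniform across contexts.

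With the per-round bounds in hand, the two statements of the theorem are obtained in parallel with Theorem~\ref{theorem:regret_gp_bandits}. For the dimension-type bound, I partition the $n$ evaluation rounds by depth $h$: the depth-only bound forces every evaluated point at depth $h$ into a near-optimal slice of the form $\X_{cV_h}$, whose $v_2\rho^h$-packing number is controlled by the near-optimality dimension $\tilde{D}$ of Remark~\ref{remark:near_optimality_dimension}, while each distinct cell at depth $h$ is evaluated at most $q_h = \Oh(\beta_n^2/g(v_1\rho^h)^2)$ times before being expanded. Splitting the sum at a depth $H$ that balances shallow against deep contributions and optimizing $H$ as before yields $\mathcal{R}_n^c \leq \tilde{\Oh}(n^{1-\alpha/(\tilde{D}+2\alpha)})$. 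For the information-type bound, I apply the variance-plus-depth bound, use Cauchy--Schwarz on the evaluation rounds, and invoke the standard inequality $\sum_{t}\sigma_{t-1}^2(x_t) = \Oh(\gamma_n)$ from \citep{srinivas2012information}; the residual $V_h$ sum contributes only polylogarithmic factors once $K(x,x)\leq 1$.

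The main obstacle is the contextual step. Because the parent cell $\X_{h-1,\floor{i/2}}$ may have been expanded while processing an earlier, possibly different context, the proxy point $\bar{x}_{h-1,\floor{i/2}}^{(t)}$ is genuinely different from the parent node $x_{h-1,\floor{i/2}}$ of Algorithm~\ref{alg:gp_tree}. The technical work is to verify that the chain of UCB inequalities in Step~2 survives this substitution so that the additional $\beta_n g(v_1\rho^h)$ slack is absorbed as a constant-factor inflation of the $V_h$ terms rather than degrading the regret exponent. It is here that the isotropy of $K$ is essential, since it reduces all metric comparisons inside a cell to a single radius bound $g(v_1\rho^h)$ that is independent of which context triggered the expansion.
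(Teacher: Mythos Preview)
Your overall architecture matches the paper's: establish the events $\Omega_{u5},\Omega_{u6}$, prove a contextual analog of Lemma~\ref{lemma:tree_lemma1}, and then rerun the two summation arguments from Theorem~\ref{theorem:regret_gp_bandits}. Your treatment of the proxy point $\bar{x}_{h-1,\floor{i/2}}^{(t)}$ and the extra $\beta_n g(v_1\rho^h)$ slack is also essentially what the paper does.

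However, there is a real gap in your derivation of $q_h$, and it is tied to a misidentification of where isotropy is used. In Algorithm~\ref{alg:gp_tree} the same node $x_{h,i}$ is evaluated repeatedly, so the first part of Proposition~\ref{prop:posterior_variance} ($\sigma_t(x)\le \sigma/\sqrt{n_t(x)}$) directly gives $q_h$. In Algorithm~\ref{alg:contextual_tree} the evaluation points inside a cell $\X_{h,i}$ vary with the arriving context, so no single point accumulates observations; you cannot simply assert ``each distinct cell at depth $h$ is evaluated at most $q_h=\Oh(\beta_n^2/g(v_1\rho^h)^2)$ times'' by appealing to the non-contextual argument. The paper closes this gap with the \emph{second} part of Proposition~\ref{prop:posterior_variance}, which bounds $\sigma_t(x)$ in terms of the number $n_t(x,r)$ of evaluations anywhere in $B(x,r,l)$, and this is precisely where the isotropy assumption on $K$ enters. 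The expansion threshold $V_h+\beta_n g(v_1\rho^h)$ in Algorithm~\ref{alg:contextual_tree} is calibrated to match the residual $g(r)$ term produced by that bound.

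By contrast, the UCB chain through $\bar{x}_{h-1,\floor{i/2}}^{(t)}$ that you flag as ``the main obstacle'' does not rely on isotropy at all: it uses only $\Omega_{u5}$, $\Omega_{u6}$, and the defining property $\beta_n\sigma_{t-1}(\bar{x}_{h,i}^{(t)})\le V_h+\beta_n g(v_1\rho^h)$. So you should relocate the role of isotropy from the suboptimality chain to the $q_h$ step, and invoke Proposition~\ref{prop:posterior_variance} part two explicitly there.
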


The proof of the above result essentially follows the same arguments used in the proof of Theorem~\ref{theorem:regret_gp_bandits}, and we omit the details here. For deriving the dimension type contextual regret bound, we will require an intermediate lemma analogous to Lemma~\ref{lemma:tree_lemma1}. The derivation of this result differs from Lemma~\ref{lemma:tree_lemma1} in the following two ways:

\begin{itemize}
 \item Unlike Algorithm~\ref{alg:gp_tree}, a single point cannot be evaluated repeatedly in the contextual case as the contexts are not chosen by the algorithm. Thus to get a bound on the term $q_h$ here, we need to upper bound the 
 posterior variance at a point given a certain number of function evaluations at points in a ball $B(x,r,l)$. For this we use the result in the second part of Proposition~\ref{prop:posterior_variance}. 
 
 \item The definition of $\bar{x}_{h,i}^{(t)}$ introduced earlier is crucial in obtaining a bound on the sub-optimality of the chosen action analogous to that in (\ref{eq:delta_1}). 
 Suppose the algorithm selects an action $x_t^c$ which is at level $h_t$ of the tree, in response to a context $x_{\tau}^c$ and let $x_{\tau}^* \coloneqq (x_{\tau}^c,\argsup_{x^a\in \X_a}f(x_{\tau}^c,x^a))$ and  
 $x_{h_t,i_t}= (x_{\tau}^c,x_t^a)$ (note that $\tau$ is the index of the context (i.e.,$1\leq \tau \leq n$) and $t$ is the index of the round (i.e.,$1\leq t\leq n h_{\max}$) in Algorithm~\ref{alg:contextual_tree}). 
We then proceed as follows:
 \begin{align*}
  f(x_{\tau}^*) &\leq \mu_{t-1}( \bar{x}_{h_t-1,\lceil i_t/2 \rceil} ) + \beta_n\sigma_{t-1}(\bar{x}_{h_t-1,\lceil i_t/2 \rceil}) + V_{h_t-1} + V_{h_t}\\
  & \leq f(\bar{x}_{h_t-1,\lceil i_t/2 \rceil}^{(t)}) + 2\beta_n\sigma_{t-1}(\bar{x}_{h_t-1,\lceil i_t/2 \rceil}^{(t)}) + V_{h_t-1} + V_{h_t} \\
  & \leq f(x_{h_t,i_t}) + V_{h_t-1} + 2\beta_n\sigma_{t-1}(\bar{x}_{h_t-1,\lceil i_t/2 \rceil}^{(t)}) + V_{h_t-1} + V_{h_t} \\
  &\stackrel{(a)}{ \leq} f(x_{h_t,i_t}) + 2(V_{h_t-1} + \beta_ng(v_1\rho^{h_t-1}) + 2V_{h_t-1} + V_{h_t} \\
  & \stackrel{(b)}{\leq} f(x_{h_t,i_t}) + (9/2)V_{h_t-1} + 2\beta_n g(v_1\rho^{h_t-1}) \\
  \Rightarrow \Delta^c(x_{\tau}^c,x_t^a) &\leq (9/2)V_{h_t-1} + 2\beta_n g(v_1\rho^{h_t-1})  \coloneqq V_{h_t-1}' = \Oh( g(v_1\rho^{h_t-1})\sqrt{ u + \log n})
 \end{align*}
The inequality $(a)$ above uses the definition of $\bar{x}_{h_t-1,\lceil i_t/2 \rceil}^{(t)} $ and $(b)$ uses the fact that $2V_{h_t} \leq V_{h_t-1}$. 
\end{itemize}
With these results available, the remainder of the proof of Theorem~\ref{theorem:contextual_cr} mirrors the proof of Theorem~\ref{theorem:regret_gp_bandits}.

\begin{remark}
Compared to the CGP-UCB algorithm of \cite{krause2011contextual}, Algorithm~\ref{alg:contextual_tree} again has two  benefits. First, if $\X \subset \mathbb{R}^D$, then the computational cost of running the algorithm 
does not depend on the dimension of the space, unlike the CGP-UCB whose practical implementation cost increases exponentially with $D$. Second, as with Algorithm~\ref{alg:gp_tree}, our theoretical regret bounds are tighter for Mat\'ern kernels when we have $D \geq \nu -1$. 
\end{remark}

\begin{remark}
\cite{krause2011contextual} considered composite covariance functions formed either by taking products $K(x^c,c^a) = K_c(x^c)\times K_a(x^a)$ or by taking sums $K(x^c,x^a) = K_c(x^c) + K_a(x^a)$ of different covariance functions over the context space and the action space. Since our class of covariance functions $\mathcal{K}$ is closed under such operations, if  $K_c$ and $K_a$ lie in $\mathcal{K}$ then their composition will also be in $\mathcal{K}$, and thus our dimension-type bounds on the contextual regret is valid for such composite covariance functions. In addition, for the information type bound we can use \citep[Theorem 2 and Theorem 3]{krause2011contextual} to get the required upper bound on $\gamma_n$. 
\end{remark}
%____________________________________________________________________________________________________________________

\section{Technical Results}
\label{section:technical_results}

       In this section, we present some analytical results about the Gaussian Processes satisfying the assumptions described in Section~\ref{subsubsec:assumptions_on_K}, which were used in the design of our algorithms. 

	We begin  by deriving a high probability bound on the maximum variation of the sample functions of a Gaussian Process within a $d$-ball of radius $b$ around 
	some fixed point $x$. 

	\begin{proposition}
		\label{prop1}
		Suppose $\{f(x); x \in \X\}$ is a separable zero mean Gaussian Process $GP(0,K)$, and let $d$ denote the usual metric on $\X$ induced by the GP. Let $B(x_0,b,d)
		\subset \X$ be a $d$-ball of radius $b>0$. Then we have for any $u >0$:
		\begin{equation}
		\label{eq:chain1}
		Pr\big(\sup_{x \in B(x_0,b,d)}|f(x) - f(x_0)| > w_b \big) \leq e^{-u},
		\end{equation}
		with $w_b   \leq 4b\big( \sqrt{C_2 + 2u + 2D_1' \log(1/b)} + C_3 )$. Here $C_2$ and $C_3$ are positive constants and $D_1'$ is the metric dimension of $B(x_0,b,d)$ 
		with respect to $d$. 
		
	\end{proposition}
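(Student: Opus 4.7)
The plan is to use a standard chaining argument for Gaussian processes, combined with a Gaussian concentration inequality. Define the centered Gaussian process $Y_x \coloneqq f(x) - f(x_0)$ on the ball $B \coloneqq B(x_0, b, d)$. Since $\mathbb{E}[(Y_x - Y_y)^2] = d(x,y)^2$ and $\sup_{x \in B} \mathbb{E}[Y_x^2] = \sup_{x \in B} d(x, x_0)^2 \leq b^2$, the process $Y$ is sub-Gaussian with respect to $d$ with variance proxy bounded by $b^2$.

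First I would bound the expected supremum via Dudley's entropy integral:
\begin{equation*}
\mathbb{E}\Bigl[\sup_{x \in B} Y_x \Bigr] \;\leq\; C \int_0^{b} \sqrt{\log N(B, \epsilon, d)} \, d\epsilon,
\end{equation*}
where $C$ is an absolute constant coming from the chaining inequality. Using the metric-dimension hypothesis $N(B, \epsilon, d) \leq C_0 \epsilon^{-D_1'}$, the integrand becomes $\sqrt{\log C_0 + D_1' \log(1/\epsilon)}$, and the substitution $\epsilon = b e^{-t}$ reduces the integral to one of the form $b \int_{\log(1/b)}^{\infty} \sqrt{C_0' + D_1' t}\, e^{-t}\, dt$. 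A routine estimate of this incomplete-Gaussian-type integral yields a bound of the form
\begin{equation*}
\mathbb{E}\Bigl[\sup_{x \in B} Y_x \Bigr] \;\leq\; C_1 b \Bigl( \sqrt{C_2 + 2 D_1' \log(1/b)} + C_3 \Bigr),
\end{equation*}
for suitable absolute constants, where the $\log(1/b)$ arises precisely because the lower limit of the chaining integral is at scale $b$.

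Second, I would pass from the expected supremum to a high-probability bound via the Borell--TIS inequality applied to $Y$ on $B$: since $\sup_{x \in B} \mathrm{Var}(Y_x) \leq b^2$,
\begin{equation*}
\Pr\Bigl( \sup_{x \in B} Y_x > \mathbb{E}\bigl[\sup Y_x\bigr] + t \Bigr) \;\leq\; \exp\!\bigl(-t^2/(2 b^2)\bigr).
\end{equation*}
Setting $t = b \sqrt{2u}$ makes the tail equal to $e^{-u}/2$. Handling the absolute value $|f(x) - f(x_0)|$ is then done by applying the same argument to $-Y_x$ (whose law coincides with $Y_x$ by symmetry of the centered Gaussian process) and a union bound, at the cost of an extra factor of $2$ absorbed into the constants. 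Combining the two pieces and using $\sqrt{a} + \sqrt{b} \leq \sqrt{2(a+b)}$ to merge the $\sqrt{2u}$ contribution from concentration with the $\sqrt{C_2 + 2D_1' \log(1/b)}$ contribution from chaining yields the stated $4b\bigl(\sqrt{C_2 + 2u + 2 D_1' \log(1/b)} + C_3\bigr)$.

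The main obstacle is the bookkeeping of constants so as to recover precisely the $4b$ prefactor and the $2u$, $2D_1' \log(1/b)$ weights appearing inside the square root; this requires being careful about which form of the chaining inequality is used (e.g.\ the dyadic $\sum 2^{-k} \sqrt{\log N(B, b 2^{-k}, d)}$ version, which naturally produces the factor of $4$), and about combining the Dudley term with the Borell--TIS tail in a way that merges additive $\sqrt{\cdot}$ contributions into a single square root. Separability is used only to guarantee that $\sup_{x \in B} Y_x$ is measurable and equals its sup over any dense countable subset, which is what legitimises both the chaining computation and the application of Borell--TIS.
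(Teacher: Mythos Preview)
Your proposal is correct and arrives at the same conclusion, but the route differs from the paper's. You proceed in two stages: first bound $\mathbb{E}[\sup Y_x]$ via Dudley's entropy integral, then concentrate around this mean with Borell--TIS. The paper instead performs a single direct chaining argument at the level of tail probabilities: it telescopes $f(x)-f(x_0)=\sum_{n\ge1}\bigl(f(\pi_n(x))-f(\pi_{n-1}(x))\bigr)$ along a dyadic sequence of $\epsilon_n = b2^{-n}$ coverings $T_n$, applies the Gaussian tail bound (\ref{eq:gaussian_tail}) to each link with a level-dependent threshold $\sqrt{u_n}\,d(\pi_n(x),\pi_{n-1}(x))$, and takes a union bound over all $|T_n||T_{n-1}|$ pairs and over $n$. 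The thresholds are chosen as $u_n = 2(u+v_n)$ with $v_n = \log\bigl(2C_1^2 2^{2nD_1'}/b^{2D_1'}\bigr)+\log(n^2\pi^2/6)$, so that the total failure probability sums to $e^{-u}$; the bound $w_b$ then falls out of the series $\sum_{n\ge1}2^{-(n-1)}\sqrt{u_n}$, and the factor $4$ and the $2u$, $2D_1'\log(1/b)$ weights appear naturally with no need to merge separate $\sqrt{\cdot}$ contributions.

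The trade-off is that the paper's argument is more elementary---it avoids Borell--TIS entirely and makes the constants fully explicit---whereas your approach is more modular and reuses black-box results, at the cost of an extra step (and a loose constant) when you combine $\sqrt{2u}$ from concentration with the Dudley term via $\sqrt{a}+\sqrt{c}\le\sqrt{2(a+c)}$. Both are standard; the paper's version is closer in spirit to the generic-chaining style bounds in \citep{contal2016thesis} that it cites.
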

	
	The details of the proof of this statement is given in Appendix ~\ref{proof_prop1}. The proof uses the classical chaining technique for bounding the suprema of Gaussian Processes, and follows the same line of arguments used in some existing results in literature such as
    \citep[Theorem~3.3]{contal2016thesis} and \citep[Theorem~5.24]{van2014probability}.
% 	Chaining is a useful tool for studying the suprema of Gaussian Process, and hence it has found wide use in non-parametric
% 	statistics literature (See for example \cite{gaillard2015chaining},\cite{cesa2017algorithmic}\cite{contal2016stochastic}).

The previous result gives us a bound on the variation of the samples of a given Gaussian process  within a given $d$-ball of radius $b$. 
	Using this and the union bound, we can easily extend this to a sequence of discretizations of $\X$:
	
	\begin{corollary}
		\label{cor:chaining}
		Suppose $\{f(x); x \in \X \}	$ is a zero mean Gaussian Process which induces the  metric $d$ on $\X$.
		Let $(\X_k)_{k\geq 0}$ be a sequence of finite 
		subsets of $\X$, and to every point in $\X_k$ we associate a radius $b_k$ with respect to the metric $d$. Then we have $Pr(\Omega_u) \geq 1-e^{-u}$, where the event 
		$\Omega_u$ is defined as
		\begin{equation}
		\Omega_u = \{ \forall n \geq 0, \forall x \in \X_k: \sup_{y \in B(x,b_k,d)}|f(y)-f(x)| \leq w_k  \},
		\end{equation}
		with the value of $w_k$ given by:
		\begin{equation}
		w_k  \leq 4b_k\big( \sqrt{C_4 + 2u + 2\log(|\X_k|/(b_k^{D_1'}))} + C_3 ),
		\end{equation}
		where $C_4 = C_2 + 2\log(n^2\pi^2/6)$, and $D_1'$ is the metric dimension of $(\X,d)$.
	\end{corollary}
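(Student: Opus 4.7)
The statement is a uniform-over-$k$ and uniform-over-$x \in \X_k$ strengthening of Proposition~\ref{prop1}, so the natural route is a double union bound with an inflated confidence parameter at each level. The plan proceeds in three steps.

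First, I would fix $k \geq 0$ and $x \in \X_k$ and invoke Proposition~\ref{prop1} on the $d$-ball $B(x,b_k,d)$ with an auxiliary confidence parameter $u'_k$ to be chosen later. This yields, for each individual pair $(k,x)$, a failure probability at most $e^{-u'_k}$ for the event that $\sup_{y \in B(x,b_k,d)}|f(y)-f(x)|$ exceeds $4 b_k\bigl(\sqrt{C_2 + 2 u'_k + 2 D_1' \log(1/b_k)} + C_3\bigr)$.

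Second, I would union bound first over the finitely many $x \in \X_k$ and then over the countably many levels $k$. To make the total failure probability at most $e^{-u}$, I pick a summable sequence $a_k = 6/(\pi^2 k^2)$ (with a harmless reindexing or shift to sidestep $k=0$, which is what the $k^2$ in the stated $C_4$ tacitly requires) and set $|\X_k|\,e^{-u'_k} = a_k\, e^{-u}$. Solving gives $u'_k = u + \log|\X_k| + \log(\pi^2 k^2 / 6)$, and then $\sum_k |\X_k|\, e^{-u'_k} \leq e^{-u}\sum_k a_k = e^{-u}$, which is the desired tail bound on the complement of $\Omega_u$.

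Third, I would substitute this $u'_k$ back into the Proposition~\ref{prop1} bound and collect terms: the pieces $2\log|\X_k|$ and $2 D_1' \log(1/b_k)$ combine into $2\log(|\X_k|/b_k^{D_1'})$ exactly as in the statement, while the extra $2\log(\pi^2 k^2/6)$ is absorbed into the constant, yielding $C_4 = C_2 + 2\log(\pi^2 k^2/6)$ and the claimed expression for $w_k$.

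No step is technically difficult: the chaining work has already been carried out in Proposition~\ref{prop1}, so the corollary is a routine double union bound. The only mild bookkeeping point is the choice of a summable weighting sequence over $k$ whose logs recover the precise constant $C_4$ stated in the corollary, and dealing with the $k=0$ index via an obvious shift.
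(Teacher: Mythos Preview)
Your proposal is correct and matches the paper's own proof essentially line for line: the paper likewise inflates the confidence parameter in Proposition~\ref{prop1} by $\log|\X_k| + \log(k^2\pi^2/6)$ and then takes two union bounds, one over $x \in \X_k$ and one over $k \in \mathbb{N}$. The only cosmetic difference is that the paper phrases this as ``replacing $u$ inside the proof of Proposition~\ref{prop1}'' rather than applying the proposition as a black box with an inflated parameter, but the arithmetic and the resulting $w_k$ are identical.
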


	\begin{proof}
		The result is obtained by replacing $u_k \leftarrow u_k + \log(n^2\pi^2/6) + \log(|\X_k|)$ in the proof of Proposition~\ref{prop1} and then taking two union bounds, one over points in $\X_k$ for a fixed $n$ and the other over all values of $n \in \mathbb{N}$. 
	\end{proof}

	Specializing this result to the class of Gaussian Processes with covariance functions $K \in \K$, we can obtain bounds on the variation of the GP samples in $l$-balls. 
	\begin{corollary}
	 \label{cor:chaining2}
	 Suppose $\{f(x); x \in \X \}$ is a Gaussian Process with its covariance function $K \in \K$, and let $l$ be a metric defined on $\X$.
	  Then for $(\X_k)_{k\geq 0}$ subsets of $\X$, and $(r_k)_{k\geq 0}$ the associated radius values, we have for any $u>0$:
	 \[ P(\Omega_{u1}) \geq 1 - e^{-u},\] where the event $\Omega_{u1}$ is defined as 
	 \begin{equation}
		\Omega_{u1} = \{ \forall n \geq 0, \forall x \in \X_k: \sup_{y \in B(x,r_k,l)}|f(y)-f(x)| \leq w(r_k)  \},
		\end{equation}
		with the value of $w(r_k)$ given by:
		\begin{equation}
		\label{eq:chaining4}
		w(r_k)  \leq 4g(r_k)\big( \sqrt{C_4 + 2u + 2\log(|\X_k|/( g(r_k)^{D_1}  )} + C_3 ).
		\end{equation}

	\end{corollary}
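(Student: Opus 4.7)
The plan is to reduce Corollary~\ref{cor:chaining2} to Corollary~\ref{cor:chaining} by using Assumption~$A1$ to relate $l$-balls to $d$-balls, where $d$ is the GP-induced metric.

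First, I would invoke Assumption~$A1$: for any $x_1,x_2\in\X$, $d(x_1,x_2)\leq g(l(x_1,x_2))$. Since $g$ is non-decreasing, this yields the set inclusion
\begin{equation*}
B(x,r_k,l) \;\subseteq\; B(x, g(r_k), d)
\end{equation*}
for every $x\in\X_k$. Consequently the supremum of $|f(y)-f(x)|$ over the $l$-ball is dominated by the supremum over the larger $d$-ball, and it suffices to control the latter.

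Next I would apply Corollary~\ref{cor:chaining} directly to the sequence of subsets $(\X_k)_{k\geq 0}$, but with each radius taken to be $b_k := g(r_k)$ in the $d$-metric. That corollary yields, with probability at least $1-e^{-u}$ and simultaneously for all $k$ and all $x\in\X_k$,
\begin{equation*}
\sup_{y\in B(x,g(r_k),d)}|f(y)-f(x)| \;\leq\; 4\,g(r_k)\Bigl(\sqrt{C_4 + 2u + 2\log\bigl(|\X_k|/g(r_k)^{D_1'}\bigr)} + C_3\Bigr),
\end{equation*}
where $D_1'$ is the metric dimension of $(\X,d)$. Combining with the first step gives the bound (\ref{eq:chaining4}) on $\sup_{y\in B(x,r_k,l)}|f(y)-f(x)|$.

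The one subtle point — and the place I expect to have to be most careful — is reconciling the exponent $D_1$ stated in (\ref{eq:chaining4}) with the exponent $D_1'$ naturally produced by Corollary~\ref{cor:chaining}. By the remark following Assumption~$A2$, the $l$-metric dimension $D_1$ and the $d$-metric dimension $D_1'$ are related by $D_1' = D_1/\alpha$, with $\alpha\in(0,1]$. When $g(r_k)\leq 1$, the statement of Corollary~\ref{cor:chaining2} with exponent $D_1$ is actually tighter than what a naive substitution would give (since $g(r_k)^{D_1}\geq g(r_k)^{D_1'}$), so the cleanest route is to redo the chaining argument \emph{directly with $l$-covers}: cover $B(x,r_k,l)$ by a sequence of $l$-nets whose sizes grow polynomially at rate $D_1$, and then bound the GP increments between nearby net-points using the $d$-bound $d(y,y')\leq g(l(y,y'))$. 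This is the obstacle to flag: the metric dimension entering the bound is the intrinsic $l$-dimension $D_1$, so the chaining needs to be organized so that each scale in the chain is indexed by an $l$-radius whose corresponding GP-increment scale is $g$ of that radius. Everything else then follows mechanically from the proof template of Proposition~\ref{prop1} and Corollary~\ref{cor:chaining}, with $b_k\leftarrow g(r_k)$ and the covering/packing numbers replaced by their $l$-metric counterparts bounded via Definition~\ref{def:NM}.
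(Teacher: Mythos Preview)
Your approach is essentially the paper's own: the paper simply states that Corollary~\ref{cor:chaining2} follows by ``specializing'' Corollary~\ref{cor:chaining} to covariance functions $K\in\K$, which amounts to your first two steps --- use $B(x,r_k,l)\subseteq B(x,g(r_k),d)$ via Assumption~$A1$ and then plug $b_k=g(r_k)$ into Corollary~\ref{cor:chaining}. The paper offers no further detail.

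You have, however, spotted a genuine subtlety that the paper glosses over: the naive substitution produces $g(r_k)^{D_1'}$ in the denominator (with $D_1'=D_1/\alpha$ the $d$-metric dimension), whereas the stated bound has $g(r_k)^{D_1}$, which is strictly tighter when $g(r_k)<1$ and $\alpha<1$. Your proposed remedy --- rerunning the chaining of Proposition~\ref{prop1} with $l$-nets at scales $r_k2^{-n}$ so that the covering numbers are controlled by the $l$-dimension $D_1$, and bounding each increment by $g$ of the $l$-distance --- is the right idea and does recover the exponent $D_1$ (at least once $r_k\leq\delta_K$ so that $g(r)\leq C_Kr^{\alpha}$ makes the geometric sum over scales converge). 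This is more than the paper itself supplies; the paper's later uses of the result (e.g.\ Claim~\ref{claim:doo_wh}) silently take the $D_1$ version for granted.
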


	This result gives us control over the variation of the Gaussian process samples in balls centered at points in $(\X_k)_{k\geq 0}$.
	
	Now suppose we want to obtain high probability
	bounds on the variation of the GP samples in $l$-balls of radius $(r_k)_{k\geq 0}$ for all points $x \in \X$ and not just those in $(\X_k)_{k\geq 0}$. Our next result shows that
	we can obtain this by a small modification of the previous result.

	\begin{proposition}
	 \label{prop:chaining3}
	For a given sequence $(r_k)_{k\geq 0}$, we have for any $u >0$,  $Pr(\Omega_{u2}) \geq 1- e^{-u}$, where the event $\Omega_{u2}$ is defined as
	\begin{equation}
	 \label{eq:lip_zoom_event}
	 \Omega_{u2}= \{ \forall k\geq 1, \forall x \in \X: \sup_{y \in B(x,r_k,l)}|f(x)-f(y)| \leq \tilde{w}_k  \}, 
	\end{equation}
	and $\tilde{w}_k = 2w(R_k)$, where $w(R_k)$ is as defined in  ~\ref{eq:chaining4} by selecting $\X_k$ to be an $\epsilon_k$ cover (for any $\epsilon_k>0$) of $\X$, and choosing $R_k$ satisfying
	$R_k \geq r_k+\epsilon_k$. 
	\end{proposition}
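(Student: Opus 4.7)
The plan is to derive Proposition~\ref{prop:chaining3} from Corollary~\ref{cor:chaining2} by a standard covering--plus--triangle-inequality argument. Although Corollary~\ref{cor:chaining2} only controls the variation of $f$ inside balls centered at the points of $\X_k$, I can transfer this control to balls centered at arbitrary points of $\X$ at the cost of slightly enlarging the radius, and this is exactly the role of the parameter $R_k \geq r_k + \epsilon_k$ in the statement.

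First I would instantiate Corollary~\ref{cor:chaining2} with the subsets $\X_k$ chosen to be an $\epsilon_k$-cover of $\X$ with respect to $l$, and with the associated radii taken to be $R_k$ rather than $r_k$. This yields, with probability at least $1-e^{-u}$, the event
\[
\Omega_{u1}' = \{ \forall k \geq 0,\ \forall x' \in \X_k:\ \sup_{y \in B(x',R_k,l)}|f(y) - f(x')| \leq w(R_k) \},
\]
where $w(R_k)$ is given by the formula in (\ref{eq:chaining4}) evaluated at $R_k$ (using $|\X_k| = N(\X,\epsilon_k,l)$).

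Next, on the event $\Omega_{u1}'$, I would transfer the bound to arbitrary centers. Fix any $k \geq 1$, any $x \in \X$, and any $y \in B(x,r_k,l)$. Since $\X_k$ is an $\epsilon_k$-cover of $\X$, there exists $x' \in \X_k$ with $l(x,x') \leq \epsilon_k$. The triangle inequality in $l$ then gives
\[
l(x,x') \leq \epsilon_k \leq R_k \quad\text{and}\quad l(y,x') \leq l(y,x) + l(x,x') \leq r_k + \epsilon_k \leq R_k,
\]
so both $x$ and $y$ lie in $B(x',R_k,l)$. Applying $\Omega_{u1}'$ at the point $x' \in \X_k$ bounds $|f(x)-f(x')|$ and $|f(y)-f(x')|$ each by $w(R_k)$, and a final triangle inequality in $\mathbb{R}$ yields
\[
|f(x)-f(y)| \leq |f(x)-f(x')| + |f(x')-f(y)| \leq 2 w(R_k) = \tilde{w}_k,
\]
which establishes $\Omega_{u2}$.

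There is no real obstacle in this argument: it is entirely a deterministic consequence of Corollary~\ref{cor:chaining2} via the union bound already absorbed there, together with two triangle inequalities. The only design choice is the inflation $R_k \geq r_k + \epsilon_k$, which is forced exactly because we need the enlarged ball centered at the covering point $x'$ to contain both the arbitrary center $x$ and the arbitrary target $y \in B(x,r_k,l)$; the factor of $2$ in $\tilde{w}_k = 2w(R_k)$ reflects the two-step chaining through $x'$.
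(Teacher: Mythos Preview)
Your proposal is correct and follows essentially the same approach as the paper: apply Corollary~\ref{cor:chaining2} to an $\epsilon_k$-cover $\X_k$ with the inflated radius $R_k \geq r_k + \epsilon_k$, then use the triangle inequality twice through the nearest covering point to transfer the bound to balls centered at arbitrary $x \in \X$. The paper's argument (given in the remark following the proposition) is identical in structure and content.
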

This result is crucial in the design of Algorithm~\ref{alg:zoom} as the covering oracle can return an arbitrary point in the uncovered region of the search space $\X$, and thus we need to  bound the variation of $f$ in ball centered at \emph{any} point $x\in \X$ with radius $r_k$ for $k\geq 0$.

	\begin{remark}	
	 The result follows by application of Corollary-\ref{cor:chaining2} for the given choice of $R_k$ and $\epsilon_k$. 
	 
	 However, the idea behind this result can be better  understood through Figure.\ref{fig:figure1}. 
	 Let us consider a fixed radius $r_k$. We want a bound $\tilde{w}_k$ such that for all $ x \in \X$ we know that with high probability the variation of a
	Gaussian process sample within the ball $B(x,r_k)$ is no more than $\tilde{w}_k$. Since the set $\X$ in general can be uncountable, we cannot directly use union bound to get this result.
	However, we can get a bound in the following way: For some $\epsilon_k >0$, consider an $\epsilon_k$ covering of $\X$, denoted by $\X_{\epsilon_k}$. Now for every point 
	$z \in \X_{\epsilon_k}$, we associate a ball $B(x,R_k,l)$ with $R_k \geq r_k + \epsilon_k$ and compute the corresponding variation ($w(R_k)$) within this ball for all
	$x \in \X_{\epsilon_k}$ by using Corollary~\ref{cor:chaining2}. By definition of  $\X_{\epsilon_k}$, for all $x\in \X$ there exists a $z_x$ within 
	$\epsilon_k$ distance of $x$, and by the choice of radius $R_k$, we know that $B(x,r_k,l) \subset B(z_x,R_k,l)$. Now, by the triangle inequality, we have for all $y \in B(x,r_k,l)$, 
	$|f(x)-f(y)| \leq |f(x)-f(z_x)|+|f(y)-f(z_x)|$, which gives us the required bound  $\tilde{w}_k \leq 2w(R_k)$. 
	 
	\end{remark}

	\begin{figure}
		\centering
		\includegraphics[width=2in, height = 2in]{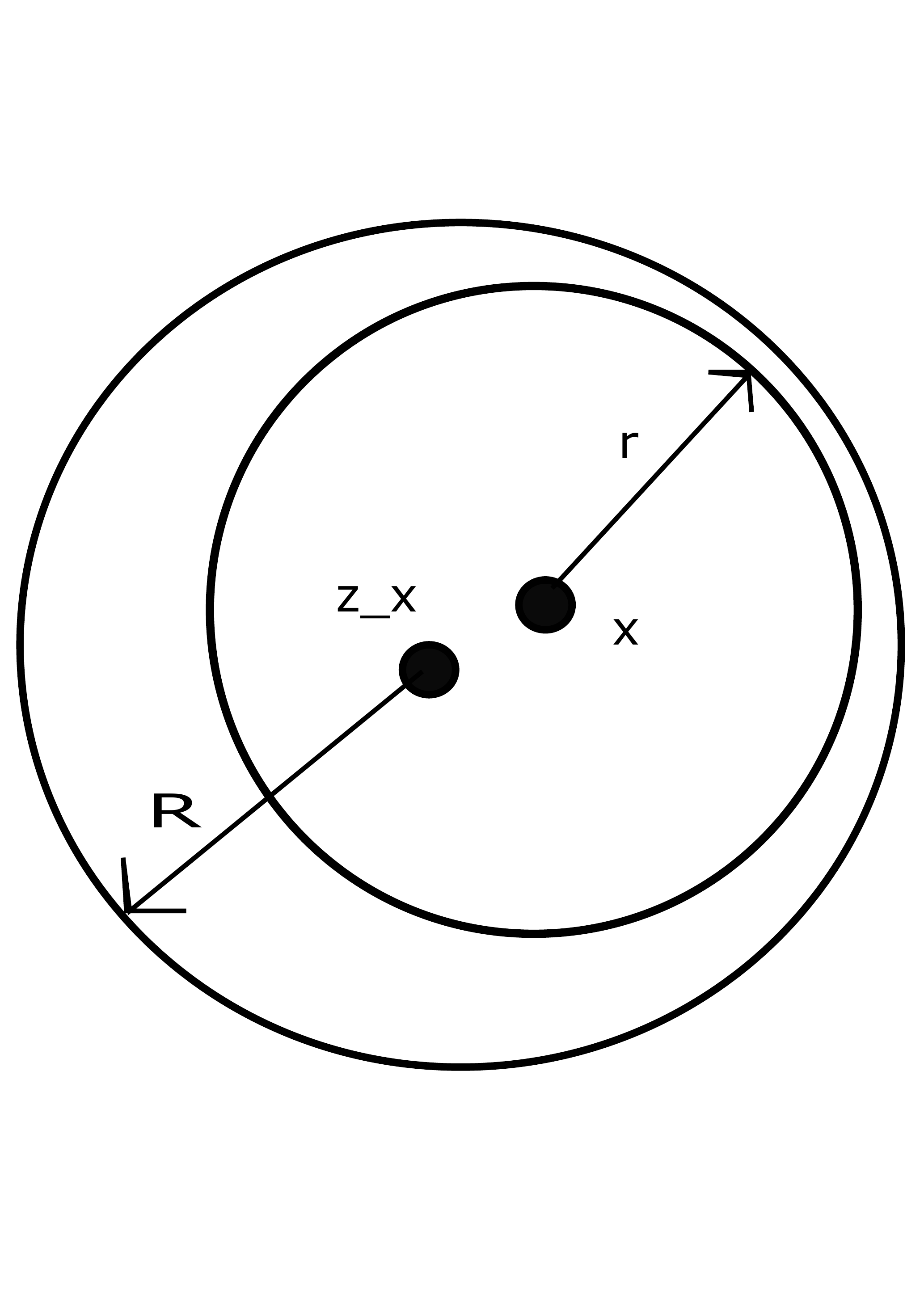}
		\caption{If $\X_{\epsilon}$ is an $\epsilon$ cover of $\X$, then for any $x \in \X$ there exists a $z_x \in \X_{\epsilon}$ within $\epsilon$ distance of $x$. A ball 
		of radius $R\geq r+\epsilon$ will contain the ball $B(x,r)$ and so twice the variation of $f$ in $B(z_x,R)$ (denoted by $w(R)$) is an upper bound on the variation of
		$f$ in $B(x,r)$  }
		\label{fig:figure1}
	\end{figure}

	Finally, we present a result about the posterior variance at a point $x$ at which we have multiple noisy observations.

    	\begin{proposition}
        \label{prop:posterior_variance}
		Suppose the unknown function $f$ is a sample from $GP(0,K)$ with $K \in \mathcal{K}$. 
        \begin{itemize}
        \item 
        If a point $x$ has been evaluated $n_t(x)$ times before time $t$ according to the observation model 
		$y(x) = f(x) + \eta$ where the noise term $\eta$ is distributed according to $N(0,\sigma^2)$. 
		 Then we have 
		\begin{equation}
		\sigma_t(x) \leq \frac{\sigma}{\sqrt{n_t(x)}},
		\end{equation}
		where $\sigma_t(x)$ is the posterior variance at the point $x$ after $t$ observations. 
        
        \item Suppose, we make the further assumption that the covariance function $K$ is isotropic, i.e., $K(x,y) = K(r)$ where $r =  l(x,y)$. Now, if $n_t(x,r)$ denotes the number of times a point from the ball $B(x,r,l)$ has been evaluated up to time $t$, then we have
        \begin{equation}
        \sigma_t(x) \leq \frac{\sigma}{\sqrt{n_t(x,r)}} + d(r) \leq \frac{\sigma}{\sqrt{n_t(x,r)}} + g(r).
        \end{equation}
        \end{itemize}
	\end{proposition}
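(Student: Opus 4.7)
The plan is to reduce both parts to two basic Gaussian facts: (i) for a zero-mean Gaussian prior with noise variance $s^2$, the posterior variance after a single observation is at most $s^2$; and (ii) in a jointly Gaussian model, conditioning on more information never increases the posterior variance of a target. Fact (ii) is the monotonicity property that $\mathrm{Var}(f(x)\mid \mathcal{F}_1)\ge \mathrm{Var}(f(x)\mid \mathcal{F}_2)$ whenever $\mathcal{F}_1\subset\mathcal{F}_2$ (in the Gaussian case the conditional variance is deterministic, so this is literally a pointwise inequality).

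For the first part, I would argue that among the $t$ observations made up to time $t$, restrict attention to the $n_t(x)$ repeated noisy observations $y_i=f(x)+\eta_i$ at the point $x$. Since $\eta_i\stackrel{\text{iid}}{\sim} N(0,\sigma^2)$, the empirical average $\bar y=f(x)+\bar\eta$ is a sufficient statistic for $f(x)$ within this subset, with effective noise $\bar\eta\sim N(0,\sigma^2/n_t(x))$. A direct one-step Gaussian update gives
\begin{equation*}
\mathrm{Var}(f(x)\mid \bar y)\;=\;\frac{K(x,x)\cdot \sigma^2/n_t(x)}{K(x,x)+\sigma^2/n_t(x)}\;\le\; \frac{\sigma^2}{n_t(x)}.
\end{equation*}
By monotonicity, $\sigma_t(x)^2\le \mathrm{Var}(f(x)\mid \bar y)\le \sigma^2/n_t(x)$, which gives the first claim.

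For the second part, let $x_1,\dots,x_m$ be the observed points inside $B(x,r,l)$, where $m=n_t(x,r)$, with observations $y_i=f(x_i)+\eta_i$. I would decompose
\begin{equation*}
f(x)\;=\;\bar f + \bigl(f(x)-\bar f\bigr),\qquad \bar f:=\tfrac{1}{m}\sum_{i=1}^{m}f(x_i),
\end{equation*}
and apply the standard triangle inequality for the standard deviation of a sum, conditionally on the full history $\mathcal{F}_t$:
\begin{equation*}
\sigma_t(x)\;\le\;\sqrt{\mathrm{Var}(\bar f\mid\mathcal{F}_t)}+\sqrt{\mathrm{Var}\bigl(f(x)-\bar f\mid\mathcal{F}_t\bigr)}.
\end{equation*}
For the first term, the argument of Part~1 applied to $\bar f$ (which satisfies $\bar y = \bar f + \bar\eta$ with $\bar\eta\sim N(0,\sigma^2/m)$) combined with monotonicity yields $\mathrm{Var}(\bar f\mid\mathcal{F}_t)\le \sigma^2/m$. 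For the second term, monotonicity in the opposite direction gives $\mathrm{Var}(f(x)-\bar f\mid\mathcal{F}_t)\le \mathrm{Var}(f(x)-\bar f)$, and another application of the triangle inequality for the standard deviation of a sum gives
\begin{equation*}
\sqrt{\mathrm{Var}(f(x)-\bar f)}\;\le\;\frac{1}{m}\sum_{i=1}^{m}\sqrt{\mathrm{Var}\bigl(f(x)-f(x_i)\bigr)}\;=\;\frac{1}{m}\sum_{i=1}^{m}d(x,x_i)\;\le\;d(r),
\end{equation*}
where $d(r):=\sup\{d(u,v):l(u,v)\le r\}$ (well-defined because $K$ is isotropic, so $d(u,v)$ depends only on $l(u,v)$). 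Assumption $A1$ then gives $d(r)\le g(r)$, which yields the second claim.

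The only nontrivial point is the bookkeeping around monotonicity of conditional variance under conditioning: I use it once in the ``$\le\sigma^2/m$'' direction (by dropping information, passing from $\mathcal{F}_t$ to the subset-average $\bar y$) and once in the ``$\le d(r)^2$'' direction (using that, for Gaussians, the conditional variance is at most the unconditional variance). Both uses are clean because of joint Gaussianity, so I expect no substantive obstacle; the main subtlety is just keeping the direction of each inequality straight and noting that isotropy is exactly what lets us replace $d(x,x_i)$ by $d(l(x,x_i))\le d(r)$ uniformly.
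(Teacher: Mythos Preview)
Your Part~1 is correct and is essentially the probabilistic rephrasing of the paper's argument: the paper writes the monotonicity as non-negativity of the conditional mutual information $I(f(x);\bar y_{x^c}\mid \bar y_x)\ge 0$, while you state it directly as ``conditioning on more never increases Gaussian variance.'' The resulting bound $\sigma_t(x)^2\le \frac{K(x,x)\,\sigma^2/n_t(x)}{K(x,x)+\sigma^2/n_t(x)}\le \sigma^2/n_t(x)$ is identical.

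Your Part~2 is correct but follows a genuinely different route. The paper works algebraically with the posterior-variance formula $\sigma_t^2(x)\le K(0)-K_x^\top(K_{xx}+\sigma^2 I)^{-1}K_x$, uses the PSD ordering $K_{xx}\preccurlyeq K(0)\mathbf{1}\mathbf{1}^\top$ together with $K_x\ge K(r)\mathbf{1}$ (entrywise), applies the Woodbury identity, and arrives at $\sigma_t^2(x)\le \sigma^2/m + 2(K(0)-K(r))=\sigma^2/m+d(r)^2$ before taking square roots. Your argument instead decomposes $f(x)=\bar f+(f(x)-\bar f)$ and controls the two pieces separately via the triangle inequality for standard deviations, using monotonicity of Gaussian conditional variance twice (once dropping down to $\sigma(\bar y)\subset\mathcal F_t$, once dropping to the trivial $\sigma$-algebra). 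This avoids all matrix algebra and is more elementary; it also makes the role of the GP metric $d$ transparent, since the second piece is bounded directly by an average of $d(x,x_i)$. One small difference: the paper's route yields the intermediate bound with $d(r)=\sqrt{2(K(0)-K(r))}$ (the value at $l$-separation exactly $r$), implicitly using that $K$ is non-increasing; your route yields $\sup\{d(u,v):l(u,v)\le r\}$, which coincides with $d(r)$ under the same monotonicity but in any case is $\le g(r)$ by Assumption~A1 and monotonicity of $g$, so the final inequality matches.
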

    This result allows us to estimate the number of  evaluations  required to bring the uncertainty about the function value at a point below a certain threshold. The first part of the above result is used in the analysis of the two 
    algorithms proposed for GP bandits (Algorithm~\ref{alg:gp_tree} and Algorithm~\ref{alg:zoom}), while the second part is used in the analysis of Algorithm~\ref{alg:contextual_tree} for Contextual GP bandits \citep{krause2011contextual}.

%________________________________________________________________________________________________________________________________________________________________________________________________________________________________________________________

\section{Conclusion}
\label{section:conclusion}

In this paper, we considered the problem of optimizing an unknown function under noisy bandit feedback, and presented an algorithm which adaptively discretizes the search space using a hierarchical tree of partitions. We then obtained high probability bounds on the cumulative and simple regret for our algorithm.
Because of adaptive refinement of the search space, our algorithms can be computationally much cheaper than the existing approaches using uniform discretizations. Furthermore, we  also identified sufficient conditions under which the regret bounds of our algorithms improve upon the existing theoretical results. 

Finally, we note that the tools described in Section~\ref{section:technical_results}, along with some stronger bounds on suprema of GPs such as those presented in \citep{contal2016stochastic,van2015chaining} may be useful for designing adaptive algorithms for some other settings, such as time varying GP bandits problem \citep{bogunovic2016time}.

\newpage

\appendix
\label{section:appendix}

\section{Details of Toy examples in Section~\ref{subsec:toy_examples}}
\label{appendix:details_toy_example}

\subsection{Example~1}
\label{appendix:toy_example1}
First we note that the covariance function of the Gaussian Process is uniformly upper bounded by $a_1^2$ which implies that the information type regret bound is valid for it \citep{srinivas2012information}. Before obtaining the lower bound on $\gamma_n$, let us select the parameters $(a_i)_{i\geq 1}$ in the following way for a fixed $\delta>0$: 
    \begin{align*}
    a_1 &= \frac{1}{(\Phi)^{-1}((1+\delta)/2)} \\
    a_i &= \frac{1}{2\sqrt{ 2\log(\frac{\pi^2i^2}{6\delta})  }}
    \end{align*} where $\Phi(.)$ is the cdf of Standard Normal random variable.

Now, using \citep[Lemma 5.3]{srinivas2012information}, we have 
\begin{align*}
\gamma_n &\geq \sum_{i=1}^n\log\big(1 + \frac{a_i^2}{\sigma^2}   \big) 
\geq n\log\big(1+\frac{a_n^2}{\sigma^2} \big) \\
& \stackrel{(a)}{\geq} n\frac{a_n^2}{a_n^2+\sigma^2} 
= \frac{n}{1 + 8\sigma^2\log\big( \frac{\pi^2n^2}{3\delta} \big)}
\end{align*}
where $(a)$ follows from the inequality $\log(1+x) \geq \frac{x}{1+x}$ for $x \geq 0$. From the above, we get the following bound:
\[
\gamma_n \geq n \max \bigg( \frac{1}{2}, \frac{1}{16\sigma^2 \log\big( \frac{\pi^2n^2}{3\delta} \big) } \bigg)
\]

This implies that for all $\sigma^2>0$, the information type regret bound for this Gaussian Process increases linearly with $n$.

Now we show that for the given  choice of parameters for this Gaussian Process, the global maximizer of the sample function $f$ can be found from just one evaluation with high probability. Let us define the following events: $E_1 =\{ |a_1X_1|\geq 1 \}$, $E_2 = \{ \forall i \geq 2: |a_iX_i| \leq 1/2 \}$ and $E_3 = \{ |\eta_1|\leq 1/2 \}$ where $\eta_1$ is the observation noise at time $t=1$. Then,   we have $Pr(E_1\cap E_2 \cap E_3) \geq 1-3\delta$, and it is easy to see that the global maximum of the function $f$ under the event $E_1\cap E_2 \cap E_3$ will lie either at $x=1/2$ or $x=5/6$. 
Since, by construction we have $f(1/2) = -f(5/6)$, a single evaluation of the function at either of these two points is sufficient to find the global maximum, and hence the regret $\mathcal{R}_n \leq \Oh(1)$.

\subsection{Example~2}
\label{appendix:toy_example2}
We observe that the covariance function of the Gaussian Process is upper bounded by $\sum_{i=1}^{\infty}a_i^2$ which for our choice of parameters $a_i$ will be finite. 
If we make the extra  assumption that the noise variance is smaller than $a_n^2$, we get that $\gamma_n \geq n\log(2)$ which implies that the information type regret bound increases linearly with $n$.

Now, for a fixed $\delta>0$,  let us define the event \[ E_4 = \{ |X_i| \leq \sqrt{2\log\big( \frac{\pi^2i^2}{3\delta} \big)}  \hspace{1em} \text{for all } i\geq 1 \}. \] By using the tail bounds for Gaussian random variables and the union bound, we get that $Pr(E_4) \geq 1-\delta$. 
We now set the parameters as follows: 
\begin{equation*}
\begin{aligned}
a_i &= \frac{1}{i^2 \sqrt{2\log\big( \frac{\pi^2i^2}{3\delta} \big)}} \hspace{1em} \text{for all } i \geq 1\\
\sigma &= a_n/\sqrt{2}
\end{aligned}
\end{equation*}
Next, suppose $(\eta_t)_{t\geq1}$ denote the  $i.i.d.$ $N(0,\sigma^2)$ noise random variables. We define the following event which also occurs with probability at least $1-\delta$: 
\[
E_5 = \{ |\eta_t| \leq 1/(\sqrt{2}t^2) \hspace{1em} \text{for all }  t \geq 1 \}
\]

Now, we need to show that there exists a strategy which will ensure with high probability that the cumulative regret is upper bounded by $\mathcal{O}(\log(n))$. Assuming that the events $E_4$ and $E_5$ hold (which happens with probability at least $1-2\delta$), we proceed as follows:

\begin{itemize}
\item We first note that we can construct a ternary tree of intervals ($\{ \mathcal{I}_{j,k}: j\geq 0, \text{ and } 1\leq k\leq 3^j \})$ which form an increasing sequence of partition of the input space $\X = [0,1]$. The root of the tree is the entire unit interval $\mathcal{I}_{0,1} = [0,1]$ while the nodes at level $1$ are obtained by partitioning $\mathcal{I}_{0,1}$ into three equal intervals $\mathcal{I}_{1,1}=[0,1/3)$, $\mathcal{I}_{1,2}=[1/3,2/3)$ and $\mathcal{I}_{1,3}=[2/3,1]$. This process is repeated indefinitely to get an infinite ternary tree. 

\item Because of the definition of the Gaussian Process, the function value in the interval $\mathcal{I}_{1,1}$ is $a_1X_1\varphi(3x) + f_2(3x)$ and in the interval $\mathcal{I}_{1,3}$ is $-a_1X_1\varphi(3x-2) + f_2(3(x-2/3))$, we note that $x^*$ must lie either in $\mathcal{I}_{1,1}$ or $\mathcal{I}_{1,3}$. To decide which one, we need to know the sign of $X_1$ for which we observe the function at the mid point of the interval $\mathcal{I}_{1,2}$. If the observed value is positive, we can conclude that $x^*$ must lie in $\mathcal{I}_{1,1}$. Otherwise, $x^*$ lies in $\mathcal{I}_{1,3}$. Thus our region of uncertainty shrinks from $\mathcal{I}_{0,1}$ to $\mathcal{I}_{1,1}$ or $\mathcal{I}_{1,3}$. 

\item For $t>1$, we proceed similarly by evaluating the function at a point $x_t$ in the middle sub-interval of the current region of uncertainty. Based on the observed value, we can infer the sign of $a_tX_t$ which allows us the pick the next subinterval. Thus at any time $t$, the suboptimality of the evaluated point is upper bounded by \[ f(x^*)-f(x_t) \leq \sum_{i\geq t} |a_iX_i| + |\eta_i| \leq  \sum_{i\geq t}\frac{2}{i^2} \leq \frac{2}{t}\]
where the second inequality follows from the definition of event $E_4$ and the choice of $(a_i)_{i\geq 1}$. 

\item Finally, summing up all such terms gives us the required bound on the cumulative regret
\[
\mathcal{R}_n \leq \sum_{t=1}^n \frac{2}{t} \leq 2\log(n)
\]

\end{itemize}

\section{Deferred proofs from Section~\ref{section:technical_results}}
\label{appendix:proof_technical_results}
\subsection{Proof of Proposition~\ref{prop1}}
		\label{proof_prop1}
		
%         \textbf{Proposition 1.}
%        	Suppose $\{f(x); x \in \X\}$ is a separable zero mean Gaussian Process $GP(0,K)$, and let $d$ denote the usual metric on $\X$ induced by the GP. Let $B(x_0,r,d) \subset \X$ be a $d$-ball. Then we have for any $u >0$:
% 		\begin{equation}
% 		Pr\big(\sup_{x \in B(x_0,r,d)}|f(x) - f(x_0)| > w_r \big) \leq e^{-u}
% 		\end{equation}
% 		with $w_r   \leq 4r\big( \sqrt{C_2 + 2u + 2D_1' \log(1/r)} + C_3 )$. Here $C_2$ and $C_3$ are positive constants and $D_1'$ is the metric dimension of $B(x_0,r,d)$ with respect to $d$. 

		\begin{proof}
			Let $T = B(x_0,r,d)$ and let us assume we have a sequence of increasingly fine discretizations $(T_n)_{n\geq 0}$ of $T$ with $T_0=\{x_0\}$, and let $\pi_n:T\rightarrow T_n$ represent the projection operator onto $T_n$, i.e., $\pi_n(x) = \argmin_{y \in T_n} d(x,y)$. Then we have the following:
			\begin{equation*}
			|f(x)-f(x_0)| = |\sum_{n\geq 1} f(\pi_n(x)) - f(\pi_{n-1}(x))| \leq \sum_{n\geq 1}|f(\pi_n(x)) - f(\pi_{n-1}(x))|
			\end{equation*}
			
			Now we use the concentration property of Gaussian Process (~\ref{eq:gaussian_tail}) and union bounds, to get:
			\begin{align*}
			Pr\big(  |f(\pi_n(x)) - f( \pi_{n-1}(x))| > \sqrt{u_n}d(\pi_n(x), \pi_{n-1}(x))     \big) & \leq 2\exp(-u_n/2) \\
			\Rightarrow Pr \big(	\exists x \in T: |f(\pi_n(x)) - f( \pi_{n-1}(x))| > \sqrt{u_n}d(\pi_n(x), \pi_{n-1}(x)) 	  	\big) &\leq 2|T_n||T_{n-1}|\exp(-u_n/2)\\
			\Rightarrow Pr \big(\exists n \in \N, 	\exists x \in T: |f(\pi_n(x)) - f( \pi_{n-1}(x))| > \sqrt{u_n}d(\pi_n(x), \pi_{n-1}(x)) 	  	\big) &\leq \sum_{n\geq 1}2|T_n||T_{n-1}|\exp(-u_n/2) \\
			& \coloneqq P_e
			\end{align*}
			
			Let us define the event $E_1 = \{ \exists n \in \N, \exists x \in T: |f(\pi_n(x)) - f(\pi_{n-1}(x))| > \sqrt{u_n}d(\pi_n(x), \pi_{n-1}(x))  \}$. Then under the event $E_1^c$, we know that for all $x$ and $n$, we have $|f(\pi_n(x)) - f(\pi_{n-1}(x))| \leq \sqrt{u_n}d(\pi_n(x),\pi_{n-1}(x))$, which means that 
			\begin{align*}
			\sup_{x \in T}|f(x)-f(x_0)| &\leq \sup_{x\in T}\sum_{n\geq 1}|f(\pi_n(x))-f(\pi_{n-1}(x))| \\
			& \leq \sup_{x\in T}\sum_{n \geq 1}\sqrt{u_n}d(\pi_n(x),\pi_{n-1}(x)) 
			\end{align*}
			Now, let us choose $T_n$ to be the $\epsilon_n = r2^{-n}$ covering of $T$ with respect to the metric $d$. Assuming that $T$ has a finite metric dimension $D_1'$, we have $|T_n| \leq C_12^{nD_1'}/r^{D_1'}$. 
			Now in order to keep $P_e$ below $e^{-u}$ for some $u>0$,  we set $u_n = 2(u + v_n)$ with $v_n$ to be defined later. This gives us 
			\[
			P_e \leq \sum_{n\geq 1}2(C_1^22^{(2n-1)D_1'}/r^{2D_1'})e^{-u_n/2} \leq \sum_{n \geq 1}2C_1^2\frac{2^{2nD_1'}}{r^{2D_1'}}e^{-u_n/2}
			\]
			Now by choosing \[ v_n =\log\bigg(2C_1^2\frac{2^{2nD_1'}}{r^{2D_1'}}\bigg) + 
			\log(n^2\pi^2/6 ) \]
			we get the required bound on $P_e$. Now it remains to get the upper bound on $w_r$ for this choice of $u_n$. 
			We use the fact that $d(\pi_n(x),\pi_{n-1}(x)) \leq d(\pi_n(x),x) + d(\pi_{n-1}(x),x) \leq 2r2^{-(n-1)}$ to get 
			\[
			w_r \leq 2r\sum_{n\geq 1}2^{-(n-1)} \sqrt{2u +2D_1'\log(1/r)+ 2\log(n^2) + 2nD_1'\log(2) + 2\log(2C_1^2\pi^2/6) }
			\]
			Finally, replacing $2\log(2C_1^2\pi^2/6)$ with $C_2$, and writing $\sum_{n\geq 1}2^{-(n-1)}\sqrt{\log n} = \alpha_1$ and 
			$\sum_{n\geq 1}2^{-(n-1)}\sqrt{n} = \alpha_2$, we get 
			
			\begin{equation}
			w_r \leq 4r\big( \sqrt{C_2 + 2u + 2D_1' \log(1/r)} + C_3 )
			\end{equation}
			where $C_3 = \alpha_1 + \alpha_2\sqrt{D_1'\log2} $. 
		\end{proof}
		
\subsection{Proof of Proposition~\ref{prop:posterior_variance}}
\begin{proof}
    Let $\bar{y}_{1:t-1}$ denote all the observations before time $t$, and $\bar{y}_x$ be the vector of observations at $x$. Also, let $\bar{y}_{x^c}$ be the vector of observations at points other than $x$. Then by the non-negativity of mutual information we have

		\begin{align*}
		&I(f(x);\bar{y}_{x^c}|\bar{y}_{x}) \geq 0 \\
		\Rightarrow& h(f(x)|\bar{y}_x) - h(f(x)|\bar{y}_{x},\bar{y}_{x^c}) \geq 0\\
		\Rightarrow& \log\bigg(\frac{1}{\sqrt{\frac{n_t(x)}{\sigma^2} + \frac{1}{K(x,x)}}}\bigg) -\log(\sigma_{t}(x) \stackrel{(a)}{\geq} 0\\
		\Rightarrow& \frac{\sigma}{\sqrt{n_t(x)}} \geq \sigma_{t}(x)	
		\end{align*}
		where $h(X)$ is the differential entropy of $X$ and $I(X;Y)$ denotes the mutual information between random variables $X$ and $Y$. For  inequality $(a)$, we used the
		formula for the differential entropy of a Gaussian random variable.
        
        For the second part, let us define $S_x = \{x_1,x_2,\ldots,x_{n_t(x,r)}\}$ as the set of points  in $B(x,r,l)$ which have been evaluated up to time $t$. Further introducing the vector $K_x = [K(x,x_1),K(x,x_2), \ldots , K(x,x_{n_t(x,r)}]^{tr}$ where $tr$ denote the transpose operation,  and the matrix $K_{xx} = [K(x_i,x_j)]_{(x_i,x_j) \in S_x\times S_x}$, we have by the formula for the posterior variance at $x$ \citep[(2.26)]{rasmussen2006gaussian}:
        \begin{align*}
        \sigma_t^2(x) &\leq K(0) - K_x^T(K_{xx} + \sigma^2I)^{-1}K_x 
        \end{align*}
        Now, based on the assumption that $K$ is isotropic, we can make the following two observations, 
        \begin{align*}
        (K_{xx}+\sigma^2I) &\preccurlyeq (K(0)\ind \ind^T + \sigma^2I) \\
        K(r)\ind \preccurlyeq K_x
        \end{align*}
        which gives us
        \begin{equation}
        -K_x^T(K_{xx} + \sigma^2I)^{-1}K_x \leq K(r)^2\ind^T(K(0)\ind \ind^T + \sigma^2I)^{-1}\ind
        \end{equation}
        Now, using the Woodbury matrix inversion identity, and some simplification, we get:
        \begin{align*}
        \sigma_t^2(x) &\leq \frac{K(0)\sigma^2 + n_t(x,r)(K(0)^2 - K(r)^2}{\sigma^2 + n_t(x,r)K(0)} \\
        \Rightarrow \sigma_t^2(x) &\leq \frac{\sigma^2}{n_t(x,r)} + 2(K(0) - K(r))\\
        \Rightarrow \sigma_t(x) &\stackrel{(a)}{\leq} \frac{\sigma}{\sqrt{n_t(x,r)}} + d(r) \\
        \Rightarrow \sigma_t(x) &\stackrel{(b)}{\leq} \frac{\sigma}{\sqrt{n_t(x,r)}} + g(r)
        \end{align*}
        where $(a)$ uses the inequality $\sqrt{z_1 + z_2} \leq \sqrt{z_1} + \sqrt{z_2}$ for $z_1,z_2\geq 0$ and $(b)$ follows from the fact that $K \in \K$. 
	\end{proof}

\section{Proof of Theorem~\ref{theorem:regret_gp_bandits}}
\label{appendix:gp_tree_regret}
For the entirety of this proof, we will assume that the events $\Omega_{u5}$ and $\Omega_{u6}$ hold, which is true with probability at least $1-2e^{-u}$. 
Let $(\tau_j)_{j\geq 1}^n$ denote the rounds in which the function evaluations were performed, and let $Q_n = \{ x_{h_{\tau_j},i_{\tau_j}}| 1\leq j\leq n \}$ denote the multiset of points evaluated by the algorithm. 

\subsection{Information-type bound on $\mathcal{R}_n$}

To obtain the information-type cumulative regret bound, we divide the set $Q_n$ into $Q_{n1}$ and $Q_{n2}$, where 
\[
 Q_{n1} = \{ x_{h,i} \in Q_n | h < h_{\max} \}
\]
and $Q_{n2} = Q_n \setminus Q_{n1}$. 

From Lemma~\ref{lemma:tree_lemma1} , we know that for all $x_{h,i} \in Q_{n2}$, we have $\Delta(x_{h,i}) \leq (2N+1)V_h$, and assuming $n$ is large enough so that $h_{\max} \geq h_0 \coloneqq \frac{\log(v_2/\delta_K)}{\log(1/\rho)}$, we can
upper bound the contribution of the terms in $Q_{n2}$ to the cumulative regret (denoted by $\mathcal{R}_{n2}$) as follows:
\begin{align*}
 \mathcal{R}_{n2} & \coloneqq \sum_{x_{h,i} \in Q_{n2}} f(x^*) - f(x_{h,i}) \\
 & \leq (2N+1)V_{h_{\max}}|Q_{n2}| \\
 & \leq (2N+1)V_{h_{\max}}n \\
 & \leq \Oh( \rho^{h_{\max}\alpha} \sqrt{ h_{\max}} n )
\end{align*}
where the last inequality relies on the assumption  that $h_{\max} \geq h_0$ and the properties of  the covariance functions in the class $\K$. 
Now, using the fact that $h_{\max} \geq \frac{(1/2)\log(n)}{\alpha \log(1/\rho)}$ we get that $\mathcal{R}_{n2} \leq \Oh( \sqrt{n \log(n)})$. 

Now, for the terms $x_{h_{\tau},i_{\tau}}$ in $Q_{n1}$ we observe from Lemma~\ref{lemma:tree_lemma1} that $f(x^*) - f(x_{h_{\tau},i_{\tau}}) \leq 3\beta_n\sigma_{\tau-1}(x_{h_{\tau},i_{\tau}})$. If $|Q_{n1}| = n_1$, then by using 
\citep[Lemma~5.3 and Lemma~5.4]{srinivas2012information}, and the assumption that $K(x,x) \leq 1$ for all $x \in \X$, we get:
\[
 \mathcal{R}_{n1} \leq \Oh( \sqrt{n_1\gamma_{n_1}\log(n_1)}) \leq \Oh( \sqrt{n\gamma_n \log(n)})
\]

On adding the two terms, we get the required information type bound $\mathcal{R}_n \leq \Oh( \sqrt{n \gamma_n \log(n)})$

\subsection{Dimension-type regret bounds}
We first obtain the dimension-type bound on the cumulative regret. 
Recall that the algorithm only selects points for evaluation from the sets of the form $\X_{(2N+1)V_h} = \{ x_{h,i} \in \X: f(x^*) - f(x_{h,i}) \leq (2N+1)V_h\}$, and furthermore, by the assumption on the metric space that 
any two points in $\X_h$ are separated by at least $2v_2\rho^h$. These two facts imply that $|\X_{(2N+1)V_h} \cap \X_h| \leq M(\X_{(2N+1)V_h}, 2v_2\rho^h, l)$. 

We first consider the contribution of the terms $x_{h,i}$ for which $h < h_0$: 
\begin{align*}
\mathcal{R}_1 &= \sum_{x_{h,i} \in Q_n: h<h_0} f(x^*) - f(x_{h,i})  \\
& \leq \sum_{h=0}^{h_0-1} | \X_{(2N+1)V_h} \cap \X_h| q_h \\
& \stackrel{(a)}{\leq} \Oh \bigg( \sum_{h=0}^{h_0-1}  \rho^{-hD_1}q_h  \bigg)\\
& \stackrel{(b)}{\leq} \Oh \bigg( \sum_{h=0}^{h_0-1} \rho^{-hD_1}\frac{\sigma^2\beta_n^2}{g(v_2\rho^{h_0})} \bigg) \\
& = \Oh(\log(n))
\end{align*}
where $(a)$ relies on the fact that $\X$ has a finite metric dimension $D_1$ and $(b)$ uses Lemma~\ref{lemma:tree_lemma1} and the fact that $g$ is a non-decreasing function.

Now, we fix an $H$ such that $h_0 \leq H \leq h_{\max}$. We then have the following: 
\begin{align*}
 \mathcal{R}_2 &= \sum_{x_{h,i} \in Q_n: h_0\leq h\leq H} f(x^*) - f(x_{h,i}) \\ 
  & \leq \sum_{h=h_0}^H |\X_{(2N+1)V_h} \cap \X_h|(2N+1)V_h q_h \\
 & \stackrel{(c)}{\leq} \Oh\bigg( \sum_{h=h_0}^H \beta_n^2 \rho^{-h(\tilde{D} + \alpha)}\sqrt{h}     \bigg) \\
 & = \Oh \bigg( \rho^{H(\alpha+\tilde{D})}\sqrt{H}\log(n) \bigg)
\end{align*}
In the inequality $(c)$ above, we use the fact that for $h\geq h_0$, we have $|\X_{(2N+1)V_h}\cap \X_h| = \Oh(\rho^{h\tilde{D}})$ by using the definition of $\tilde{D}$, $V_h$ is  $\Oh( \rho^{h\alpha}\sqrt{h})$ 
and $q_h =  \Oh( \frac{\beta_n^2}{\rho^{2h\alpha}})$ by the assumptions on the covariance function. 

Finally, the contribution of the remaining points in $Q_n$ can be trivially upper bounded as:
\begin{align*}
 \mathcal{R}_3 &= \sum_{x_{h,i} \in Q_n: h\geq H} f(x^*) - f(x_{h,i})  \leq nV_{H}\\
 & \leq \Oh( n\rho^{H\alpha}\sqrt{H})
 \end{align*}

 Now, if we select $H= \frac{ \log(n)}{\log(1/\rho)\alpha(\tilde{D}+2\alpha)} <h_{\max}$, we get 
 \[
  \mathcal{R}_n \leq \Oh \bigg( \log(n)^{3/2}n^{1-\frac{1}{\tilde{D}+2\alpha}} \bigg)
 \]
as required. 

To obtain the bound on the simple regret, we introduce the terms $Q_h = Q_n \cap \X_h$ for $h\geq0$. for any $H>0$, we have the following:
\begin{align*}
 \sum_{h=0}^H |Q_h| & = \sum_{h=0}^{h_0-1}|Q_h| +\sum_{h=h_0}^H|Q_h| \leq n \\
 & \leq \Oh(1) + \Oh\bigg( \sum_{h=h_0}^H \rho^{-h(\tilde{D}+2\alpha)}\beta_n^2   \bigg)\\
 & \leq \Oh( \rho^{-H(\tilde{D}+2\alpha)}\beta_n^2 )
\end{align*}
Now, if we find the largest $H$ (denoted by $\bar{H}$) such that the upper bound on $\sum_{h=0}^H|Q_h|$ given above is smaller than $n$, then $\bar{H}$ will be a lower bound on the maximum depth explored by the algorithm. 
From the definition of $\bar{H}$, we can show that there exists some constant $C'>0$, such that 
\[
 \bigg( \frac{C'\log(n)}{n} \bigg)^{1/(\tilde{D} + 2\alpha)} \leq \rho^{\bar{H}} \leq \frac{1}{\rho} \bigg( \frac{C'\log(n)}{n}\bigg)^{1/(\tilde{D} + 2\alpha)}
\]
Assuming that $n$ is large enough so that $\bar{H} \geq h_0$ and that $h_{\max}\geq \bar{H}$ (which is true if $h_{\max} \geq \frac{\log(n)}{2\alpha \log(1/\rho)}$), we can now upper bound the simple regret  as follows:
\begin{align*}
 \mathcal{S}_n &\leq (2N+1)V_{\bar{H}} \leq \Oh( \rho^{\bar{H}\alpha}\sqrt{\bar{H}} )\\
 & \leq \Oh \big( n^{-\frac{\alpha}{\tilde{D}+2\alpha}}(\log n)^{\frac{\alpha}{\tilde{D}+2\alpha}+\frac{1}{2}} \big)\\
 & \leq \tilde{\Oh}\big(n^{-\frac{\alpha}{\tilde{D}+2\alpha}}\big)
\end{align*}

\section{ Deferred Proofs from Section~\ref{subsec:bayesian_zooming_algo}}
\label{appendix:bayesian_zooming_algo}

\subsection{Details of the algorithm}

To complete the description of the algorithm, we need to calculate the terms $\beta_n$ and the term $(W(r_k))_{k\geq 0}$ for radius $r_k = diam(\X)2^{-k}$. 

We begin with the following simple claim which gives us the appropriate choice of $\beta_n$. 

\begin{claim}
 \label{claim:B_n_zoom}
 For the choice of $\beta_n  = \Oh(\sqrt{(D_1/\alpha+1)\log(n) + u})$, we have for any $u>0$:
 \[
  Pr( \Omega_{u3} ) \geq 1- e^{-u}
 \]
where the event $\Omega_{u3}$ is defined as
\begin{equation}
\label{eq:event_Omega_u3}
 \Omega_{u3} = \{ \forall t \leq t_n, \forall x \in A_t: |f(x)-\mu_{t-1}(x)| \leq \beta_n\sigma_{t-1}(x)  \}
\end{equation}
with  $t_n$ is the (random) number of rounds of the algorithm required for $n$ function evaluations.

\end{claim}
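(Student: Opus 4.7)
The proof will follow the same template as Claim~\ref{claim:doo_Bn}. Since the posterior $f(x) \mid y_{1:t-1}$ is Gaussian $N(\mu_{t-1}(x), \sigma_{t-1}^2(x))$, for any fixed $x$ and $t$ the standard Gaussian tail bound gives $Pr(|f(x) - \mu_{t-1}(x)| > \beta_n \sigma_{t-1}(x)) \leq 2 \exp(-\beta_n^2/2)$. The plan is to take a union bound of this inequality over all rounds $t \leq t_n$ and all active points $x \in A_t$, so the key quantitative inputs are high-probability upper bounds on $t_n$ and on $\sup_{t \leq t_n} |A_t|$, both as functions of $n$ and $r_{\min}$.

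First I would bound $|A_t|$. A new point is added to $A_t$ only when the covering invariant~(\ref{eq:zoom_invariant}) is violated, and every active point has associated radius $r(x) \geq r_{\min}$. Since the algorithm only shrinks a radius via $r(x) \leftarrow r(x)/2$ (down to $r_{\min}$), the active set at any round is contained in a collection of points whose pairwise separation is controlled by $r_{\min}$, so $|A_t| \leq N(\mathcal{X}, r_{\min}, l) \leq C r_{\min}^{-D_1}$ by Definition~\ref{def:NM}. Next, every round either performs a function evaluation (at most $n$ of these) or halves the radius of a single active point (at most $\log_2(\mathrm{diam}(\mathcal{X})/r_{\min})$ such halvings per point, ever). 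Combining, $t_n \leq n + |A_{t_n}| \log_2(\mathrm{diam}(\mathcal{X})/r_{\min}) = \mathcal{O}\bigl(n + r_{\min}^{-D_1} \log(1/r_{\min})\bigr)$.

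Putting everything together via union bound,
\begin{align*}
1 - Pr(\Omega_{u3}) \leq \sum_{t=1}^{t_n} \sum_{x \in A_t} 2 e^{-\beta_n^2/2} \leq 2 \, t_n \, |A_{t_n}| \, e^{-\beta_n^2/2},
\end{align*}
and setting the right-hand side $\leq e^{-u}$ yields $\beta_n^2 = \mathcal{O}(u + \log(t_n |A_{t_n}|))$. To reach the stated form $\beta_n = \mathcal{O}(\sqrt{(D_1/\alpha+1)\log n + u})$, one then substitutes the choice of $r_{\min}$ specified in the algorithm, which (analogously to $h_{\max}$ in~(\ref{eq:h_max})) will be chosen of order $n^{-\Theta(1/\alpha)}$; this makes $r_{\min}^{-D_1} = n^{\Theta(D_1/\alpha)}$ and $t_n = \mathrm{poly}(n)$, so that $\log(t_n |A_{t_n}|) = \mathcal{O}((1 + D_1/\alpha)\log n)$, giving the claimed bound.

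The main obstacle is justifying the bound $|A_t| \leq C r_{\min}^{-D_1}$ cleanly: it requires arguing that the covering oracle never adds a point that is already $r_{\min}$-close to a previously added point (because such a point would have been covered by the existing ball once that ball has shrunk only down to $r_{\min}$), so the active points form an $r_{\min}$-separated set and the metric-dimension bound applies. The rest of the argument is the usual Gaussian tail union bound and a routine substitution of $r_{\min}$, exactly paralleling Claim~\ref{claim:doo_Bn}.
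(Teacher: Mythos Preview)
Your proposal is correct and follows essentially the same approach as the paper: Gaussian tail bound plus union bound over $t \leq t_n$ and $x \in A_t$, with $|A_t|$ controlled by a packing argument at scale $r_{\min}$ and $r_{\min}$ taken of order $n^{-1/(2\alpha)}$. The only minor difference is that the paper writes the terser bound $t_n \leq M_n + n$ (with $M_n = M(\X, n^{-1/(2\alpha)}, l)$) rather than your more careful count $t_n \leq n + |A_{t_n}|\log_2(\mathrm{diam}(\X)/r_{\min})$; the extra $\log n$ factor in your version is absorbed when taking $\log(t_n|A_{t_n}|)$, so both lead to the same $\beta_n = \Oh(\sqrt{(D_1/\alpha+1)\log n + u})$.
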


\begin{proof}

Let $M_n = M(\X,n^{-1/(2\alpha)},l)$ be the $n^{-1/(2\alpha)}$ packing number of $\X$ with respect to the metric $l$. Then by the design of the algorithm, at any time $t$ we have $|A_t| \leq M_n$, and also $t_n \leq M_n+n$ almost surely. 
So,  we get by two union bounds:
	\begin{align*}
	Pr(\Omega_{u3}^c) &\leq \sum_{t=1}^{t_n}\sum_{x \in A_t} 2e^{-\beta_n^2/2} \leq 2n^22e^{-\beta_n^2/2} \\
    & \leq 2M_n(M_n+n)e^{-\beta_n^2/2}
	\end{align*}
    
    Now, by using the fact that $\X$ has a finite metric dimension $D_1$ we have $M_n \leq Cn^{D_1/(2\alpha)}$ for some constant $C>0$. This implies that $M_n(M_n+n) \leq C^2n^{1+D_1/\alpha}$ for $n\geq 1$.

	Thus for any $u > 0$, the choice of $\beta_n = \sqrt{2(u+ 2\log(C) + (D_1/\alpha+1)\log(n)  )}$ ensures that $Pr(\Omega_{u3}) \geq 1-e^{-u}$. 
\end{proof}

Now, we obtain the terms $W(r_k)$ which denotes a high probability upper bound on the maximum variation in the GP sample within any ball of radius $r_k$ in $\X$. 

\begin{claim}
 \label{claim:zoom_r_k_W_k}
 Consider the choice of radius values $r_k = 2^{-k}diam(\X)$ for $k\geq 0$. Then  we have for any $u >0$,  
 $Pr(\Omega_{u4}) \geq 1- e^{-u}$, where the event $\Omega_{u4}$ is defined as:
	\begin{equation}
	 \label{eq:event_Omega_u4}
	 \Omega_{u4}= \{ \forall k\geq 0, \forall x \in \X: \sup_{y \in B(x,r_k,l)}|f(x)-f(y)| \leq W(r_k) \}	
	\end{equation}

The term  $W(r_k)$ is given by :
 \[
  W(r_k) = 2w(2r_k)   \leq 8g(r_k)\big( \sqrt{C_4 + 2u + 2\log(|\X_k|) + 2D_1\log(2^k/diam(\X))} + C_3 )
 \] with $\X_k$ being the $r_k$ cover of $\X$.

\end{claim}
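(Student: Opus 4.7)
The plan is to derive this claim as an immediate specialization of Proposition~\ref{prop:chaining3} to the dyadic sequence $r_k = 2^{-k}\,\mathrm{diam}(\X)$. Concretely, I would apply Proposition~\ref{prop:chaining3} with the choice $\epsilon_k = r_k$, so that $\X_k$ may be taken to be an $r_k$-cover of $\X$, and with $R_k = r_k + \epsilon_k = 2 r_k$, which trivially satisfies the hypothesis $R_k \geq r_k + \epsilon_k$. The conclusion of Proposition~\ref{prop:chaining3} then yields $\Pr(\Omega_{u4}) \geq 1 - e^{-u}$ with $W(r_k) = \tilde{w}_k = 2 w(R_k) = 2 w(2 r_k)$, which is exactly the first equality in the claim.

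It remains to substitute the explicit formula for $w$ from Corollary~\ref{cor:chaining2} and rewrite the result in the stated form. Plugging $R_k = 2 r_k$ and the cover $\X_k$ into Corollary~\ref{cor:chaining2} gives
\begin{equation*}
w(2 r_k) \leq 4\, g(2 r_k) \Bigl( \sqrt{\, C_4 + 2 u + 2 \log ( |\X_k| / g(2 r_k)^{D_1} ) \,} + C_3 \Bigr),
\end{equation*}
and multiplying by two produces the leading factor of $8$. Splitting the logarithm as $2 \log |\X_k| - 2 D_1 \log g(2 r_k)$ and invoking assumption $A2$, namely $g(2 r_k) \leq C_K (2 r_k)^\alpha$, allows me to replace $- 2 D_1 \log g(2 r_k)$ by $2 D_1 \log(1/r_k) = 2 D_1 \log(2^k / \mathrm{diam}(\X))$, with any residual additive constants of the form $\log(2^\alpha C_K)$ absorbed into the fixed constant $C_4$. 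The prefactor $g(2 r_k)$ is in turn bounded by $g(r_k)$ up to the multiplicative constant $2^\alpha$, which is absorbed into the leading factor of $8$ (or, equivalently, into $C_3$).

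There is no substantive obstacle here: the claim is essentially a bookkeeping exercise once Proposition~\ref{prop:chaining3} and Corollary~\ref{cor:chaining2} are available. The only minor caveat is that the absorption of multiplicative constants coming from $A2$ into $C_3$ and $C_4$ is valid only for $r_k \leq \delta_K$; for the finitely many indices with $r_k > \delta_K$ the bound still holds after a one-time enlargement of these constants, since for such coarse scales $\sup_{y \in B(x, r_k, l)} |f(x) - f(y)|$ is controlled by $\sup_{\X} |f|$, which is finite almost surely by separability of the GP.
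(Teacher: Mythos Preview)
Your proposal is correct and matches the paper's own proof exactly: the paper's argument is the single sentence ``The result follows immediately by applying Proposition~\ref{prop:chaining3} with $\epsilon_k = r_k$ and $R_k = 2r_k$,'' which is precisely your first paragraph. Your additional paragraph unpacking the explicit inequality via Corollary~\ref{cor:chaining2} and assumption~$A2$ goes beyond what the paper bothers to write down, but is in the right spirit; the only minor caveat is that bounding $g(2r_k)$ by a constant times $g(r_k)$ requires a doubling-type condition (monotonicity alone gives the reverse inequality), though in the regime $r_k \leq \delta_K$ one may simply replace both by $C_K r_k^{\alpha}$ via~$A2$ and absorb constants as you indicate.
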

\begin{proof}
 The result follows immediately by applying Proposition~\ref{prop:chaining3} with $\epsilon_k = r_k$ and
 $R_k = 2r_k$.
\end{proof}

Without loss of generality, we can assume that the diameter of the search space $\X$ is 1. 
Then, in the expression for $W(r_k)$ above, we can upper bound the term $|\X_k|$ for all $k$ by $C2^{kD_1}$ due to the assumption of finite metric dimension of $\X$. Thus for all $k \geq 0$ we have $W(r_k) \leq \Oh(g(r_k)\sqrt{u + 2D_1k\log(C^2/diam(\X))})$ and in particular for $k \geq \log_2(1/\delta_0)$ we have: 
\[
W(r_k) \leq \mathcal{O}( 2^{-k\alpha}\sqrt{u + 2D_1k\log(C^2/diam(\X))})
\]

Having described the algorithm parameters, we now present an outline of the derivation of the regret bounds for the Bayesian Zooming algorithm. We characterize the properties of the points selected by the algorithm in the 
following lemma. The proof of the regret bounds can be completed in an analogous manner to the proof of Theorem~\ref{theorem:regret_gp_bandits}

\begin{lemma}
\label{lemma:bayesian_zooming_algo}
Under the events $\Omega_{u3}$ and $\Omega_{u4}$, the following statements are true: 
\begin{itemize}
 \item Any point $x$ at which the function is evaluated by the algorithm satisfies: 
 \begin{equation}
  f(x^*) - f(x) \leq 5W(r(x)) 
 \end{equation}
\item If in round $t$, the function value is evaluated at a point $x$ with $r(x) > r_{min}$, then we have
\begin{equation}
 f(x^*) -f(x) \leq 3\beta_n\sigma_{t-1}(x)
\end{equation}

\item Any two points $x_1$ and $x_2$ which have been evaluated at least $k$ times each must satisfy $l(x_1,x_2) > r_k$. 

\item A point $x$ with radius $r_k$ will be evaluated no more than $q_{r_k}$ times before its radius is shrunk, where $q_{r_k}$ is defined as:
\begin{equation}
 q_{r_k} = \frac{\sigma^2\beta_n^2}{2W(r_k)^2}
\end{equation}

\end{itemize}

\end{lemma}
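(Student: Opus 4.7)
The plan is to treat the four statements in parallel because they share the same ingredients: the two high-probability events $\Omega_{u3}$ (posterior concentration at active points, equation (\ref{eq:event_Omega_u3})) and $\Omega_{u4}$ (uniform variation bound $W(r_k)$, equation (\ref{eq:event_Omega_u4})), the optimistic selection rule (\ref{eq:zoom_selection}), the refine/evaluate dichotomy on lines~3--4 of Algorithm~\ref{alg:zoom}, the covering invariant (\ref{eq:zoom_invariant}), and the posterior-variance control of Proposition~\ref{prop:posterior_variance}. The overall idea is first to establish a chain of inequalities that bounds $f(x^*)$ by $J_t(x_t)$ at any round $t$ in which the evaluate branch is taken, and then to post-process $J_t(x_t)$ in two different ways to obtain (i) a bound in terms of $W(r(x_t))$ for part~1 and (ii) a bound in terms of $\beta_n \sigma_{t-1}(x_t)$ for part~2.

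For parts 1 and 2, I would fix a round $t$ in which the candidate $x_t$ is evaluated. By (\ref{eq:zoom_invariant}) there is some $\tilde{x}\in A_t$ with $x^*\in B(\tilde{x},r(\tilde{x}),l)$; combining $|f(x^*)-f(\tilde{x})|\le W(r(\tilde{x}))$ from $\Omega_{u4}$ with $f(\tilde{x})\le \mu_{t-1}(\tilde{x})+\beta_n\sigma_{t-1}(\tilde{x})$ from $\Omega_{u3}$ yields $f(x^*)\le J_t(\tilde{x})\le J_t(x_t)$ by the selection rule. Expanding $J_t(x_t)$ and using $\mu_{t-1}(x_t)\le f(x_t)+\beta_n\sigma_{t-1}(x_t)$ gives the core inequality
\begin{equation*}
f(x^*)-f(x_t) \;\le\; 2\beta_n\sigma_{t-1}(x_t) + W(r(x_t)).
\end{equation*}
For part~2, the evaluate branch was taken with $r(x_t)\ge r_{\min}$, so the refine condition failed, i.e.\ $W(r(x_t))\le \beta_n\sigma_{t-1}(x_t)$, producing the bound $3\beta_n\sigma_{t-1}(x_t)$. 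For part~1, I would look back at the most recent refine step $t'<t$ at which $x_t$'s radius was shrunk from $r_{k-1}=2r(x_t)$ to $r(x_t)=r_k$: at that step, $\beta_n\sigma_{t'-1}(x_t)\le W(r_{k-1})$, and since the posterior variance is non-increasing in the number of observations, $\beta_n\sigma_{t-1}(x_t)\le W(2r(x_t))$. Using the explicit form $W(r_k)=2w(2r_k)$ from Claim~\ref{claim:zoom_r_k_W_k} together with assumption $A2$, the ratio $W(2r(x_t))/W(r(x_t))$ is bounded by $2$ (up to log factors absorbed in the constant), which closes part~1 as $f(x^*)-f(x_t)\le 5W(r(x_t))$; the case where $x_t$ has never been refined (radius still $r_0$) is handled as a boundary case since then $W(r(x_t))$ already dominates the variation over the entire space.

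For part~4, Proposition~\ref{prop:posterior_variance} directly gives $\sigma_{t-1}(x)\le \sigma/\sqrt{n_{t-1}(x)}$, so once $n_{t-1}(x)\ge \sigma^2\beta_n^2/W(r_k)^2$ the refine condition $\beta_n\sigma_{t-1}(x)\le W(r(x))$ must fire on the next selection of $x$, proving $q_{r_k}$ is an upper bound on evaluations at radius $r_k$ (the factor $1/2$ in the stated $q_{r_k}$ absorbs slack from the strict inequality). For part~3, I would argue from the insertion rule on lines~10--12: a point is only added to $A_t$ when it lies in $\X\setminus\bigcup_{y\in A_{t-1}}B(y,r(y),l)$, so any newly inserted $x_2$ is strictly farther than $r(x_1)$ (as it stood at that time) from every prior $x_1\in A_{t-1}$. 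Combined with the observation that once a point has been evaluated enough times to have its radius reach level $r_k$, part~4 forces its radius to be $\le r_k$ at the instant of insertion of the other; since radii only shrink, $l(x_1,x_2)>r_k$ follows. The main obstacle is the monotonicity step in part~1 that converts the bound $W(r_{k-1})$ inherited from the prior refine into a bound in terms of $W(r_k)$, where achieving the constant $5$ requires a careful comparison of $w(4r_k)$ with $w(2r_k)$ through assumption $A2$; a secondary difficulty in part~3 is accounting for the interleaving of evaluations and refines across the two points to ensure that $l(x_1,x_2)$ is controlled by the minimum of their current radii rather than something larger.
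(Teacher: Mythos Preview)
Your treatment of parts 1, 2 and 4 is essentially identical to the paper's.  The paper also derives the core inequality $f(x^*)\le J_t(x_t^*)\le J_t(x_t)\le f(x_t)+2\beta_n\sigma_{t-1}(x_t)+W(r(x_t))$, then (for part~1) invokes the refine condition that was satisfied when the radius of $x_t$ was last halved from $2r(x_t)$ to $r(x_t)$ to get $\beta_n\sigma_{t-1}(x_t)\le W(2r(x_t))$, and closes with $2W(2r(x_t))+W(r(x_t))\le 5W(r(x_t))$.  Where you flag the ratio $W(2r)/W(r)$ as the ``main obstacle'', the paper simply asserts this last inequality without comment; your caution here is justified, but the argument is the same.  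Part~2 and part~4 in the paper are exactly your arguments (failure of the refine test gives $W(r(x_t))\le\beta_n\sigma_{t-1}(x_t)$; Proposition~\ref{prop:posterior_variance} gives the evaluation count).

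For part~3 your plan has a real gap.  You try to pass from ``evaluated at least $k$ times'' to ``radius at most $r_k$'' by invoking part~4, but part~4 only says that a point at radius $r_j$ is evaluated at most $q_{r_j}$ times before the next shrink, and $q_{r_j}$ has nothing to do with $j$ itself; there is no implication that $k$ evaluations force the radius down to $r_k$.  The paper sidesteps this entirely: its one-line proof actually establishes the statement for two active points \emph{with associated radius $r_k$} (not ``evaluated $k$ times''), using only that a new point is inserted outside all current balls, so that if $x_2$ was added while $x_1$ already carried some radius $r_j\ge r_k$, then $l(x_1,x_2)>r_j\ge r_k$.  In short, read the third bullet as a statement about radii, and prove it directly from the covering-oracle insertion rule; do not route it through part~4.
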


\begin{proof}
The results stated above follow directly from the point selection and refinement strategy used in the algorithm:
\begin{itemize} 
 \item Suppose $x_t^*$ denotes the point which contains the maximizer $x^*$ in its confidence region. Then we have the following: 
 \begin{align*}
 f(x^*) &\leq J_t(x_t^*) \leq J_t(x) \\
 & \leq f(x) + 2\beta_n \sigma_{t-1}(x) + W(r(x)) \\
 & \stackrel{(a)}{\leq} f(x) + 2W(2r(x)) + W(r(x)) \\
 & \leq f(x) + 5W(r(x))
 \end{align*}
where $(a)$ follows from the condition required for shrinking the radius associated with $x$ from $2r(x)$ to $r(x)$, assuming $r(x)<diam(\X)$. If $r(x) = diam(\X)$ then the inequality is trivially true by the definition of 
$W(r(x))$. 

\item If $r(x)$ is strictly greater than $r_{min}$ and the point $x$ is evaluated by the algorithm, then we must have $\beta_n\sigma_{t-1}(x) \geq W(r(x))$ which gives us the required inequality. 

\item Since the covering oracle only adds points from the uncovered region, the distance between two points with associated radius $r_k$ must be greater than $r_k$. 

\item Finally, the maximum number of times a point is evaluated by the algorithm before shrinking the radius is upper bounded by using the result in the first part of Proposition~\ref{prop:posterior_variance} to get the 
required expression of $q_{r_k}$. 
 \end{itemize}

\end{proof}
Having obtained the above results, we can retrace the steps in the proof of Theorem~\ref{theorem:regret_gp_bandits} to obtain similar regret bounds for the Bayesian Zooming algorithm.

\section*{Acknowledgements}
Tara Javidi would like to thank Galen Reeves for introducing her to the problem studied in this paper. Shubhanshu Shekhar would like to thank Emile Contal for helpful discussions regarding \citep{contal2016thesis}. The authors would also like to thank Jonathan Scarlett for helpful comments on an earlier version of the manuscript.

\bibliographystyle{apalike}
\bibliography{new_citations_bb}

\end{document}